\documentclass[10pt]{article}

\usepackage[preprint]{rlj}   

\usepackage{amssymb,amsfonts,amsthm}
\usepackage{algorithm,algorithmic}
\usepackage{graphicx}
\usepackage{subcaption}
\usepackage{booktabs}
\usepackage{multirow}
\usepackage{float}
\usepackage{xcolor}
\newcommand{\dchg}[1]{\textcolor{blue}{#1}}
\usepackage{tikz}
\usetikzlibrary{positioning, arrows.meta, calc, tikzmark}

\newtheorem{theorem}{Theorem}
\newtheorem{lemma}[theorem]{Lemma}
\newtheorem{proposition}[theorem]{Proposition}

\newtheorem{definition}{Definition}
\newtheorem{remark}{Remark}
\newtheorem{assumption}{Assumption}

\newcommand{\E}{\mathbb{E}}
\newcommand{\R}{\mathbb{R}}
\newcommand{\cS}{\mathcal{S}}
\newcommand{\cA}{\mathcal{A}}

\newcommand{\pib}{\pi_b}

\DeclareMathOperator*{\argmax}{arg\,max}
\DeclareMathOperator*{\argmin}{arg\,min}

\title{Convex-Concave Reinforcement Learning}

\setrunningtitle{Convex-Concave Reinforcement Learning}

\author{Shripad Deshmukh\textsuperscript{1,*} \quad
Yaswanth Chittepu\textsuperscript{1} \quad
Dhawal Gupta\textsuperscript{1, 2} \\
Philip Thomas\textsuperscript{1} \quad 
Scott Niekum\textsuperscript{1}
}

\coverPageAuthor{Shripad Deshmukh \quad Yaswanth Chittepu \quad
Dhawal Gupta \\ Philip Thomas \quad Scott Niekum}

\emails{}

\affiliations{
$^{1}$\textbf{University of Massachusetts} \qquad
$^{2}$\textbf{Institute of Foundation Models}\\[3pt]
{\normalfont\small \textsuperscript{*}Corresponding author:
\texttt{svdeshmukh@cs.umass.edu}}
}

\keywords{policy optimization, difference-of-convex programming,
trust-region methods.}

\summary{
Policy learning drives many of the most consequential and
heavily-invested applications of reinforcement learning today.  Yet
the core optimization
problem it rests on (maximizing expected return) is notoriously
non-convex, even under a direct policy parameterization, and the
field has largely responded by \emph{avoiding} it: optimizing convex
surrogate approximations of the return under trust-region constraints
(NPG, TRPO, PPO, AWR).  We show that this seemingly unstructured
problem is not actually structureless.  In log-density-ratio
coordinates $y := \log[\pi/\pi_n]$, the exact per-iteration
objective, computable via per-decision importance sampling
(PDIS), is a \emph{difference-of-convex-constrained
difference-of-convex (DC-constrained DC) program}.  This structure
lets us move beyond surrogate approximations: it recovers CPI, NPG,
TRPO, and AWR as special cases along interpretable axes, and it opens
a \emph{multi-step} axis $k$ that couples consecutive decisions.  We
solve the per-iteration program with sequential convex programming
(SCP) and give convergence guarantees under mild conditions, bridging the
difference-of-convex optimization and RL literatures.  Empirically,
multi-step Convex-Concave RL (CCRL) wins on diagnostic MDPs where
credit must propagate
across a horizon, is competitive with a tuned PPO on classic control, and
on a realistic, stochastic, mid-horizon healthcare domain converges
markedly faster than tuned PPO to the same near-optimal survival.
}

\contribution{
  We show that the exact (non-convex) objective of iterative on-policy
  improvement can be rewritten, via a logarithmic change of variables, as
  a difference-of-convex (DC) program in two log-sum-exp functions, and
  that classic trust-region methods (CPI, NPG, TRPO, AWR) are a special
  case we call the ``single-step'' case.
}{
  The reformulation targets the per-iteration improvement objective that
  on-policy methods already optimize, not the total return, which
  occupancy-measure and convex-analytic formulations restructure instead.
  PPO's clipping falls outside the parameterization, so the framework
  explains what PPO approximates without recovering it.
}

\contribution{
  We give an algorithm that solves this program by sequential convex
  programming, repeatedly solving a convex approximation while keeping
  the policy a valid probability distribution, and prove convergence for
  the true non-convex objective rather than a linear surrogate
  \citep{agarwal2021theory, xiao2022convergence}.
}{
  The two guarantees differ in strength.  The per-update result is
  unconditional and reaches a stationary point at a standard rate.  The
  policy-sequence result holds only when the update accounts for the full
  horizon, under one particular trust region, and contracts to a
  neighborhood of the optimum whose size depends on a state-coverage
  assumption and on how accurately each update is solved.
}

\contribution{
  In our experiments, CCRL's advantage over PPO grows as the reward
  depends on coordinating more consecutive decisions, leaving it competitive
  with PPO on standard control benchmarks and converging markedly faster on a
  realistic healthcare task, with an $11.3\%$ higher area under the
  training curve.
}{
  On control, accounting for more steps adds estimation variance without
  improving credit assignment; on the healthcare task both methods reach
  the same near-optimal outcome, so the gain is speed, not final
  performance.  All methods are individually tuned under matched
  interaction budgets.
}

\newlength{\panelht}

\begin{document}

\makeCover  
\maketitle

\begin{abstract}
Policy learning drives many of the most consequential and
heavily-invested applications of reinforcement learning today.  Yet
the core optimization
problem it rests on (maximizing expected return) is notoriously
non-convex, even under a direct policy parameterization, and the
field has largely responded by \emph{avoiding} it: optimizing convex
surrogate approximations of the return under trust-region constraints
(NPG, TRPO, PPO, AWR).  We show that this seemingly unstructured
problem is not actually structureless.  In log-density-ratio
coordinates $y := \log[\pi/\pi_n]$, the exact per-iteration
objective, computable via per-decision importance sampling
(PDIS), is a \emph{difference-of-convex-constrained
difference-of-convex (DC-constrained DC) program}.  This structure
lets us move beyond surrogate approximations: it recovers CPI, NPG,
TRPO, and AWR as special cases along interpretable axes, and it opens
a \emph{multi-step} axis $k$ that couples consecutive decisions.  We
solve the per-iteration program with sequential convex programming
(SCP), the standard solver for difference-of-convex problems, and
give convergence guarantees under mild conditions, bridging the
difference-of-convex optimization and RL literatures.  Empirically,
multi-step Convex-Concave RL (CCRL) wins on diagnostic MDPs where
credit must propagate
across a horizon (its advantage growing with the dependency length),
is competitive with a tuned PPO on classic control, and
on a realistic, stochastic, mid-horizon healthcare domain converges
markedly faster than tuned PPO to the same near-optimal survival, with
an $11.3\%$ higher area under the training curve. The code is publicly available at: \mbox{\url{https://github.com/UMass-SCALAR-Lab/Convex-Concave-RL}}.
\end{abstract}

\vspace{-.5em}
\section{Introduction}
\label{sec:intro}
\vspace{-.75em}
Learning a good policy is the defining problem of reinforcement
learning (RL) for researchers and, for practitioners, the lever behind
much of the current investment in AI, where pushing model capabilities
increasingly reduces to more efficient policy learning.  The workhorse
behind many of these successes, in
robotics and increasingly in large language models, is
policy-gradient optimization
\citep{schulman2015trpo, schulman2017ppo, shao2024deepseekmath}, so any
structural insight into how it works translates directly into more
efficient and reliable learning.

At its core, policy-gradient methods optimize the expected return
$J(\pi)$ of a policy $\pi$, an objective that is non-convex even under
the direct policy parameterization.  Rather than confront this
non-convexity in $J(\pi)$, prevailing methods optimize a convex
single-step surrogate, the Conservative Policy Iteration (CPI)
objective \citep{kakade2002cpi}, within a trust region around the
current policy; NPG \citep{kakade2002npg}, TRPO \citep{schulman2015trpo},
PPO \citep{schulman2017ppo}, and AWR \citep{peng2020awr} all instantiate
this template.  This surrogate is a \emph{biased} proxy for $J$, and
optimizing it typically improves the policy one timestep at a time.  On problems
where several timesteps couple to determine the outcome, this
single-step view falls short: the update can move the policy away from
the optimum until the value estimates catch up.  As a workaround, the multi-step
dependence is instead carried by the value or advantage \emph{estimates}
that feed the update, while the policy-optimization objective itself
remains a single-step proxy \citep{sutton1988td, schulman2016gae}.
In this work we confront this seemingly
unyielding non-convex optimization directly, through a structured
reformulation of the RL objective.

Our idea is to optimize the exact per-iteration improvement rather than
a surrogate of it.  The return difference $J(\pi) - J(\pi_n)$ is
estimable from on-policy trajectories by per-decision importance
sampling (PDIS) \citep{precup2000eligibility}, and is non-convex in
$\pi$ only because it contains \emph{products} of importance ratios.
Re-expressing each policy through its log-ratio to the current one,
$y := \log(\pi/\pi_n)$, converts these products into sums of
exponentials and decomposes the objective as a \emph{difference of two
convex functions}, a difference-of-convex (DC) program.  We call the
resulting framework \emph{Convex-Concave Reinforcement Learning} (CCRL).

The formulation has three consequences.  First, it makes the exact
objective directly optimizable, rather than only through a surrogate.
Second,
it unifies existing methods: at horizon $k = 1$ the program reduces to
the single-step surrogate optimized by CPI, NPG, TRPO, PPO, and AWR.
Third, it exposes the horizon $k$ as a free parameter: $k \geq 2$ couples
several consecutive decisions within a single update, so one update can
credit an action for outcomes several steps later, and $k = H$, the full
horizon, optimizes the exact, unbiased return.  We solve the program by sequential convex
programming (SCP), the standard method for DC problems, adapted so that
each step preserves a valid policy, and we establish convergence
guarantees.

What the reformulation changes is the iteration.  Each SCP round
replaces the objective by a convex approximation that is solved to
global optimality, so the objective improves monotonically by
construction rather than through a tuned step size, and the iterates
approach a stationary point at a known rate.  This is bought with
computation: a round solves a convex program where a policy-gradient
method takes one gradient step.  The problem itself remains non-convex,
so a stationary point, not a global optimum, is what the iteration
delivers.

The algorithm we deploy simplifies this procedure.  Rather than keeping
the policy valid at every round, it enforces validity only once, at the
end, and it caps the log-ratios so that the $k$-step estimates stay well
behaved.  Enforcing validity only at the end preserves the monotone
improvement within an update but forfeits the convergence guarantees
(Section~\ref{sec:practical}).  The cap is one of two knobs PPO does not
have, alongside the truncation depth $k$, which was the consequential
one in our experiments.

We evaluate CCRL against a strongly tuned PPO baseline in four settings.
On diagnostic MDPs with delayed reward (a risky corridor and
increasingly deep risky chains), CCRL's advantage grows as the delay
lengthens.  On classic-control tasks (CartPole, Acrobot, MountainCar,
and LunarLander) it is competitive with PPO: there, a larger $k$ recovers credit
that the dense per-step rewards already supply, while adding the cost
of noisier advantage estimates and higher importance-weight variance.
On a realistic, stochastic, mid-horizon sepsis task, both reach the
same near-optimal survival, but CCRL gets there markedly faster, with
an $11.3\%$ higher area under the training curve.

\noindent\textbf{Contributions.}
\textbf{(1)} We show that iterative on-policy policy improvement, though
non-convex, can be written exactly as a difference of convex
log-sum-exp functions, i.e., a difference-of-convex (DC) program, and
that methods built on the single-step CPI surrogate are a special case
we call the ``single-step'' case; the framework extends above them along
a temporal axis that couples decisions across several steps
(Sections~\ref{sec:dc} and~\ref{sec:special_cases}).
\textbf{(2)} We design a sequential convex programming (SCP) approach to
solve the reformulated problem, and prove an unconditional $O(1/R)$ rate
to a stationary point per policy update and, at $k = H$, geometric
contraction of the policy sequence to a neighborhood of the optimum
under bounded concentrability
(Sections~\ref{sec:algorithm} and~\ref{sec:convergence}).
\textbf{(3)} We find that CCRL's advantage over a strongly tuned PPO
grows as reward depends on coordinating more consecutive decisions, that
it is competitive with PPO on standard control tasks, and that it converges faster
on a stochastic healthcare task
(Sections~\ref{sec:practical} and~\ref{sec:experiments}).

Beyond the specific algorithm, this difference-of-convex view of policy
optimization offers a structural handle on an objective long treated as
intractable, and a principled route to policy-gradient methods that
exploit multi-step dependencies for more efficient training.

\vspace{-1em}
\section{Preliminaries}
\label{sec:prelim}
\vspace{-.75em}
\noindent\textbf{Markov decision process.}
For a set $\mathcal{X}$, let $\Delta(\mathcal{X})$ denote the set of
probability distributions over $\mathcal{X}$.  We work with a
finite-horizon discounted MDP \citep{puterman2014markov} of horizon
$H < \infty$, given by the tuple $(\cS, \cA, p, r, \gamma, d_0)$: a
finite state set $\cS$, a finite action set $\cA$, a discount factor
$\gamma \in [0,1)$, and an initial-state distribution
$d_0 \in \Delta(\cS)$.  Capital letters denote random variables:
$S_t \in \cS$ and $A_t \in \cA$ are the state and action at time $t$,
and $R_{t+1} \in \R$ the reward that follows.  The transition function
$p(s' \mid s, a) := \Pr(S_{t+1} = s' \mid S_t = s, A_t = a)$ gives the
probability of moving to $s'$, and the reward function
$r(s, a) := \E[R_{t+1} \mid S_t = s, A_t = a]$ the expected immediate
reward.  As is standard in the finite-horizon setting, the state
includes the time index.
A policy $\pi : \cS \to \Delta(\cA)$ acts by
$\pi(a \mid s) := \Pr(A_t = a \mid S_t = s)$ and induces a distribution
over trajectories $\tau = (S_0, A_0, \ldots, S_H)$,
$p_\pi(\tau) = d_0(S_0) \prod_{t=0}^{H-1} \pi(A_t \mid S_t)\, p(S_{t+1} \mid S_t, A_t)$.
Write $G_t := \sum_{j=t}^{H-1} \gamma^{j-t} R_{j+1}$ for the
discounted return from time $t$.  The state-value
$V^\pi(s) := \E_\pi[G_t \mid S_t = s]$, action-value
$Q^\pi(s, a) := \E_\pi[G_t \mid S_t = s, A_t = a]$, and advantage
$A^\pi(s, a) := Q^\pi(s, a) - V^\pi(s)$, and the policy performance is
$J(\pi) := \E_{S_0 \sim d_0}[V^\pi(S_0)]$.  We will
also use the discounted state-visitation \emph{distribution}
$d^\pi(s) := \eta^\pi(s)/Z_H$, where
$\eta^\pi(s) := \sum_{t=0}^{H-1} \gamma^t \Pr(S_t = s \mid \pi)$ is the
un-normalized state-visitation count (the expected discounted number
of visits to $s$ under $\pi$), and
$Z_H := (1 - \gamma^H)/(1 - \gamma)$ is its total mass.

\noindent\textbf{Iterative on-policy improvement.}
Our setting is iterative on-policy improvement: at every iteration
$n$, the algorithm samples a batch of trajectories under the current
policy $\pi_n$ and uses those trajectories to produce an improved
candidate $\pi$, which then becomes $\pi_{n+1}$.  The objective at
each iteration is to maximize $J(\pi)$, or equivalently (since
$J(\pi_n)$ is constant), the \emph{one-step improvement}
$J(\pi) - J(\pi_n)$.  Because the candidate must be scored from
$\pi_n$-data, the search is restricted to a neighborhood of
$\pi_n$, where data-based estimates of $J(\pi) - J(\pi_n)$ are reliable
only when $\pi$ is close to $\pi_n$.
The performance-difference lemma \citep{kakade2002cpi} expresses the
improvement $J(\pi) - J(\pi_n)$ as an expectation of $\pi_n$'s
advantages under the candidate policy's distribution, in two
equivalent forms:
\[
  J(\pi) - J(\pi_n)
  \;=\; \underbrace{Z_H \cdot \E_{S \sim d^\pi}\!\Bigl[\sum_a \pi(a|S)\, A^{\pi_n}(S, a)\Bigr]}_{\text{state-visitation form}}
  \;=\; \underbrace{\E_{\tau \sim p_\pi}\!\Bigl[\, \sum_{t=0}^{H-1} \gamma^t\, A^{\pi_n}(S_t, A_t) \,\Bigr]}_{\text{trajectory form}}.
\]
Both forms require expectations under the candidate policy's
distribution ($d^\pi$ or $p_\pi$), which are unavailable, whereas the
data at hand is collected under $\pi_n$. There are broadly two routes forward.

\emph{Route 1: Distribution substitution (biased).}
We can substitute $d^{\pi_n}$ for $d^\pi$ in the state-visitation
form, yielding the data-estimable objective
$L(\pi; \pi_n) := \E_{S \sim d^{\pi_n}}\!\bigl[\sum_a \pi(a|S)\, A^{\pi_n}(S, a)\bigr]$.
This is a biased proxy for $J(\pi) - J(\pi_n)$, with bias that is
small when $\pi$ is close to $\pi_n$ and grows with the divergence.
The substituted objective $L$ is precisely the CPI surrogate of
\citet{kakade2002cpi}, and the basis for the optimization target of
NPG \citep{kakade2002npg}, TRPO \citep{schulman2015trpo}, and PPO
\citep{schulman2017ppo}.

\emph{Route 2: Distribution correction.}
We can correct the mismatch exactly by reweighting trajectories with
importance-sampling ratios.  With single-step IS ratio
$\rho(s, a) := \pi(a \mid s) / \pi_n(a \mid s)$, the trajectory form
becomes an \emph{exact} expectation under $p_{\pi_n}$:
\begin{equation}
  J(\pi) - J(\pi_n)
  \;=\; \E_{\tau \sim p_{\pi_n}}\!\left[
      \sum_{t=0}^{H-1} \gamma^t
      \Bigl(\prod_{t'=0}^{t} \rho(S_{t'}, A_{t'})\Bigr)
      A^{\pi_n}(S_t, A_t)
    \right].
  \label{eq:pdis}
\end{equation}
This is the per-decision importance-sampling (PDIS) identity
\citep{precup2000eligibility}: the cumulative product is an exact
distribution correction from $p_{\pi_n}$ to $p_\pi$, recovering the
true improvement with no bias, at the cost of variance that grows exponentially
with the horizon. To control the variance, we can use two devices: (i) truncating the product to a $k$-step window (adding bias), (ii) constraining
$\pi$ to a trust region around $\pi_n$.

\noindent\textbf{$k$-step PDIS.}
Truncating the cumulative product in~\eqref{eq:pdis} to the most
recent $k$ steps gives
\begin{equation}
  J_k(\pi; \pi_n) := \E_{\tau \sim p_{\pi_n}}\!\left[
      \sum_{t=0}^{H-1} \gamma^t
      \Bigl(\prod_{t'=\max(0,\,t-k+1)}^{t} \rho(S_{t'}, A_{t'})\Bigr)
      A^{\pi_n}(S_t, A_t)
    \right].
  \label{eq:pdis_k}
\end{equation}
At $k = 1$ the product collapses to the single-step ratio and
$J_1 = Z_H \E_{d^{\pi_n}}[\cdot] = Z_H\, L$ (Route 1); at $k = H$ no
truncation occurs and $J_H = J(\pi) - J(\pi_n)$ (Route 2).  For
$1 < k < H$, $k$ controls a bias-variance tradeoff \citep{guo2017options}.

\noindent\textbf{Trust-region constrained problem.}
Since $J_k(\pi; \pi_n)$ is a reliable surrogate only when $\pi$ stays
close to $\pi_n$, we must explicitly constrain the candidate to a
neighborhood of $\pi_n$.  This is the \emph{trust-region} constraint
familiar from TRPO and PPO \citep{schulman2015trpo, schulman2017ppo},
specifically an $f$-divergence ball $D_f(\pi \,\|\, \pi_n) \le \delta$
\citep{belousov2017f}, where $\delta > 0$ is the trust-region size.
The divergence $D_f$ can be instantiated in several ways, which we
cover in Table~\ref{tab:trust_region_variants}
(Section~\ref{sec:trust_logratio}).
The iteration's problem reads
\begin{equation}
  \max_\pi \; J_k(\pi; \pi_n)
  \qquad \text{subject to} \qquad
  D_f(\pi \,\|\, \pi_n) \leq \delta.
  \label{eq:goal}
\end{equation}
\vspace{-1.5em}
\subsection{Non-convexity of general iterative on-policy optimization}
\label{sec:nonconvex}
\vspace{-.5em}
The iterative on-policy improvement problem~\eqref{eq:goal} is
generally non-convex in the policy $\pi$.  For instance, take the
\emph{direct parameterization} of $\pi$, where every state-action
pair $(s, a)$ has a separate parameter $\pi(a|s)$.  The feasible set
here is convex; the space of valid policies $\Pi$ is a product of
probability simplices, and the $f$-divergence trust region
$D_f(\pi \,\|\, \pi_n) \le \delta$ is convex in $\pi$ as long as the
generator $f$ is convex \citep{belousov2017f}.  The non-convexity
lies in the optimization objective $J_k(\pi; \pi_n)$.  At $k = 1$,
$J_1$ is linear in $\pi$ and problem~\eqref{eq:goal} is convex;
but for $k \ge 2$, the cumulative IS product
$\prod_{t'} \rho(S_{t'}, A_{t'})$ is a product of linear functions
of $\pi$ across distinct $(s, a)$ pairs along a trajectory, and this
multiplicative coupling makes $J_k$ non-convex in $\pi$.  Gradient-based optimization techniques applied to
problem~\eqref{eq:goal} cannot guarantee global optimality
of the solution obtained.

\emph{The optimization problem being non-convex does not mean it is
completely structureless, however.}  Non-convex functions can
sometimes be re-cast in forms that enable more effective
optimization than generic gradient methods.  One such structure is
the \emph{difference-of-convex} (DC) decomposition: a function is
DC if it can be written as $g - h$ with both $g$ and $h$ convex.
DC programs are well-studied in the optimization theory literature
\citep{tao1997convex}, though optimizing a difference of convex
functions is NP-hard in general.  The structure instead gives a
principled iteration: linearizing the concave part at the current
iterate leaves a \emph{convex} subproblem solvable to global
optimality in polynomial time, and the resulting sequence decreases
the objective monotonically and converges to a stationary point,
which is more than generic gradient descent guarantees and less than
global optimality. In the next section, we show that under a
suitable change of variable, the trust-region PDIS
problem~\eqref{eq:goal} admits an explicit DC decomposition.

\vspace{-1em}
\section{Iterative On-Policy Improvement as a Difference-of-Convex Program}
\label{sec:dc}
\vspace{-.5em}

The $k$-step PDIS objective $J_k(\pi; \pi_n)$ is non-convex in $\pi$
because of the multiplicative IS product across distinct $(s, a)$
pairs (Section~\ref{sec:nonconvex}).  Applying a logarithmic change of
variable---a classical geometric-programming technique
\citep{boyd2004convex} that turns products into sums inside an
exponential---yields an explicit \emph{difference-of-convex} (DC)
decomposition for every $k$ (Theorem~\ref{thm:dc}).  This DC structure
underlies our algorithm and convergence analysis
(Sections~\ref{sec:algorithm}, \ref{sec:convergence}) and recovers
CPI, TRPO, and AWR as $(k, \alpha)$ special cases
(Section~\ref{sec:special_cases}).

\vspace{-.75em}
\subsection{DC structure of the $k$-step PDIS objective}
\label{sec:dc_decomp}
Let \(y(s,a) := \log({\pi(a|s)}/{\pi_n(a|s)}) \in \R\)
denote the \emph{log-density ratio} and, along a trajectory $\tau$,
write
\begin{equation}
  Y^k_t \;:=\; \sum_{t'=\max(0,\,t-k+1)}^{t} y(S_{t'}, A_{t'})
  \label{eq:window_sum}
\end{equation}
for its sum over the $k$-step window ending at $t$, and abbreviate
$A^{\pi_n}_t := A^{\pi_n}(S_t, A_t)$.  Then
$\rho(s,a) = e^{y(s,a)}$ and
$\pi(a|s) = \pi_n(a|s)\, e^{y(s,a)}$, so the IS product over that
window is $\prod_{t'} \rho(S_{t'}, A_{t'}) = e^{Y^k_t}$, a convex
function of $y$, and the PDIS objective in $y$-coordinates reads
$J_k(y) = \E_{\tau \sim p_{\pi_n}}\bigl[\sum_{t=0}^{H-1}
\gamma^t e^{Y^k_t} A^{\pi_n}_t\bigr]$,
a weighted sum of convex exponentials with weights
$\gamma^t A^{\pi_n}_t$ of either sign.  Writing $(x)_+ := \max(x,0)$
and $(x)_- := \max(-x,0)$ for the positive and negative parts,
gathering the terms by the sign of the advantage yields a
difference-of-convex decomposition.

\begin{theorem}[DC structure of the $k$-step PDIS objective]
  \label{thm:dc}
  For any $k \geq 1$, the $k$-step PDIS objective decomposes as
  $J_k(y) = J_k^+(y) - J_k^-(y)$, where $J_k^+$ and $J_k^-$ collect the
  terms with positive and with negative advantage:
  \begin{alignat}{2}
    J_k^{+}(y) &:= \E_{\tau \sim p_{\pi_n}}\Bigl[
      \sum_{t=0}^{H-1} \gamma^t e^{Y^k_t} (A^{\pi_n}_t)_+\Bigr],
    &\qquad
    J_k^{-}(y) &:= \E_{\tau \sim p_{\pi_n}}\Bigl[
      \sum_{t=0}^{H-1} \gamma^t e^{Y^k_t} (A^{\pi_n}_t)_-\Bigr].
    \label{eq:jpm}
  \end{alignat}
  Both are non-negative-weighted sums of convex exponentials, hence
  convex (proof in Appendix~\ref{app:proof_thm_dc}).
\end{theorem}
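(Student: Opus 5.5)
The plan has three parts: verify the decomposition identity pointwise on each trajectory, show that each summand is a convex function of $y$, and then check that convexity survives the sums and the expectation. Throughout, I treat $y$ as a vector in $\R^{|\cS||\cA|}$, one coordinate per state-action pair. The advantages $A^{\pi_n}_t$ and the sampling distribution $p_{\pi_n}$ are fixed by the current policy $\pi_n$ and do not depend on $y$.

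\emph{Step 1 (identity).} For every real $x$ we have $x = (x)_+ - (x)_-$. Apply this to each $A^{\pi_n}_t$ in the $y$-coordinate form
\[
J_k(y) = \E_{\tau \sim p_{\pi_n}}\Bigl[\sum_{t=0}^{H-1} \gamma^t e^{Y^k_t} A^{\pi_n}_t\Bigr]
\]
derived just before Theorem~\ref{thm:dc}. Splitting each term by the sign of its advantage gives $J_k = J_k^+ - J_k^-$ by linearity of the finite sum and of the expectation. The expectation can be split because $\cS$ and $\cA$ are finite and $H<\infty$. Hence $p_{\pi_n}$ is supported on finitely many trajectories, and each term is finite for every $y \in \R^{|\cS||\cA|}$, so no integrability issue arises.

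\emph{Step 2 (convexity of each summand).} Fix a trajectory $\tau$ and a time $t$. By~\eqref{eq:window_sum}, $Y^k_t = \sum_{t'} y(S_{t'},A_{t'})$ is a linear function of the vector $y$. The same pair $(s,a)$ may appear more than once in the window, but then its coordinate simply enters with an integer coefficient, and the map stays linear. The exponential is convex, and composing a convex function with an affine map preserves convexity. Therefore $y \mapsto e^{Y^k_t}$ is convex.

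\emph{Step 3 (closure).} The weights $\gamma^t (A^{\pi_n}_t)_\pm$ are non-negative constants. Within a fixed trajectory, the sum over $t$ is therefore a non-negative combination of convex functions, hence convex. The expectation over $\tau$ is a finite sum weighted by the probabilities $p_{\pi_n}(\tau)\ge 0$, so it is again a non-negative combination and convex. This shows that both $J_k^+$ and $J_k^-$ are convex, and clearly both are non-negative.

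There is no genuine obstacle here. The only points that need care are confirming that $Y^k_t$ is affine in $y$ even when state-action pairs repeat within the window, and justifying the interchange of the expectation with the finite sum, which finiteness of $\cS$, $\cA$ and $H$ settles. I would also remark that the decomposition holds for every $k$, with $k=H$ included, since the convexity argument never used truncation.
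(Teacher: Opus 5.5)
Your proposal is correct and follows the paper's proof essentially step for step. Both arguments use that $e^{Y^k_t}$ is the exponential of an affine map of $y$, that the weights $\gamma^t(A^{\pi_n}_t)_\pm$ are non-negative and independent of $y$, that the expectation is taken under the fixed measure $p_{\pi_n}$, and that $(x)_+-(x)_-=x$ holds pointwise. Your added remarks on repeated state-action pairs within a window and on the finite trajectory space are valid refinements that the paper leaves implicit.
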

\vspace{-1em}
\subsection{The space of valid log-density ratios $y(s,a)$}
\label{sec:validity}

Any policy $\pi$ defined via $\pi(a|s) = \pi_n(a|s)\, e^{y(s,a)}$
must satisfy the per-state probability simplex constraint.  That
is, for each state $s \in \cS$, $\pi(a|s) \ge 0$ for all
$a \in \cA$ and $\sum_a \pi(a|s) = 1$.  Of these, the
non-negativity is implicitly handled due to $e^{y(s,a)} > 0$.  For
the normalization, define the \emph{per-state normalizer}
\begin{equation}
  f_s(y) \;:=\; \sum_{a \in \cA} \pi_n(a|s)\, e^{y(s,a)},
  \qquad s \in \cS,
  \label{eq:fs}
\end{equation}
so that $\sum_a \pi(a|s) = f_s(y)$, and the normalization constraint
is exactly $f_s(y) = 1$ at every state.  The set of valid
log-density ratios is therefore the \emph{validity manifold}
\begin{equation}
  \mathcal{M} \;:=\; \bigl\{\, y \in \R^{|\cS||\cA|} \;:\;
    f_s(y) = 1, \;\; \forall\, s \in \cS \,\bigr\}.
  \label{eq:manifold}
\end{equation}
Each $f_s$ is convex in $y$ (a non-negative weighted sum of
exponentials of affine maps), so the equality $f_s(y) = 1$ is a DC
constraint: the inequality $f_s(y) \le 1$ is convex, while
$f_s(y) \ge 1$ is reverse-convex.  The set $\mathcal{M}$ is a smooth
embedded manifold of dimension $|\cS|(|\cA| - 1)$: the constraint
gradients $\nabla f_s(y)$ have disjoint per-state supports and are
never zero (each has strictly positive entries within its state
block), so they are linearly independent at every $y$, and the
implicit function theorem applies.

\vspace{-.5em}
\subsection{Representing $f$-divergence trust region constraints in terms of log-density ratios}
\label{sec:trust_logratio}
\vspace{-.5em}
Since $y$ is the optimization variable, the trust region must be
expressed in $y$ as well, and a general $f$-divergence does not survive
that substitution in a usable form.  We therefore work with
\emph{Amari's $\alpha$-divergence} family
\citep{amari1985differential}: for distributions $p$ and $q$ on a
common domain and a parameter $\alpha \in \R \setminus \{0, 1\}$,
\begin{equation}
  D_\alpha(p \| q) := \frac{1}{\alpha(\alpha-1)} \Bigl(\E_{X \sim q}\bigl[(p(X)/q(X))^\alpha\bigr] - 1\Bigr).
  \label{eq:alpha_div}
\end{equation}
The ratio enters as a power $(\pi/\pi_n)^\alpha$, so the same
logarithmic change of variable turns it into $e^{\alpha\, y}$
(Section~\ref{sec:dc_decomp}), keeping the trust region in the
exponential-of-affine form the DC analysis needs.  While it does not represent every possible divergence, specific choices
of $\alpha$ recover the ones commonly used in RL, which we list in Table~\ref{tab:alpha_div} in
Appendix~\ref{app:fdiv_convex}.

Beyond the divergence, three trust-region \emph{forms} are
common---\emph{trajectory-level}, \emph{strict per-state}, and
\emph{expected per-state}; under the log-density transform
$\pi(a|s) = \pi_n(a|s)\, e^{y(s,a)}$ each becomes an
exponential-of-affine ($e^{\alpha y}$) constraint.  Their
$f$-divergence and $\alpha$-divergence forms are collected in
Appendix~\ref{app:fdiv_convex} (Table~\ref{tab:trust_region_variants}).

\begin{proposition}[Trust-region convexity, finite $\alpha$]
  \label{prop:trust}
  For every finite $\alpha \in \R \setminus [0,1]$, all three
  $\alpha$-divergence forms in
  Table~\ref{tab:trust_region_variants} are convex in $y$ on all of
  $\R^{|\cS||\cA|}$.  In the limit $\alpha \to 0^-$ (reverse KL,
  $\mathrm{KL}(\pi_n \| \pi)$), the per-state divergence converges to
  $-\sum_a \pi_n(a|s)\, y(s,a)$, which is linear---hence convex---in
  $y$, everywhere.  In the limit $\alpha \to 1^+$ (forward KL,
  $\mathrm{KL}(\pi \| \pi_n)$), the limit exists only on the validity
  manifold $\mathcal{M}$, where it equals
  $\sum_a \pi_n(a|s)\, e^{y(s,a)}\, y(s,a)$.
\end{proposition}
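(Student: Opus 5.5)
We first write each of the three forms of Table~\ref{tab:trust_region_variants} as a non-negative combination of the basic per-state quantity
\[
  D_\alpha^{(s)}(y) := \frac{1}{\alpha(\alpha-1)}\Bigl(\sum_a \pi_n(a|s)\, e^{\alpha y(s,a)} - 1\Bigr).
\]
The \emph{strict per-state} form is the family of constraints $D_\alpha^{(s)}(y)\le\delta$, one per state. The \emph{expected per-state} form is $\sum_s d^{\pi_n}(s)\, D_\alpha^{(s)}(y)$, with weights fixed by the data-collecting policy $\pi_n$ and therefore independent of $y$. The \emph{trajectory-level} form replaces the single ratio by the product of ratios along $\tau$, which in $y$-coordinates is $\E_{\tau\sim p_{\pi_n}}\bigl[e^{\alpha\sum_t y(S_t,A_t)}\bigr]$ up to the same affine normalization.

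\textbf{Finite $\alpha\notin[0,1]$.} For such $\alpha$ we have $\alpha(\alpha-1)>0$. Each map $y\mapsto e^{\alpha\,\ell(y)}$ with $\ell$ linear (a single coordinate, or a sum of coordinates along a trajectory) is convex on all of $\R^{|\cS||\cA|}$, being an exponential of an affine function. The same argument as in Theorem~\ref{thm:dc} then applies: non-negative combinations of these maps, multiplied by the positive constant $1/(\alpha(\alpha-1))$ and shifted by a constant, remain convex. No validity constraint is used, so convexity holds on the whole space. The one point to check is that the expected and trajectory weights come from $\pi_n$, not from $\pi$; this is exactly what the importance-sampling rewriting in the table guarantees.

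\textbf{Limit $\alpha\to 0^-$.} Expand $e^{\alpha y}=1+\alpha y+O(\alpha^2)$, with the remainder locally uniform in $y$. Then
\[
  \sum_a \pi_n e^{\alpha y}-1=\alpha\sum_a\pi_n y+O(\alpha^2).
\]
Dividing by $\alpha(\alpha-1)\to 0\cdot(-1)$ gives the limit $-\sum_a\pi_n(a|s)y(s,a)$. This holds for every $y$, since $\sum_a\pi_n=1$ makes the constant terms cancel identically. The limit is linear, hence convex.

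\textbf{Limit $\alpha\to1^+$.} This is the delicate case. Here the denominator $\alpha(\alpha-1)\to0$, while the numerator tends to $f_s(y)-1$. So the limit is finite only if $f_s(y)=1$, that is, on $\mathcal{M}$; off $\mathcal{M}$ the quantity diverges to $\pm\infty$. On $\mathcal{M}$, apply L'Hôpital in $\alpha$, or expand $e^{\alpha y}=e^{y}\bigl(1+(\alpha-1)y+O((\alpha-1)^2)\bigr)$. This gives
\[
  \sum_a\pi_n e^{\alpha y}-1=(f_s(y)-1)+(\alpha-1)\sum_a\pi_n e^{y}y+O((\alpha-1)^2),
\]
and dividing by $\alpha(\alpha-1)\to(\alpha-1)$ yields $\sum_a\pi_n e^{y}y$. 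This is $\mathrm{KL}(\pi\|\pi_n)$, as expected.

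The main obstacle is to state this last case precisely: the limit exists only on $\mathcal{M}$. For the extension off $\mathcal{M}$ we would note, without needing it, that $z\mapsto e^{z}z$ is convex only for $z\ge-2$. This is presumably why the proposition claims no convexity on all of $\R^{|\cS||\cA|}$ for the forward-KL limit.
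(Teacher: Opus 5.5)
Your proposal is correct and follows the paper's proof essentially step for step. For finite $\alpha$ you use, as the paper does, convexity of exponentials of affine maps together with the positive prefactor $1/[\alpha(\alpha-1)]$ and expectations under fixed $\pi_n$-measures. For the two KL endpoints your first-order expansion in $\alpha$ of $F(\alpha)=\sum_a \pi_n(a|s)e^{\alpha y(s,a)}$ is equivalent to the paper's $-F'(0)$ computation and its L'H\^opital step giving $F'(1)$. Your explicit observation that the quotient diverges to $\pm\infty$ when $f_s(y)\neq 1$ also makes the ``only on $\mathcal{M}$'' claim more complete than the paper, which computes the limit on $\mathcal{M}$ without spelling out the off-manifold divergence.
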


The proof is in Appendix~\ref{app:fdiv_convex}.  In practice we use a
finite $\alpha \notin [0,1]$ ($\chi^2$, $\alpha = 2$), globally convex
in $y$.
\vspace{-.5em}
\subsection{Full iterative on-policy improvement problem in log-density ratios}
\label{sec:full_dc_program}
\vspace{-.5em}
We now combine the DC objective (Theorem~\ref{thm:dc}) defined over
the validity manifold under a trust-region constraint
form from Table~\ref{tab:trust_region_variants} (with finite
$\alpha \notin [0,1]$) into a single optimization problem at each
policy iteration:
\begin{equation}
  {\setlength{\fboxrule}{1pt}\setlength{\fboxsep}{8pt}%
  \fcolorbox{blue!60!black}{blue!8}{$\displaystyle
  \begin{aligned}
    \max_{y \in \R^{|\cS||\cA|}} \quad
      & J_k^+(y) - J_k^-(y)
        & & \text{(DC objective)} \\
    \text{s.t.} \quad
      & D_\alpha(\pi \,\|\, \pi_n) \leq \delta
        & & \text{(trust region, convex)} \\
      & f_s(y) = 1, \quad \forall\, s \in \cS
        & & \text{(validity, DC)}
  \end{aligned}
  $}}
  \label{eq:ccrl_full}
\end{equation}
We call this the \emph{Convex-Concave Reinforcement Learning (CCRL)}
formulation: a DC objective under a DC (validity) equality and a convex
(trust-region) inequality.
\vspace{-.5em}
\subsection{Relationship between CCRL and the prevalent trust-region based on-policy algorithms}
\label{sec:special_cases}
\vspace{-.5em}
CPI \citep{kakade2002cpi}, NPG \citep{kakade2002npg}, TRPO \citep{schulman2015trpo}, and AWR
\citep{peng2020awr} are popular methods for iterative
on-policy policy optimization.

\begin{proposition}[CPI/NPG/TRPO/AWR objective recovery]
  \label{prop:cpi}
  At $k = 1$, the CCRL objective reduces to the CPI surrogate:
  $J_1(y) = Z_H\, L(\pi; \pi_n)$, where $L(\pi; \pi_n)$ approximates
  the candidate policy $\pi$'s performance from data sampled under
  the current iterate $\pi_n$.  Since $Z_H > 0$ is a constant in
  $y$, the two objectives have the same maximizers under any common
  constraint set.  The same surrogate is optimized by NPG, TRPO,
  and AWR (proof in Appendix~\ref{app:proof_prop_cpi}).
\end{proposition}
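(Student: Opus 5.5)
The plan is to specialize the $k$-step PDIS objective~\eqref{eq:pdis_k} to $k=1$, rewrite the trajectory expectation as a state-visitation expectation, and then read off the NPG/TRPO/AWR objectives as the same quantity.

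\textbf{Step 1: collapse the window.} At $k=1$ the window in~\eqref{eq:window_sum} is $\{t\}$, so $Y^1_t = y(S_t,A_t)$. Hence
\[
J_1(y) = \E_{\tau\sim p_{\pi_n}}\Bigl[\sum_{t=0}^{H-1}\gamma^t \rho(S_t,A_t)\,A^{\pi_n}_t\Bigr],
\qquad \rho(s,a)=\frac{\pi(a|s)}{\pi_n(a|s)}.
\]

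\textbf{Step 2: pass to state visitation.} Use linearity of expectation and the tower property at each $t$. Condition on $S_t$ and average over $A_t\sim\pi_n(\cdot|S_t)$:
\[
\E\bigl[\rho(S_t,A_t)A^{\pi_n}(S_t,A_t)\mid S_t\bigr]=\sum_a \pi(a|S_t)\,A^{\pi_n}(S_t,a).
\]
This importance-sampling cancellation is valid wherever $\pi_n(a|s)>0$. That holds automatically here, since $y$ is finite and so $\pi$ has the same support as $\pi_n$.

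Write $g(s):=\sum_a\pi(a|s)A^{\pi_n}(s,a)$. Summing over $t$ gives
\[
\sum_t\gamma^t\E_{\pi_n}[g(S_t)]=\sum_s\eta^{\pi_n}(s)\,g(s)=Z_H\,\E_{S\sim d^{\pi_n}}[g(S)]=Z_H\,L(\pi;\pi_n).
\]
The middle equality uses the definition $d^{\pi_n}=\eta^{\pi_n}/Z_H$ from the preliminaries.

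\textbf{Step 3: same maximizers.} Since $Z_H=(1-\gamma^H)/(1-\gamma)>0$ does not depend on $y$, multiplying by it preserves the argmax over any constraint set. This includes the trust-region and validity constraints in~\eqref{eq:ccrl_full}.

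\textbf{Step 4: the other methods.} The main obstacle is the claim that NPG, TRPO and AWR optimize the same surrogate, because it rests on each method's definitional conventions rather than on a calculation. I would handle the three cases as follows.
\begin{itemize}
\item \emph{TRPO.} Its objective is literally $\E_{s\sim d^{\pi_n},a\sim\pi_n}[\rho A^{\pi_n}]$, which equals $L$ by Step 2.
\item \emph{NPG.} It is the step obtained from a linear approximation of $L$ around $\pi_n$ under a quadratic (Fisher/KL) trust region. Its objective is therefore $L$, with only the constraint approximated.
\item \emph{AWR.} Its target arises from maximizing $L$ under a KL constraint. The solution is $\pi\propto\pi_n\exp(A/\beta)$, which is then projected.
\end{itemize}
In each case the objective being maximized is $L$, and the methods differ only in the constraint or in the solution procedure. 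So the proposition reduces to matching objectives, which Steps 1--3 establish.
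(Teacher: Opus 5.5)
Your proposal is correct and takes essentially the same route as the paper's proof. It collapses the window to the single-step ratio, averages over $A_t\sim\pi_n(\cdot\mid S_t)$, aggregates the time marginals into $\eta^{\pi_n}=Z_H\,d^{\pi_n}$, and then identifies TRPO, NPG, and AWR at the level of the objective. The paper additionally matches TRPO's constraint to the reverse-KL $\alpha\to0^-$ endpoint and defers AWR's closed form to a separate policy-space Lagrangian argument. One small point: NPG linearizes the objective $L$ in parameter space as well, so ``only the constraint approximated'' is slightly loose. The objective-level identification still stands, and it is the same one the paper makes.
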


Further, the main distinguishing factor among these approaches comes
from the trust-region constraints employed.  Both TRPO and AWR use
the expected per-state form, differing in the \emph{direction} of
the KL divergence.  TRPO constrains
$\E_{S \sim d^{\pi_n}}[\mathrm{KL}(\pi_n(\cdot|S) \,\|\, \pi(\cdot|S))]$,
the reverse KL with respect to the candidate.  NPG optimizes this same
objective under a reverse-KL proximity as well, but realizes it in
parameter space through a Fisher-matrix (quadratic-KL) step rather than
an exact constraint; at the level our formulation captures---the
objective and the divergence direction---it thus coincides with the TRPO
program.  AWR constrains
$\E_{S \sim d^{\pi_n}}[\mathrm{KL}(\pi(\cdot|S) \,\|\, \pi_n(\cdot|S))]$,
the forward KL.  This is the one endpoint of the $\alpha$-family whose
log-ratio form is confined to the validity manifold
(Proposition~\ref{prop:trust}), which is consistent with AWR dropping
the validity constraint and normalizing post-hoc; we therefore handle
this correspondence in policy space.  Detailed derivations are provided
in Appendices~\ref{app:proof_prop_cpi} and~\ref{app:proof_prop_awr}.

Thus, the CCRL framework enables unified treatment of these existing
approaches. More importantly, it extends beyond them along two axes:
increasing $k$ allows us to incorporate longer temporal dependence
between decisions into the policy optimization and varying $\alpha$
enables a choice among different divergences.

\vspace{-1em}
\section{Policy optimization using Sequential Convex Programming (SCP)}
\label{sec:algorithm}
\vspace{-.5em}

\vspace{-.5em}
\subsection{Solving the DC-constrained DC program}
\label{sec:scp}
\vspace{-.5em}
Given the data collected under $\pi_n$, the PDIS estimators of
$J_k^+$ and $J_k^-$ turn problem~\eqref{eq:ccrl_full} into a
concrete DC-constrained DC program in $y$; we work with the
minimization objective $\Phi := J_k^- - J_k^+$ and the
\emph{validity manifold}
$\mathcal{M} := \{ y : f_s(y) = 1,\ \forall s \in \cS \}$.  Its natural
solver is \emph{sequential convex programming} (SCP)
\citep{tao1997convex, lipp2016variations}: repeatedly form a convex
approximation around the current anchor $y^r \in \mathcal{M}$, solve
it, and use the solution as the next anchor.  The textbook version
breaks on our program; we show why, then give the modification that
fixes it. \\
\noindent\textbf{Textbook SCP degenerates.}
The standard SCP recipe treats the validity equality
$f_s(y) = 1$ as an inequality pair: keep the convex side
$f_s(y) \leq 1$ as-is, and linearize the reverse-convex side
$f_s(y) \geq 1$ at the anchor to
$\nabla f_s(y^r)^\top (y - y^r) \geq 0$.  On our program this
freezes the iteration.
\begin{proposition}[Degeneracy of the textbook split]
  \label{prop:naive_singleton}
  If $f_s(y^r) = 1$ for all $s$, the linearized feasible set collapses
  to the anchor,
  $\bigl\{ y : f_s(y) \leq 1,\
    \nabla f_s(y^r)^\top (y - y^r) \geq 0\ \forall s \bigr\}
    = \{y^r\}$,
  so the subproblem cannot move
  (proof in Appendix~\ref{app:linearize_equality}).
\end{proposition}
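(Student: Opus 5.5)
The argument is a per-state tangent-plane one: a convex function lies above its tangent plane, and since $f_s$ is in fact strictly convex along every nonzero direction within its state block, equality with the tangent plane forces the block to stay put. The anchor $y^r$ is trivially in the set, since $f_s(y^r)=1\le 1$ and the linearized inequality holds with equality at $y=y^r$. So the work is the reverse inclusion. Fix any $y$ in the set and any state $s$. Because $f_s$ depends only on the block $y(s,\cdot)$, the argument can be carried out block by block.

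\textbf{Step 1 (sandwich).} For each $s$, convexity of $f_s$ gives
\[
f_s(y) \;\ge\; f_s(y^r) + \nabla f_s(y^r)^\top (y-y^r) \;\ge\; 1 + 0 \;=\; 1,
\]
using $f_s(y^r)=1$ and the linearized constraint. Combined with $f_s(y)\le 1$, all of these inequalities are equalities. In particular,
\[
f_s(y)=1 \quad\text{and}\quad f_s(y) = f_s(y^r)+\nabla f_s(y^r)^\top(y-y^r),
\]
so the first-order Taylor gap of $f_s$ between $y^r$ and $y$ is zero.

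\textbf{Step 2 (strict convexity kills the gap).} Write $f_s(y)=\sum_a \pi_n(a|s)e^{y(s,a)}$. Then
\[
f_s(y) - f_s(y^r) - \nabla f_s(y^r)^\top(y-y^r) \;=\; \sum_a \pi_n(a|s)\,e^{y^r(s,a)}\,\phi\bigl(y(s,a)-y^r(s,a)\bigr),
\qquad \phi(u):=e^u-1-u .
\]
Here $\phi\ge 0$ with equality only at $u=0$. Each weight $\pi_n(a|s)e^{y^r(s,a)}$ is strictly positive, assuming $\pi_n(a|s)>0$ as is implicit in defining $y=\log(\pi/\pi_n)$. Hence the zero gap forces $y(s,a)=y^r(s,a)$ for every $a$.

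\textbf{Step 3 (conclude).} Repeating Step 2 over all $s$ gives $y=y^r$, which establishes the equality of sets. It follows that the SCP subproblem has the singleton feasible set $\{y^r\}$, so the iterate cannot move regardless of the objective or trust region.

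The only potential obstacle is the positivity of $\pi_n(a|s)$, which is what makes the strict convexity of $f_s$ hold on the whole block. If some action had $\pi_n(a|s)=0$, its coordinate would not appear in $f_s$ and would be unconstrained. I would handle this by noting that $\mathcal{M}$ is parameterized only over the support of $\pi_n$ (equivalently, assuming full support), as the manifold argument in Section~\ref{sec:validity} already does.
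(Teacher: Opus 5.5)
Your proof is correct and follows essentially the same per-state sandwich argument as the paper. The paper sets $w_a = \pi_n(a|s)e^{y^r(s,a)}$, uses anchor validity to make $w$ a probability vector, and applies Jensen's inequality $\sum_a w_a e^{z_a} \ge \exp(\sum_a w_a z_a) \ge 1$; its equality case forces all $z_a$ equal, and $\sum_a w_a z_a = 0$ then gives $z = 0$. You instead use the tangent-plane inequality and show the gap $\sum_a w_a \phi(z_a)$ vanishes, which reaches $z = 0$ in one step, and your full-support caveat matches the paper's implicit assumption $w_a > 0$.
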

\noindent\textbf{Our fix: tangent-plane linearization with a softmax
retraction.}
We modify SCP in two ways.  \emph{First,} linearize validity as an
\emph{equality}---replace the inequality pair with the single
tangent plane
$T_r := \{y : \nabla f_s(y^r)^\top (y - y^r) = 0,\ \forall s\}$---so
the iterate moves freely within $T_r$.  \emph{Second,} since the
tangent-plane solution generally leaves $\mathcal{M}$, restore
validity exactly with a per-state renormalization, which is just
the softmax already used in practice.  Writing the linearized
objective as $\tilde\Phi_r(y) := J_k^-(y) - \nabla J_k^+(y^r)^\top y$, a round
from anchor $y^r$ is then
\begin{align}
  \text{(Step 1)}\quad
  \hat y^{r+1}
  &\;=\; \argmin_{y \in \mathcal{C}_r}\;
        \Bigl( \tilde\Phi_r(y) \,+\, \tfrac{\rho}{2}\|y - y^r\|^2 \Bigr),
  \label{eq:scp_sub} \\
  \mathcal{C}_r
  &:= \bigl\{\, y \;:\;
       D_\alpha(y) \leq \delta,\;
       \nabla f_s(y^r)^\top (y - y^r) = 0 \;\forall s,\;
       y \geq b_r
     \,\bigr\},
  \label{eq:Cr} \\
  \text{(Step 2)}\quad
  y^{r+1}(s,a)
  &\;:=\; \hat y^{r+1}(s,a) - \log f_s(\hat y^{r+1}),
  \label{eq:retraction}
\end{align}
where Step 2 is, in policy space, the per-state softmax
$\pi_{r+1}(a|s) \propto \pi_n(a|s)\, e^{\hat y^{r+1}(s,a)}$ that
practical pipelines already apply when turning log-ratios into a
policy.
Step 1 linearizes the concave-side objective $-J_k^+$ at $y^r$ over
$T_r$, subject to the trust region---imposed \emph{exactly}, with no
margin---and (for $\alpha > 1$) a scheduled support floor $y \geq b_r$,
$y_{\min}(s,a) := \log(p_{\min}/\pi_n(a|s))$, which keeps the feasible
set compact (Section~\ref{sec:inner_convergence}).  Step 2 gives
$f_s(y^{r+1}) = 1$ at every state by construction.

Validity and the trust region survive the retraction; the floor is relaxed by a controlled amount.  The retraction
subtracts a non-negative constant $c_s = \log f_s(\hat y^{r+1}) \geq 0$
from each state block, which for $\alpha > 1$ can only \emph{lower} the
trust-region value $D_\alpha$ (Lemma~\ref{lem:R1}), so no margin is
needed.  Only this last step restricts $\alpha$: the retraction itself
is defined for every $\alpha$, and Remark~\ref{rem:R1_scope} treats the
excluded cases.  An $O(1/\rho)$ relaxation of the floor, bounded by the
descent telescope, keeps $y^{r+1}$ feasible for the next round
(Appendix~\ref{app:proofs_convergence}); the proximal weight $\rho$
controls it directly.\\
\noindent\textbf{Recovering the policy.}
After $R$ rounds, the final iterate $y^R$ satisfies
$f_s(y^R) = 1$ exactly and defines the next policy directly:
\begin{equation}
  \pi_{n+1}(a|s) \;=\; \pi_n(a|s)\, e^{y^R(s,a)},
  \label{eq:recovery_exact}
\end{equation}
a valid distribution at every state.  For parametric policy
classes, the right-hand side is the target distribution to which
$\pi_\theta$ is fit by maximum likelihood
(Section~\ref{sec:practical}).  

To summarize, each policy iteration collects
episodes under $\pi_n$, fits the critic to obtain advantage estimates
$\widehat{A}^{\pi_n}$, initializes $y^0 = 0$ (that is, $\pi = \pi_n$,
which lies on $\mathcal{M}$), runs $R$ rounds of Steps 1--2, and
recovers $\pi_{n+1}$ via~\eqref{eq:recovery_exact};
Algorithm~\ref{alg:ccrl} in Appendix~\ref{app:algorithm} states this in
full, additionally tracking the support-floor schedule.  The
modified iteration retains the guarantees of textbook SCP: monotone
descent and convergence to a directional-stationary (d-stationary)
point (defined in Appendix~\ref{app:stationarity}), made precise in
Section~\ref{sec:convergence}.

\vspace{-1em}
\section{Convergence Analysis}
\label{sec:convergence}
\vspace{-.5em}
We now state what the algorithm guarantees.  Prior policy-gradient
analyses prove global
$k = 1$ under tabular direct parameterization
\citep{mei2020softmax, agarwal2021theory, xiao2022convergence};
outside it, the general treatment falls back to gradient
stationarity \citep{agarwal2022rlbook}.  Our exact objective at
$k = H$ is non-convex, yet its DC structure lets us do better.  We
give an \emph{inner-loop} guarantee (convergence of the SCP rounds
within one policy iteration) and an \emph{outer-loop} guarantee
(contraction of the policy sequence).  All results assume the direct
parameterization on log-density ratios; full statements and proofs
are in Appendix~\ref{app:proofs_convergence}.

\vspace{-.5em}
\subsection{Inner-loop convergence}
\label{sec:inner_convergence}
\vspace{-.5em}

The analysis below uses the strict per-state trust region and a moving
floor $b_r$ (Eq.~\eqref{eq:floor_schedule}).  It starts at
$y_{\min}(s,a)=\log(p_{\min}/\pi_n(a|s))$ and converges to
$b_\infty\geq y_{\min}-B(\rho)\mathbf{1}$.  The compact limiting
feasible set is
$\mathcal F_\infty=\{y:D_\alpha^s(y)\leq\delta\ \forall s,\
f_s(y)=1\ \forall s,\ y\geq b_\infty\}$.  All stationarity statements
refer to this set; the nominal floor need not be preserved by
retraction.

Write $\hat z_r=\hat y^{r+1}-y^r$.  The bound on this proximal-step
residual below gives an approximate first-order certificate.  At a
fixed point, it is zero and the point is d-stationary under the linear
independence constraint qualification (LICQ), which makes the
linearization faithful to the local geometry
(Proposition~\ref{prop:fixed_points}).

\begin{theorem}[Inner-loop convergence, strict per-state trust region]
  \label{thm:inner}
  Fix $\alpha>1$, $\delta>0$, a full-support tabular $\pi_n$, and
  $0<p_{\min}\leq\min_{s,a}\pi_n(a|s)$ with
  $|\cA|p_{\min}<1$.  Run the exact tangent-plane subproblem with
  the floor schedule and choose $\rho$ as in
  Proposition~\ref{prop:feas_compact}.  With
  $\rho_0=L_\Phi L_f\sqrt{|\cS||\cA|}$ and the fixed enlarged box
  $K_{\rm ext}$ defined there,
  \begin{enumerate}[leftmargin=2em,itemsep=1pt,label=(\roman*)]
    \item Every subproblem is feasible, the retracted iterates are
      valid and trust-region feasible, and
      $\Phi(y^{r+1})\leq\Phi(y^r)
        -(\rho-\rho_0)\|\hat z_r\|^2/2$.
    \item For every $R\geq1$,
      $\min_{r<R}\|\hat z_r\|^2\leq
       2[\Phi(y^0)-\inf_{K_{\rm ext}}\Phi]/[(\rho-\rho_0)R]$.
      In particular, $\hat z_r\to0$.
    \item Every limit point lies in $\mathcal F_\infty$ and is
      d-stationary for $\Phi$ over $\mathcal F_\infty$ if LICQ
      holds for its active validity, trust-region, and floor constraints.
  \end{enumerate}
\end{theorem}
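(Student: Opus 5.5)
The plan is the standard majorize--minimize argument for DC programs, adapted to the tangent-plane constraint and the retraction. I would first establish part (i), feasibility and invariance, by induction on $r$. The anchor $y^r$ lies in $\mathcal{C}_r$: it satisfies the tangent equality trivially, is trust-region feasible by the inductive hypothesis, and satisfies $y^r \ge b_r$ by the floor schedule. Hence Step~1 has a nonempty, closed, convex feasible set. Its objective is strongly convex because of the $\tfrac{\rho}{2}\|y-y^r\|^2$ term, so the minimizer $\hat y^{r+1}$ exists and is unique. Next, $f_s$ is convex and $\hat y^{r+1}$ lies on the tangent plane of the level set $\{f_s=1\}$ at $y^r$, so $f_s(\hat y^{r+1}) \ge f_s(y^r)+\nabla f_s(y^r)^\top \hat z_r = 1$. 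Thus $c_s=\log f_s(\hat y^{r+1})\ge 0$. Lemma~\ref{lem:R1} then shows that the retraction can only lower $D^s_\alpha$ for $\alpha>1$, and Step~2 gives validity exactly. The floor is lowered by at most $c_s$. I would bound $c_s$ by a second-order Taylor expansion of $f_s$ on $K_{\rm ext}$, which gives $c_s\lesssim L_f\|\hat z_r\|^2$. Summing these losses against the descent telescope yields the $O(1/\rho)$ total relaxation $B(\rho)$. This is the content of Proposition~\ref{prop:feas_compact}, which I take as given; it also keeps all iterates in the compact box $K_{\rm ext}$.

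For the descent inequality, the key comparison is among $\Phi(y^{r+1})$, $\Phi(\hat y^{r+1})$ and $\Phi(y^r)$. Convexity of $J_k^+$ gives $-J_k^+(y)\le -J_k^+(y^r)-\nabla J_k^+(y^r)^\top(y-y^r)$, so $\tilde\Phi_r$ majorizes $\Phi$ up to a constant. Optimality of $\hat y^{r+1}$ in the strongly convex subproblem, compared with the feasible point $y^r$, then gives
\[
\Phi(\hat y^{r+1}) \le \Phi(y^r) - \tfrac{\rho}{2}\|\hat z_r\|^2 .
\]
This bound alone is not enough, because the retraction moves the point from $\hat y^{r+1}$ to $y^{r+1}$. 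Each state block shifts by $c_s$, so
\[
\|y^{r+1}-\hat y^{r+1}\|\le \sqrt{|\cS||\cA|}\,\max_s c_s \le \sqrt{|\cS||\cA|}\,L_f\|\hat z_r\|^2 .
\]
I would argue instead that $\|y^{r+1}-\hat y^{r+1}\|\le L_f\sqrt{|\cS||\cA|}\,\|\hat z_r\|$ up to the relevant constants. Combining this with Lipschitz continuity of $\Phi$ on $K_{\rm ext}$ (constant $L_\Phi$) gives
\[
\Phi(y^{r+1})-\Phi(\hat y^{r+1}) \le \tfrac{\rho_0}{2}\|\hat z_r\|^2 ,
\qquad \rho_0=L_\Phi L_f\sqrt{|\cS||\cA|} .
\]
This yields (i). I expect this step to be the main obstacle. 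The retraction error is naturally quadratic in $\|\hat z_r\|$, and it must be absorbed into the $\rho$ term with exactly the stated constant. The Lipschitz constants must also hold uniformly, which is why the compactness of $K_{\rm ext}$ from Proposition~\ref{prop:feas_compact} is essential.

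Part (ii) follows by telescoping (i) over $r<R$. Since all iterates lie in $K_{\rm ext}$, we have $\Phi(y^R)\ge\inf_{K_{\rm ext}}\Phi$. Bounding the minimum by the average then gives the stated rate, and summability of $\|\hat z_r\|^2$ gives $\hat z_r\to 0$.

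For (iii), take a limit point $\bar y$ along a subsequence $r_j$. Because $b_r\to b_\infty$, and because validity, the trust region, and $y^{r}\ge b_{r}$ pass to the limit, we get $\bar y\in\mathcal F_\infty$. Since $\hat z_{r_j}\to 0$, the points $\hat y^{r_j+1}$ also converge to $\bar y$. I would write the KKT conditions of the convex subproblem, which hold because Slater's condition (or LICQ near $\bar y$) ensures the multipliers are bounded. These conditions are $\nabla J_k^-(\hat y)-\nabla J_k^+(y^r)+\rho\hat z_r+\sum\lambda\nabla D^s_\alpha+\sum\mu_s\nabla f_s(y^r)-\nu=0$. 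Passing to the limit along the subsequence recovers the KKT system of $\Phi$ over $\mathcal F_\infty$ at $\bar y$. Under LICQ, this is equivalent to d-stationarity, as in Proposition~\ref{prop:fixed_points}.
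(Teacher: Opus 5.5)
Your overall structure matches the paper's. You majorize $\Phi$ by $\tilde\Phi_r$ using convexity of $J_k^+$, and compare the subproblem minimizer with the feasible anchor to get $\Phi(\hat y^{r+1})\le\Phi(y^r)-\frac{\rho}{2}\|\hat z_r\|^2$. You then pay for the retraction, telescope, and pass the subproblem KKT system to the limit under LICQ. Parts (ii) and (iii) go through as you describe once (i) holds.

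The gap is in the retraction-perturbation step, which is exactly where $\rho_0$ comes from. You correctly observe that each block moves by $c_s$. You then switch to a \emph{linear} displacement bound, $\|y^{r+1}-\hat y^{r+1}\|\lesssim L_f\sqrt{|\cS||\cA|}\,\|\hat z_r\|$, and assert that Lipschitz continuity of $\Phi$ turns this into $\frac{\rho_0}{2}\|\hat z_r\|^2$. That does not follow. A linear displacement bound times $L_\Phi$ gives only $\Phi(y^{r+1})-\Phi(\hat y^{r+1})\lesssim\rho_0\|\hat z_r\|$. A term linear in $\|\hat z_r\|$ cannot be absorbed by $-\frac{\rho}{2}\|\hat z_r\|^2$ for small $\hat z_r$, whatever $\rho$ is. Followed literally, this breaks the descent inequality in (i). With it go the telescope in (ii), the drift budget $B(\rho)$, and the confinement you import from Proposition~\ref{prop:feas_compact}, whose proof itself relies on this descent estimate.

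What you need is the quadratic bound with the sharp constant of Lemma~\ref{lem:retraction}. Since $y^r,\hat y^{r+1}\in\mathcal C_r\subseteq K$ and $K$ is convex, use $L_f$-smoothness of $f_s$ on $K$, where $L_f$ is defined, not on $K_{\rm ext}$. Together with tangency this gives $f_s(\hat y^{r+1})\le 1+\frac{L_f}{2}\|\hat z_r\|^2$, and $\log(1+u)\le u$ then gives $0\le c_s\le\frac{L_f}{2}\|\hat z_r\|^2$. Hence $\|y^{r+1}-\hat y^{r+1}\|\le\sqrt{|\cS||\cA|}\,\frac{L_f}{2}\|\hat z_r\|^2$. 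The retraction segment stays in $K_{\rm ext}$ because $0\le c_s\le U$, so the Lipschitz bound gives exactly $\frac{\rho_0}{2}\|\hat z_r\|^2$.

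Your first bound, $\max_s c_s\le L_f\|\hat z_r\|^2$, drops the factor $\frac12$. It would yield descent only for $\rho>2\rho_0$, not the stated threshold.

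Finally, in (iii) the multiplier bound must be uniform along the subsequence. Per-subproblem Slater does not give that. LICQ at the limit point does: the gradients of constraints active along the subsequence converge to a linearly independent set, so their smallest singular value stays bounded away from zero. That is the argument you should use.
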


This is the standard proximal-DC guarantee profile
\citep{tao1997convex, razaviyayn2013unified, lipp2016variations};
d-stationarity is the strongest first-order notion available
here---stronger than the gradient stationarity a gradient-based
analysis gives.

\vspace{-.5em}
\subsection{Outer-loop convergence}
\label{sec:outer_convergence}
\vspace{-.5em}
The inner guarantee concerns one policy iteration; we now bound the
outer policy sequence $\pi_0, \pi_1, \ldots$.  We specialize to
exact PDIS ($k = H$), which removes surrogate bias
($J_H = J(\pi) - J(\pi_n)$), and to strict per-state $\chi^2$ trust
regions ($\alpha = 2$).  Write $\Delta_n := J(\pi^*) - J(\pi_n)$ for
the optimality gap, $\epsilon_n \geq 0$ for the inner-solve gap (how
far the SCP solve at iteration $n$ falls short of its global
minimum; zero when solved exactly), and
$\bar\epsilon := \sup_n \epsilon_n$.

\begin{theorem}[Outer-loop contraction]
  \label{thm:outer}
  Assume separately that a support floor $\pi_n(a|s) \geq p_{\min}$ holds on all outer iterates
  (with $|\cA|\, p_{\min} < 1$) and bounded concentrability
  $\max_{n,s} d^{\pi^*}(s)/d^{\pi_n}(s) \leq C_\infty$.  Then there
  are constants $c \in (0,1)$ and $\beta \geq 0$ (explicit in
  Appendix~\ref{app:proof_thm_per_step}) with
  \begin{equation}
    J(\pi_{n+1}) - J(\pi_n) \;\geq\; c\, \Delta_n - \beta - \epsilon_n,
    \qquad\text{hence}\qquad
    \Delta_N \;\leq\; (1-c)^N \Delta_0 + \frac{\beta + \bar\epsilon}{c}.
    \label{eq:outer_rate}
  \end{equation}
\end{theorem}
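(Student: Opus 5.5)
The plan is to exhibit one explicit feasible comparison point for the iteration-$n$ program, lower-bound its exact improvement, and let the inner-solve gap $\epsilon_n$ carry that bound over to $\pi_{n+1}$. Because $k=H$, the objective is exact: $J_H(y;\pi_n)=J(\pi)-J(\pi_n)$. So if $y^\star_n$ is the global maximizer over the feasible set $\mathcal F$ (strict per-state $\chi^2$ ball, validity, floor), we have
\[
J(\pi_{n+1})-J(\pi_n)\ \ge\ J_H(y^\star_n)-\epsilon_n\ \ge\ J_H(\tilde y)-\epsilon_n
\]
for any feasible $\tilde y$. I will take the conservative mixture $\tilde\pi=(1-\eta)\pi_n+\eta\pi^*$, post-processed so that it meets the floor $p_{\min}$. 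One option is to mix $\pi^*$ first with the uniform policy, which is possible because $|\cA|p_{\min}<1$; any loss from this enters $\beta$. The mixing weight $\eta\in(0,1]$ is chosen so that the per-state $\chi^2$ ball holds. Since $\chi^2(\tilde\pi(\cdot|s)\|\pi_n(\cdot|s))=\eta^2\chi^2(\pi^*\|\pi_n)\le \eta^2/p_{\min}$, the choice $\eta:=\min\{1,\sqrt{2\delta p_{\min}}\}$ works after matching the $1/(\alpha(\alpha-1))$ normalization. This $\tilde\pi$ is valid, so $\tilde y=\log(\tilde\pi/\pi_n)$ lies on $\mathcal M$.

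Next I lower-bound $J(\tilde\pi)-J(\pi_n)$ using the state-visitation form of the performance-difference lemma. The inner advantage sum is linear in the mixture:
\[
\sum_a\tilde\pi(a|s)A^{\pi_n}(s,a)=\eta\sum_a\pi^*(a|s)A^{\pi_n}(s,a).
\]
The difficulty is that the outer distribution is $d^{\tilde\pi}$, not $d^{\pi^*}$. I will use the standard CPI argument. First, $\max_a A^{\pi_n}(s,a)\ge \sum_a\pi^*(a|s)A^{\pi_n}(s,a)$. Second, applying the performance-difference lemma with $\pi^*$ gives
\[
\Delta_n=Z_H\,\E_{d^{\pi^*}}\Bigl[\sum_a\pi^* A^{\pi_n}\Bigr]\le Z_H C_\infty\,\E_{d^{\pi_n}}\Bigl[\bigl(\textstyle\sum_a\pi^*A^{\pi_n}\bigr)_+\Bigr].
\]
To replace $d^{\tilde\pi}$ by $d^{\pi_n}$, I bound $\|d^{\tilde\pi}-d^{\pi_n}\|_1\le c_H\eta$, with $c_H$ of order $\gamma Z_H$ times the horizon, via the usual coupling bound (per-state total variation of the mixture is at most $\eta$). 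Together with $|A^{\pi_n}|\le 2R_{\max}Z_H$, this yields
\[
J(\tilde\pi)-J(\pi_n)\ \ge\ \frac{\eta}{C_\infty}\Delta_n-O(\eta^2 Z_H^2 H R_{\max}).
\]
A subtlety is that the concentrability bound is stated for positive parts, so the negative-part terms have to be handled. I would define $\pi^*$-greedy-with-respect-to-$A^{\pi_n}$ instead of $\pi^*$, since that only increases the linear term and keeps the inequality clean. This gives $c:=\eta/C_\infty\in(0,1)$ and $\beta:=O(\eta^2\cdots)$, plus the floor-projection error.

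The recursion is then routine. Rewrite $J(\pi_{n+1})-J(\pi_n)=\Delta_n-\Delta_{n+1}$ to get $\Delta_{n+1}\le(1-c)\Delta_n+\beta+\epsilon_n$. Unrolling this with $\epsilon_n\le\bar\epsilon$ and summing the geometric series gives \eqref{eq:outer_rate}.

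\textbf{Main obstacle.} I expect the hard step to be the distribution-shift term, namely controlling $d^{\tilde\pi}$ versus $d^{\pi_n}$ uniformly without degrading $c$. It is also where the floor projection must stay compatible with both the trust region and the greedy-policy substitution. I would first try the second-order CPI bound, which keeps $\beta$ quadratic in $\eta$. If that fails, I would settle for a cruder $\beta$ that is linear in the floor mass $|\cA|p_{\min}$.
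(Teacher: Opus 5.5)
Your proof is correct, and its skeleton matches the paper's. Once you replace $\pi^*$ by the greedy policy and floor it by mixing with the uniform policy at weight $|\cA|p_{\min}$, your comparator is exactly the paper's $\tilde\pi_\lambda=(1-\lambda)\pi_n+\lambda\pi_n^{G,p_{\min}}$. That replacement is necessary, as you note, because the signed term $\sum_a\pi^*(a|s)A^{\pi_n}(s,a)$ defeats the concentrability step. The $\chi^2$ feasibility check, the use of exactness at $k=H$ together with the gap $\epsilon_n$, and the telescoping are also the same as in the paper.

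The genuine difference is the step you flag as the main obstacle. Write $\lambda$ for your mixing weight $\eta$. You compare the visitation of $\tilde\pi$ with that of $\pi_n$ two-sidedly, through a coupling bound $\|d^{\tilde\pi}-d^{\pi_n}\|_1=O(H\lambda)$, and you pay an additive error of order $\lambda^2A_{\max}\sum_t t\gamma^t$. The paper needs no distribution-shift estimate at all. Because the main integrand $\max_aA^{\pi_n}(s,a)$ is non-negative, the entrywise kernel bound $P^{\tilde\pi_\lambda}\ge(1-\lambda)P^{\pi_n}$ gives the one-sided dominance $\eta^{\tilde\pi_\lambda}\ge(1-\lambda)^{H-1}\eta^{\pi_n}$ (Lemma~\ref{lem:mix_dist}), after which concentrability applies directly. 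The signed floor residual $p_{\min}\sum_aA^{\pi_n}(s,a)$ is then bounded using only that $\eta^{\tilde\pi_\lambda}$ has total mass $Z_H$.

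The two routes trade off differently.
\begin{itemize}
\item The paper obtains $c=\lambda(1-|\cA|p_{\min})(1-\lambda)^{H-1}/C_\infty$, which decays exponentially in $H$ for fixed $\lambda$. In exchange, $\beta=\lambda p_{\min}|\cA|A_{\max}Z_H$, so the radius $\beta/c$ carries a factor $p_{\min}$ and is pure floor bias.
\item Your route gives $c=\lambda(1-|\cA|p_{\min})/C_\infty$, with no horizon factor. However, $\beta/c$ acquires an extra term of order $C_\infty\lambda A_{\max}\sum_t t\gamma^t$. This term is polynomial in $H$ and vanishes only as $\lambda\to0$, i.e.\ at rate $\sqrt{\delta p_{\min}}$ rather than linearly in $p_{\min}$.
\end{itemize}

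The theorem only asserts the existence of some $c\in(0,1)$ and $\beta\ge0$, so either set of constants proves it. To get $c<1$ strictly, keep the factor $1-|\cA|p_{\min}$ in your $c$, or cap $\lambda\le1/2$ as the paper does; your bare $c=\eta/C_\infty$ with $\eta=\min\{1,\sqrt{2\delta p_{\min}}\}$ can equal $1$.
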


Telescoping gives geometric contraction to a neighborhood of radius
$(\beta + \bar\epsilon)/c$, with the floor bias $\beta \to 0$ as
$p_{\min} \to 0$.  The bound is \emph{conditional}: neither the common outer support floor
nor the global inner value gap $\bar\epsilon$ follows from the inner
stationarity result.  It is informative when $\bar\epsilon$ is small.  The rate $c$ degrades with horizon (a
$(1-\lambda)^{H-1}$ factor from the CPI-style mixture argument) and
coverage $C_\infty$---comparable to the tabular linear rates, but
proven for the exact multi-step objective
(Appendix~\ref{app:proofs_convergence}).

\vspace{-1em}
\section{Practical implementation of CCRL}
\label{sec:practical}
\vspace{-.5em}

The theoretical algorithm keeps the policy valid \emph{inside} the SCP
loop---via the tangent-plane subproblem~\eqref{eq:scp_sub}, its per-round
softmax retraction, and the support-floor schedule---whereas the deployed
version makes three changes for practicality.
\emph{(i)} It replaces the tangent-plane subproblem and retraction with the
plain trust-region convex program
\begin{equation}
  y^{r+1} = \argmin_{y}
  \bigl( J_k^-(y) - \nabla J_k^+(y^r)^\top y
    + \tfrac{\rho}{2}\|y - y^r\|^2 \bigr)
  \ \ \text{s.t.}\ \ D_\alpha(y) \leq \delta,\ \
  \|y\|_\infty \leq y_{\max},
  \label{eq:scp_sub_practical}
\end{equation}
dropping the validity constraint, retraction, and floor schedule---validity
is restored only once, by the final recovery softmax. \emph{(ii)} The added
box constraint $\|y\|_\infty \leq y_{\max}$ (a \emph{$y$-clip}) bounds every
log-ratio so the $k$-step importance weights $e^{\sum y}$ cannot overflow;
being products of $k$ exponentials, those weights also make it a variance
knob---tightening $y_{\max}$ trades a little bias for lower variance
(Appendix~\ref{app:pdis-variance}). \emph{(iii)} The exact
recovery~\eqref{eq:recovery_exact} becomes a target
$\tilde\pi_{n+1}(a|s) \propto \pi_{\theta_n}(a|s)\, e^{y^R(s,a)}$ that a
parametric policy $\pi_\theta$ is fit to by maximum likelihood.  Only the
first change bears on the convergence theory: restoring validity once at the
end rather than each round keeps the monotone inner-loop descent but forgoes
the guarantees that rely on in-loop validity---per-round feasibility
(Proposition~\ref{prop:feas_compact}), limit-point d-stationarity
(Theorem~\ref{thm:inner}(ii)), and outer-loop contraction
(Theorem~\ref{thm:outer}).

\vspace{-.75em}
\section{Experiments}
\label{sec:experiments} 
\vspace{-.75em}

We evaluate CCRL against strong on-policy baselines to answer four
questions.  (i)~Does multi-step PDIS ($k \geq 2$) recover credit that
the single-step CPI surrogate ($k = 1$) misses
(Section~\ref{sec:risky-corridor})?  (ii)~Do those gains amplify as the
dependency horizon deepens (Section~\ref{sec:risky-chain})?
(iii)~How does CCRL compare against strong baselines on standard
classic-control benchmarks (CartPole, Acrobot, MountainCar,
LunarLander; Section~\ref{sec:classic-control})?  (iv)~Does the
multi-step objective help on a realistic, stochastic task
(Section~\ref{sec:sepsis})?  Every method is tuned independently under
matched per-iteration budgets.

\vspace{-.75em}
\subsection{Multi-step credit propagation via $k > 1$ truncated PDIS}
\label{sec:risky-corridor}
\vspace{-.5em}
\begin{figure}[h]
    \centering
    \settoheight{\panelht}{\includegraphics[width=0.3\textwidth]{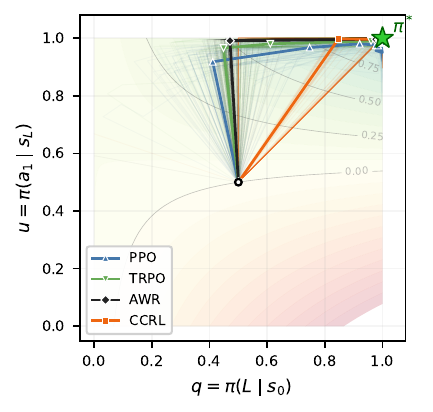}}%
    \begin{subfigure}[b]{0.30\textwidth}
        \centering
        \parbox[c][\panelht][c]{\linewidth}{\centering
        \resizebox{\linewidth}{!}{%
        \begin{tikzpicture}[
            >={Stealth[length=2mm]},
            mdp/.style={circle, draw, thick, minimum size=8mm, inner sep=0pt, font=\small},
            lbl/.style={font=\scriptsize, inner sep=1.5pt},
          ]
          \node[mdp, fill=blue!15]  (s0)  at (0,0)       {$s_0$};
          \node[mdp, fill=red!15]   (sL)  at (-1.7,-1.7) {$s_L$};
          \node[mdp, fill=green!15] (sR)  at (1.7,-1.7)  {$s_R$};
          \node[mdp, fill=black!8]  (tL1) at (-2.6,-3.4) {$s_T$};
          \node[mdp, fill=black!8]  (tL2) at (-0.9,-3.4) {$s_T$};
          \node[mdp, fill=black!8]  (tR)  at (1.7,-3.4)  {$s_T$};
          \draw[->, thick, red!70!black]   (s0) -- (sL)  node[lbl, midway, above left]  {$L,\ r{=}{-}20$};
          \draw[->, thick, black]          (s0) -- (sR)  node[lbl, midway, above right] {$R,\ r{=}{+}3$};
          \draw[->, thick, green!55!black] (sL) -- (tL1) node[lbl, midway, left]        {$a_1,\ r{=}{+}100$};
          \draw[->, thick, red!70!black]   (sL) -- (tL2) node[lbl, midway, right]       {$a_2,\ r{=}{-}80$};
          \draw[->, thick, black]          (sR) -- (tR)  node[lbl, midway, right]       {$a_0,\ r{=}{+}1$};
        \end{tikzpicture}%
        }}
        \caption{Risky Corridor MDP.}
        \label{fig:risky_mdp}
    \end{subfigure}\hfill
    \begin{subfigure}[b]{0.3\textwidth}
        \centering
        \parbox[c][\panelht][c]{\linewidth}{\centering
        \includegraphics[width=\linewidth]{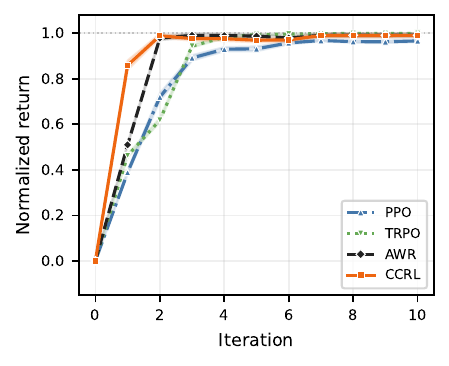}}
        \caption{Normalized expected return.}
        \label{fig:comparison_return}
    \end{subfigure}\hfill
    \begin{subfigure}[b]{0.3\textwidth}
        \centering
        \includegraphics[width=\linewidth]{results/policy_convergence_trajectory.pdf}
        \caption{Policy-space trajectory.}
        \label{fig:comparison_trajectory}
    \end{subfigure}
    \caption{\textbf{Risky Corridor MDP and convergence
    behavior across PPO, TRPO, AWR, and CCRL.}
    A ``pay-now-gain-later'' MDP designed to test how
    locally greedy behavior by CPI surrogate methods can lead to
    slow convergence: the rewarding path at $s_0$
    requires paying an immediate loss in exchange for a large
    downstream payoff, while the safe alternative pays a small
    immediate reward and yields a low total return.  All methods use the same per-iteration
    interaction budget ($32$ episodes per update, $10$ updates) and
    are individually hyperparameter-tuned
    (protocol in Appendix~\ref{app:risky-corridor-table}).
    \emph{(a)} The MDP;
    \emph{(b)} expected return over policy iterations, normalized so
    that $1$ is optimal and $0$ is the uniform-policy return (mean
    over $100$ seeds; shaded band is $\pm 1$ SE): CCRL is the only
    method to reach
    $90\%$ of optimal in under two iterations and also achieves
    the highest area under the learning curve (quantitative
    numbers in Appendix~\ref{app:risky-corridor-table});
    \emph{(c)} policy trajectory in
    $(q, u) = (\pi(a_L|s_0),\, \pi(a_1|s_L))$ space, from uniform
    $(0.5, 0.5)$ toward the optimum $(1, 1)$, over the normalized-return
    landscape (shading red-to-green for low-to-high return, with
    labelled contours): all methods identify the rewarding action at
    $s_L$, only CCRL commits to the pay-now direction at
    $s_0$.\vspace{-1em}}
    \label{fig:comparison}
\end{figure}

The Risky Corridor (Figure~\ref{fig:comparison}(a)) places a pay-now/gain-later choice at $s_0$: the optimal
action $L$ pays an immediate loss for a large downstream payoff, while
the safe action $R$ yields a small reward into a low-return basin.
Under the uniform policy this inverts the local signal---%
$A^{\pi_b}(s_0, L) = -7$
(Appendix~\ref{app:risky-corridor-advantage})---so a $k = 1$ update
reads the negative advantage as evidence against the optimal action,
whereas $k = 2$ couples $s_0$ with $s_L$ and can commit to $L$ once
the downstream action is identified.  We compare PPO, TRPO, AWR, and
CCRL ($k = 2$) over $100$ seeds under a common budget of $32$ episodes
per update over $10$ updates, with per-method tuning
(Appendix~\ref{app:risky-corridor-table}).

All four methods come close to the optimum (final normalized return
$0.966$--$0.9996$; Appendix~\ref{app:risky-corridor-table}) but differ
in speed: CCRL reaches $90\%$ of optimal in $1.38$ iterations on
average---the only method under two---against $3.29$, $3.13$, and
$2.09$ for PPO, TRPO, and AWR.  Figure~\ref{fig:comparison}(c) shows the mechanism: all four
identify the downstream action at $s_L$ within roughly one iteration,
but only CCRL simultaneously commits to $L$ at $s_0$; the $k = 1$
methods first move away from $L$ in
response to its locally negative advantage.  This supports~(i)---the
$k = 2$ surrogate recovers credit the $k = 1$ methods miss---and the
pathology sharpens as the reward chain lengthens, which we study next.

\vspace{-1em}
\subsection{Gains amplify with dependency horizon}
\label{sec:risky-chain}
\vspace{-.5em}

The Risky Chain (Figure~\ref{fig:risky-chain}) generalizes the
corridor to $N$ stages: the large terminal reward is reachable only by
continuing through all $N-1$ stages and then taking the correct
terminal action (optimal return $100 - 5(N-1)$).  The corridor of
Section~\ref{sec:risky-corridor} is the $N = 2$ case; with $\gamma = 1$
we vary $N \in \{4, 5, 6\}$ to test whether the multi-step advantage
seen there grows with the dependency horizon, as that analysis
predicts.  We benchmark CCRL against PPO (the most widely used
iterative on-policy method), tuning both identically: (i) an equal
budget---$128$ trajectories per update and $200$ iterations, each
(algorithm, $N$, $k$) cell tuned from scratch; (ii) the entropy bonus swept
over the same grid for both, so PPO is never starved of exploration; and
(iii) an exhaustive grid search---we rank every configuration on $10$ tuning
seeds by nAUC, keep the top $12$, and report the best of those on $300$
validation seeds disjoint from the tuning seeds (full grids in
Appendix~\ref{app:chain-hparams}).
\vspace{-.5em}
\begin{table}[h]
\caption{\textbf{PPO vs.\ CCRL on the Risky Chain.} For each depth $N$ we report the final
normalized return (F; $\uparrow$, $1$=optimal, $0$=uniform-policy return),
nAUC ($\uparrow$; mean normalized return over the $200$ training iterations),
and $t_{90}$ ($\downarrow$; iterations to reach $90\%$ of optimal), each as
mean over $300$ validation seeds with the stderr in parentheses in
units of the last displayed digit (e.g.\ $0.939(13)=0.939\pm0.013$,
$9.4(23)=9.4\pm2.3$).  Best per metric column (within each $N$) in bold. Full-depth CCRL reaches $90\%$ of optimal
within about one iteration at every $N$ and ends at a normalized return of
$1.000$, whereas PPO's final return and $t_{90}$ steadily degrade as the chain
deepens.}
\label{tab:chain-mlp}
\centering
\scriptsize
\setlength{\tabcolsep}{4pt}
\renewcommand{\arraystretch}{1.0}
\begin{tabular}{@{}l ccc ccc ccc@{}}
\toprule
& \multicolumn{3}{c}{$N=4$} & \multicolumn{3}{c}{$N=5$} & \multicolumn{3}{c}{$N=6$} \\
\cmidrule(lr){2-4}\cmidrule(lr){5-7}\cmidrule(lr){8-10}
Method & F$\uparrow$ & nAUC$\uparrow$ & $t_{90}\!\downarrow$ & F$\uparrow$ & nAUC$\uparrow$ & $t_{90}\!\downarrow$ & F$\uparrow$ & nAUC$\uparrow$ & $t_{90}\!\downarrow$ \\
\midrule
PPO          & 0.939(13) & 0.937(12) & 3.9(13) & 0.920(15) & 0.920(14) & 5.4(16) & 0.916(15) & 0.916(14) & 9.4(23) \\
CCRL $k{=}1$ & \textbf{1.000(0)} & \textbf{0.997(0)} & 1.3(0) & 0.980(8) & 0.975(8) & 5.7(16) & \textbf{1.000(0)} & 0.995(0) & 1.6(0) \\
CCRL $k{=}2$ & 0.990(6) & 0.987(6) & 3.2(11) & 0.960(11) & 0.957(11) & 9.2(23) & \textbf{1.000(0)} & \textbf{0.996(0)} & 1.4(0) \\
CCRL $k{=}3$ & 0.990(5) & 0.987(5) & 3.2(12) & 0.997(3) & 0.991(3) & 2.3(7) & \textbf{1.000(0)} & \textbf{0.996(0)} & 1.4(0) \\
CCRL $k{=}4$ & \textbf{1.000(0)} & \textbf{0.997(1)} & \textbf{1.2(1)} & 0.996(4) & 0.989(5) & 2.7(9) & 0.996(4) & 0.993(4) & 2.0(7) \\
CCRL $k{=}5$ & $-$ & $-$ & $-$ & \textbf{1.000(0)} & \textbf{0.996(1)} & \textbf{1.2(1)} & 0.996(4) & 0.993(4) & 2.0(7) \\
CCRL $k{=}6$ & $-$ & $-$ & $-$ & $-$ & $-$ & $-$ & \textbf{1.000(0)} & \textbf{0.996(0)} & \textbf{1.3(0)} \\
\bottomrule
\end{tabular}
\end{table}

\textbf{Results.} We measure the final normalized return ($1$=optimal,
$0$=uniform), the normalized area under the learning curve (nAUC, the mean
normalized return over the $200$ iterations), and $t_{90}$, the iterations
needed to reach $90\%$ of optimal.  Full-depth CCRL ($k{=}N$) reaches $90\%$
of optimal within about one iteration at every depth ($t_{90}\le 1.3$ for
$N\ge 4$) and converges to a normalized return of $1.000$.  Tuned PPO, by contrast, degrades steadily as the chain
lengthens: its final return falls from $0.939$ at $N{=}4$ to $0.920$ and
$0.916$ at $N{=}5,6$, and its $t_{90}$ climbs from $3.9$ to $9.4$ iterations.
Since the two methods share the same networks and budget, this widening gap
points to the policy-improvement operator rather than to capacity or budget,
and it is sharpest in
$t_{90}$---flat in $N$ for CCRL but growing for PPO, the $O(1)$-vs-$O(N)$
signature of one-step bootstrapping whose error compounds with depth.  Among
CCRL depths, $k{=}N$ is best or tied on every metric at every $N$ and uniquely
fastest to the $90\%$ threshold, while intermediate $k$ is occasionally
non-monotonic (per-arm tuning variance).  Together these support
question~(ii): the multi-step advantage grows with the dependency horizon.

\vspace{-1em}
\subsection{CCRL is competitive on classic control}
\label{sec:classic-control}
\vspace{-1em}
\begin{figure}[h]
    \centering
    \includegraphics[width=\linewidth]{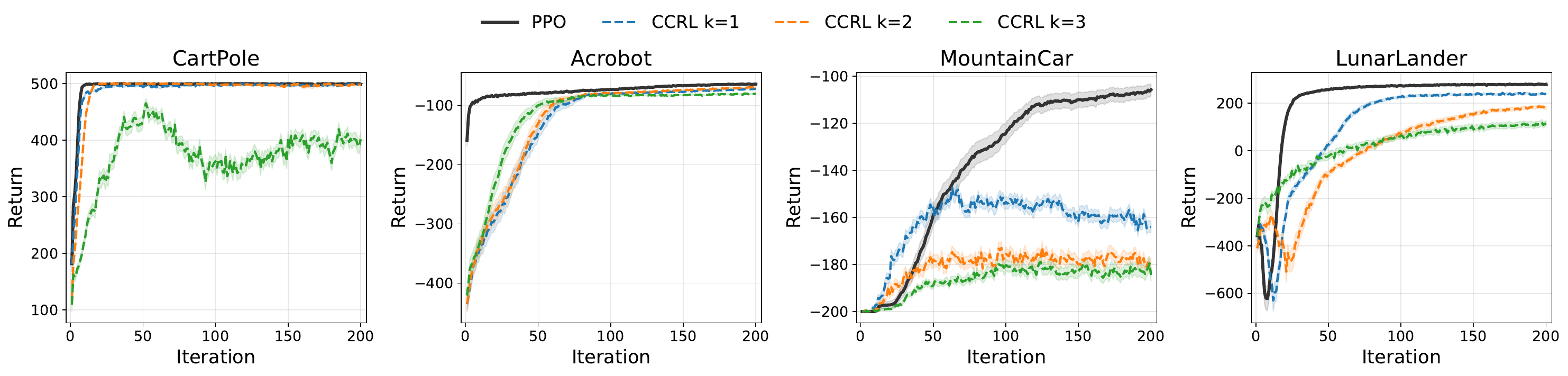}
    \caption{\textbf{Classic control: CCRL vs.\ PPO over $100$ seeds.}
    Return over $200$ iterations (mean over $100$ seeds; shaded band is
    $\pm 1$ SE) for PPO and CCRL at
    $k \in \{1, 2, 3\}$ on CartPole, Acrobot, MountainCar, and
    LunarLander.  CCRL ($k = 1$) is competitive with PPO across all
    tasks, while larger $k$ degrades performance---most visibly on
    CartPole and LunarLander---since the added variance of a larger
    window outweighs its credit-assignment benefit on these tasks
    (Section~\ref{sec:classic-control}).}\vspace{-1em}
    \label{fig:control}
\end{figure}

This experiment tests question~(iii): whether CCRL holds up against a
strong PPO baseline on standard control.  On four discrete-action
benchmarks (CartPole, Acrobot, MountainCar, LunarLander) we compare
tuned PPO against CCRL at $k \in \{1,2,3\}$, each tuned per
configuration and run over $100$ seeds for $200$ iterations
(Figure~\ref{fig:control}; setup in Appendix~\ref{app:classic-control}).
At $k = 1$ CCRL is competitive with PPO---matching it on CartPole and
Acrobot and trailing modestly on the harder-exploration MountainCar and
LunarLander---but does not outperform it.  In contrast to the chain,
\emph{increasing $k$ here hurts}: convergence speed and final return
degrade monotonically from $k = 1$ to $3$, most sharply on CartPole
and LunarLander.\\
\noindent\textbf{Why $k > 1$ does not help here.} We cannot rule out multi-step
coupling in these environments; what we can say is that the conditions
are unfavourable for exploiting it.  Dense per-step rewards already
supply a usable learning signal, so the marginal benefit of a wider
window is small, while two costs grow with $k$ over these long
episodes.  \emph{First, sensitivity to advantage-sign error.}
Where a gradient method merely follows a noisy advantage, CCRL's DC
reformulation uses the \emph{sign} of $A^{\pi_n}(s,a)$ to place each
trajectory term into the convex component $J_k^+$ or $J_k^-$
(Theorem~\ref{thm:dc}); a single sign flip therefore moves a term to
the wrong side of the decomposition and perturbs the optimized
objective itself, not just the step direction, and a $k$-step term
aggregates $k$ advantages so this corruption compounds with depth.
The critic is unreliable exactly where this bites: on CartPole, sign
agreement with near-exact Monte-Carlo advantages stays below $70\%$ and
\emph{decays as the policy improves} and advantages shrink toward zero
(Appendix~\ref{app:adv-quality}).  \emph{Second, estimator variance.}
The $k$-step objective weights each term by the cumulative importance
ratio $e^{\sum_{t'} y(S_{t'},A_{t'})}$; being the exponential of a
$k$-term sum of log-ratios, its variance blows up---by two to three
orders of magnitude per unit increase in $k$
(Appendix~\ref{app:pdis-variance})---so the per-update objective grows
progressively noisier.  Together these dominate what benefit a wider
window offers here, which is why performance degrades monotonically in
$k$.  This is a statement about the conditions, not about the objective:
it suggests widening the window selectively---per state or per
iteration---rather than uniformly, which we leave to future work.  Where
credit is deep and sparse throughout
(Sections~\ref{sec:risky-chain}, \ref{sec:sepsis}), a uniform $k$
already suffices.
\vspace{-1.2em}
\subsection{CCRL can outperform PPO on a stochastic, medium-horizon task}
\label{sec:sepsis}
\vspace{-.75em}

Sepsis care is a canonical sequential-decision problem in clinical RL
\citep{komorowski2018ai, oberst2019counterfactual}: a clinician treats a
patient whose condition evolves stochastically, with survival revealed only at
the end of the episode.  Crucially, many first-line therapies act only once
built up to a therapeutic level, so their benefit is realized several decisions
after treatment begins.  Our \emph{Therapeutic-Threshold Sepsis MDP} captures
this.  A state pairs illness severity $h$ with an accumulated dose $d$, and at
each of $H{=}20$ steps the clinician chooses to \emph{rest}, \emph{dose}
(advance $d$ toward the therapeutic threshold, with no immediate benefit), or
\emph{administer} (a high-probability recovery once the threshold $d{=}D{-}1$ is
reached, but ineffective and dose-resetting otherwise).  Survival, a sparse
terminal reward, is thus earned only by carrying the patient through $D$
preparatory doses and then administering, so the payoff is separated from the
actions that earn it by the full depth $D$.  With $M{=}5$ severity levels and
$D{=}3$ doses to threshold, the optimal policy survives $84.0\%$ of episodes
(full dynamics in Appendix~\ref{app:sepsis}).

In this regime, credit is genuinely multi-step
(each preparatory \emph{dose} has a \emph{negative} one-step advantage, so a
$k{=}1$ method like PPO is initially misled away from dosing), while the moderate horizon
keeps the advantage structure learnable and the one real threat, transition
variance, is controllable with a tight $y$-clip.  Since data is the binding
constraint in clinical RL, we compare how quickly each method reaches a good
policy, sweeping each method's variance knob (CCRL's $y_{\max}$, PPO's
$\varepsilon$) with the remaining hyperparameters sub-tuned (parametric MLP;
$500$ updates; $100$ seeds; Table~\ref{tab:sepsis}).
Trained to convergence both reach the same near-optimal survival
(CCRL $0.828{\pm}0.009$, PPO $0.822{\pm}0.009$; $\approx\!98\%$ of the $0.840$
optimum), so the task does not separate them at the endpoint---it separates
them on \emph{speed}: CCRL's mean survival over training is $0.815{\pm}0.008$
versus PPO's $0.732{\pm}0.009$, i.e.\ CCRL is near-optimal almost from the
start while PPO climbs slowly.  Both levers are necessary: at a loose clip the
$k{>}1$ objective is too noisy to help (Table~\ref{tab:sepsis-yk}: deeper
truncation collapses to $0.11$ at $k{=}3$, $y_{\max}{=}5$), while tightening
$y_{\max}$ makes every depth usable and lets multi-step credit resolve the
negative-advantage dosing that PPO reaches only slowly.

\begin{table}[t]
\centering
\scriptsize
\begin{subtable}[t]{0.60\textwidth}
\centering
\setlength{\tabcolsep}{3pt}
\renewcommand{\arraystretch}{1.05}
\begin{tabular}{@{}c cc @{\quad} c cc@{}}
\toprule
\multicolumn{3}{c}{\emph{PPO} (clip $\varepsilon$)} & \multicolumn{3}{c}{\emph{CCRL} $k{=}2$ (clip $y_{\max}$)} \\
\cmidrule(lr){1-3}\cmidrule(lr){4-6}
$\varepsilon$ & F$\uparrow$ & nAUC$\uparrow$ & $y_{\max}$ & F$\uparrow$ & nAUC$\uparrow$ \\
\midrule
$0.05$ & \textbf{0.822(9)} & \textbf{0.732(9)} & $1.0$ & \textbf{0.828(9)} & \textbf{0.815(8)} \\
$0.10$ & 0.771(21) & 0.709(11) & $1.5$ & 0.796(19) & 0.786(18) \\
$0.20$ & 0.765(23) & 0.693(16) & $2.0$ & 0.778(22) & 0.762(22) \\
$0.30$ & 0.754(24) & 0.689(16) & $3.0$ & 0.719(29) & 0.684(30) \\
$0.50$ & 0.736(28) & 0.652(22) & $5.0$ & 0.750(25) & 0.681(25) \\
\bottomrule
\end{tabular}
\caption{Each method's variance knob swept.}
\label{tab:sepsis}
\end{subtable}\hfill
\begin{subtable}[t]{0.38\textwidth}
\centering
\setlength{\tabcolsep}{4pt}
\renewcommand{\arraystretch}{1.02}
\begin{tabular}{@{}l ccc@{}}
\toprule
 & \multicolumn{3}{c}{final survival $\uparrow$} \\
\cmidrule(lr){2-4}
$k \,\backslash\, y_{\max}$ & $5$ & $3$ & $2$ \\
\midrule
$2$ & 0.721(28) & 0.708(30) & \textbf{0.760(24)} \\
$3$ & 0.110(29) & 0.504(41) & \textbf{0.603(38)} \\
$4$ & 0.251(39) & 0.434(42) & \textbf{0.552(40)} \\
\bottomrule
\end{tabular}
\caption{CCRL vs.\ depth $k$ and clip $y_{\max}$.}
\label{tab:sepsis-yk}
\end{subtable}
\caption{\textbf{Therapeutic-Threshold Sepsis} (parametric MLP; $100$ seeds;
$500$ updates; DP-optimal survival $0.840$).  Values are mean over seeds with
the standard error in parentheses in units of the last digit (e.g.\
$0.822(9){=}0.822{\pm}0.009$).  \emph{(a)} Final survival and nAUC (mean
survival over training): with each variance knob tuned, PPO and CCRL converge
to the \emph{same} final survival ($\approx0.82$), but CCRL's nAUC is far
higher---it reaches near-optimal survival much sooner.  Best per column in
bold.  \emph{(b)} Controlled sweep isolating the mechanism: tightening
$y_{\max}$ makes multi-step credit usable at every depth (row-max in bold),
while a loose clip collapses deep $k$.}\vspace{-1.25em}
\label{tab:sepsis-both}
\end{table}

\vspace{-1em}
\section{Related Work}
\label{sec:related}
\vspace{-.75em}
CCRL connects to three lines of work, each discussed in the paper
where it is used: the CPI-surrogate family it generalizes (CPI, NPG,
TRPO, AWR; Section~\ref{sec:special_cases}); multi-step
credit-assignment methods that shape the value or advantage
\emph{target} rather than the objective (TD($\lambda$), GAE, Retrace);
and the difference-of-convex and policy-gradient-convergence
literatures our analysis builds on
(Section~\ref{sec:convergence}).  Closest in
spirit is POIS \citep{metelli2018policy}, which likewise treats
importance sampling as an \emph{optimization} objective; CCRL differs
by exposing its difference-of-convex structure and using the
truncation depth $k$ to couple consecutive decisions.  A comprehensive,
fully-referenced treatment is given in Appendix~\ref{app:related}.

\vspace{-.75em}
\section{Conclusion}
\label{sec:conclusion}
\vspace{-.5em}
This work studied iterative on-policy improvement as a log-sum-exp
difference-of-convex (DC) program and proposed \emph{Convex-Concave RL}
(CCRL), which recovers CPI, NPG, TRPO, and AWR as special cases and
extends them along the divergence order $\alpha$ and the truncation
depth $k$.  Theoretically, we established an unconditional $O(1/R)$
inner-loop stationarity guarantee for any $k$ and, at the
exact-improvement setting $k{=}H$ with a $\chi^2$ trust region, a
conditional geometric contraction of the outer policy sequence under
bounded concentrability.  Empirically, on the risky corridor and chain
MDPs the multi-step objective recovered credit that single-step methods
missed, with a margin that widened as the chain deepened.  On classic
control it is competitive with PPO but not consistently better, which we
attribute to noisy advantage estimates and $k$-step PDIS variance; on a
stochastic, horizon-$20$ clinical task it shows a convergence edge.

\noindent\textbf{Limitations and open directions.}~CCRL brings
convex-optimization machinery to policy optimization, subject to several
limitations.  Each policy iteration runs several SCP rounds and is
therefore more expensive than PPO's single gradient step; the
truncation depth $k$ is an additional knob that can be delicate to
tune; and the theory leans on relatively strong assumptions---bounded
concentrability and a globally solved inner problem for the outer-loop
contraction, and $\alpha > 1$ (excluding the reverse-KL endpoint) for
the inner-loop analysis.  Promising directions include (1) variance
reduction for the $k$-step couplings, (2) estimating appropriate
couplings $k$ dynamically (per state or per iteration), (3) scaling to
larger, higher-dimensional domains, and (4) sharper convergence
analysis giving full-sequence bounds for the inner loop.

\section*{Contributions}

\textbf{Authors and their abbreviations:}
Shripad Deshmukh: SD, 
Yaswanth Chittepu: YC,
Dhawal Gupta: DG,
Philip Thomas: PT,
Scott Niekum: SN.

\textbf{Individual author contributions:}

\textbf{SD}: Through ideation with YC and DG, SD formalized multi-step trust-region policy improvement as a difference-of-convex (DC) program. PT and SN advised on different aspects of this formalization. With YC's help, SD designed a sequential convex programming (SCP)-based algorithm for convex-concave RL. He also designed failure-case MDPs for current surrogate-objective methods, made theoretical connections to existing policy gradient methods, scaled up the experiments, diagnosed the current limitations of SCP-based CCRL, and identified a realistic healthcare use case demonstrating its effectiveness over PPO. With the help of AI tools, SD developed the convergence theory for the framework and contrasted it with existing convergence analyses of policy gradient methods in RL. Finally, he wrote the manuscript under the guidance of PT and SN, with suggestions from YC and DG.

\textbf{YC}: YC helped SD develop the DC programming formalism, which originated from earlier unsuccessful attempts to frame the policy improvement objective as a geometric program.

\textbf{DG}: DG contributed to the early ideation underlying CCRL, drawing on prior work on trust-region policy improvement with batch reuse developed with collaborators. In particular, DG brought the importance-weighted policy improvement perspective and contributed the idea of using log importance ratios, $y(s,a)$, as the optimization variables, motivated by the useful convex structure this parameterization exposes.

\textbf{PT}: PT advised on the development of the formalization, methods, and experiments, and provided feedback on early drafts.

\textbf{SN}: SN advised on the development of the formalization, methods, experiments, and writing.

\bibliographystyle{rlj}
\bibliography{main}

\beginSupplementaryMaterials

\appendix

This appendix provides: the full CCRL algorithm
(Appendix~\ref{app:algorithm}); the Lagrangian form and KKT conditions
for the tangent-plane SCP subproblem (Appendix~\ref{app:lagrangian_dc});
convexity facts for the $\alpha$-divergence trust regions, including the
KL endpoints (Appendix~\ref{app:fdiv_convex}); background facts on
directional stationarity (Appendix~\ref{app:stationarity});
full proofs of all results stated in the main text
(Appendices~\ref{app:proofs_dc} and~\ref{app:proofs_convergence},
including the two lemmas that make the inner-loop feasibility induction
well-posed, Lemmas~\ref{lem:R1}--\ref{lem:R2}); additional experimental
results and setups (Appendix~\ref{app:experiments}); and an
extended related-work discussion (Appendix~\ref{app:related}).

\section{The full CCRL algorithm}
\label{app:algorithm}

Algorithm~\ref{alg:ccrl} states the full outer loop of CCRL
(Section~\ref{sec:algorithm}).  Each policy iteration collects fresh
data under $\pi_n$, fits a critic for advantage estimation, runs
$R$ rounds of tangent-plane SCP with retraction starting from
$y^0 = 0$, and recovers $\pi_{n+1}$ from the final iterate $y^R$.
As stated, the algorithm targets discrete action spaces, and
Section~\ref{sec:practical} discusses its practical instantiation.

\begin{algorithm}[h]
\caption{CCRL: Convex-Concave Reinforcement Learning}
\label{alg:ccrl}
\begin{algorithmic}[1]
\REQUIRE initial policy $\pi_0$, critic $V_\phi$,
  trust region $\delta$, divergence order $\alpha$,
  SCP rounds $R$, truncation depth $k$, proximal weight $\rho$
\FOR{$n = 0, 1, 2, \ldots$}
  \STATE Collect $N$ episodes under $\pi_n$
  \STATE Fit critic $V_\phi$; compute advantage estimates
         $\widehat{A}^{\pi_n}(S_t, A_t)$ (e.g.\ via GAE)
  \STATE Initialize $y^0 \leftarrow 0$, floor
         $b_0 \leftarrow y_{\min} = \log(p_{\min}/\pi_n)$
  \FOR{$r = 0, 1, \ldots, R-1$}
    \STATE Solve tangent-plane subproblem~\eqref{eq:scp_sub} with
           anchor $y^r$ and floor $b_r$ to obtain $\hat y^{r+1}$
    \STATE Retract: $y^{r+1}(s,a) \leftarrow \hat y^{r+1}(s,a)
           - \log f_s(\hat y^{r+1})$ \hfill (per-state renormalization)
    \STATE $\gamma_r \leftarrow \max_s \log f_s(\hat y^{r+1})$;\quad
           $b_{r+1} \leftarrow b_r - \gamma_r \mathbf{1}$
           \hfill (floor schedule~\eqref{eq:floor_schedule})
  \ENDFOR
  \STATE Recover
         $\pi_{n+1}(a|s) \propto \pi_n(a|s)\, e^{y^R(s,a)}$
\ENDFOR
\end{algorithmic}
\end{algorithm}

Algorithm~\ref{alg:ccrl-practical} gives the deployed form used in our
experiments (Section~\ref{sec:practical}).  Its differences from the
theoretical Algorithm~\ref{alg:ccrl} are shown in {\dchg{blue}}:
\emph{(i)} the tangent-plane subproblem is replaced by the plain
trust-region convex program~\eqref{eq:scp_sub_practical};
\emph{(ii)} the per-round softmax retraction and the support-floor
schedule are dropped (validity is restored only once, by the final
row-normalization); \emph{(iii)} a $y$-clip $\|y\|_\infty \leq y_{\max}$
is added to the subproblem to keep the $k$-step importance weights
$e^{\sum_j y}$ bounded; and \emph{(iv)} the exact policy recovery becomes
a parametric maximum-likelihood \textbf{M-step} that fits $\pi_\theta$ to
the row-normalized target.

\begin{algorithm}[h]
\caption{CCRL, deployed form (parametric E-step / M-step)}
\label{alg:ccrl-practical}
\begin{algorithmic}[1]
\REQUIRE policy network $\pi_\theta$, critic $V_\phi$; trust radius
  $\delta$, divergence order $\alpha$, truncation depth $k$, SCP rounds
  $R$, proximal weight $\rho$, M-step epochs $E$
\FOR{$n = 0, 1, 2, \ldots$}
  \STATE Collect episodes under $\pi_{\theta_n}$; fit $V_\phi$ and
         estimate advantages $\widehat{A}^{\pi_n}$ (e.g.\ GAE)
         \hfill (E-step: build the improved target)
  \STATE $y^0 \leftarrow 0$
  \FOR{$r = 0, \ldots, R-1$}
    \STATE \dchg{$y^{r+1} \leftarrow \argmin_{y}
            \bigl( J_k^-(y) - \nabla J_k^+(y^r)^\top y
            + \tfrac{\rho}{2}\|y - y^r\|^2 \bigr)$
            \ \text{s.t.}\ $D_\alpha(y) \leq \delta,\ \,
            \|y\|_\infty \leq y_{\max}$}
            \hfill \dchg{(plain trust region + $y$-clip; no
            validity/floor)}
  \ENDFOR
  \STATE Target
         $\tilde\pi_{n+1}(a\mid s) \propto
         \pi_{\theta_n}(a\mid s)\, e^{y^R(s,a)}$
         \hfill (row-normalized recovery)
  \STATE \dchg{\textbf{M-step:} fit $\pi_\theta$ to the target for $E$
         epochs by minimizing $-\,\mathbb{E}_{S}\sum_a
         \tilde\pi_{n+1}(a\mid S)\log \pi_\theta(a\mid S)$;
         set $\theta_{n+1} \leftarrow \theta$}
         \hfill (critic and entropy losses as in PPO)
\ENDFOR
\end{algorithmic}
\end{algorithm}

\section{Lagrangian Form and KKT Conditions for the SCP Subproblem}
\label{app:lagrangian_dc}

With anchor $y^r \in \mathcal{M}$, the tangent-plane
subproblem~\eqref{eq:scp_sub} is
\[
  \hat y^{r+1} \;=\; \argmin_{y \in \mathcal{C}_r}\;
  \Bigl( \tilde\Phi_r(y) + \tfrac{\rho}{2}\|y - y^r\|^2 \Bigr),
  \qquad \tilde\Phi_r(y) = J_k^-(y) - \nabla J_k^+(y^r)^\top y,
\]
over
\[
  \mathcal{C}_r = \bigl\{
    y : D_\alpha(y) \leq \delta,\;
        \nabla f_s(y^r)^\top (y - y^r) = 0 \;\forall s,\;
        y \geq b_r
  \bigr\},
\]
with $b_r$ the floor schedule~\eqref{eq:floor_schedule}.

\subsection{Lagrangian form}

Introduce multipliers:
$\xi \geq 0$ for the trust region;
$\mu^s \in \R$ (\emph{sign-free}) for each tangent-plane equality
$\nabla f_s(y^r)^\top (y - y^r) = 0$;
and $\zeta_{s,a} \geq 0$ for each floor constraint.  The Lagrangian
is
\begin{align}
  \mathcal{L}_r(y, \xi, \mu, \zeta)
  &= \tilde\Phi_r(y) + \tfrac{\rho}{2}\|y - y^r\|^2
     + \xi \bigl(D_\alpha(y) - \delta\bigr) \notag \\
  &\quad
     + \sum_s \mu^s\, \nabla f_s(y^r)^\top (y - y^r)
     + \sum_{s,a} \zeta_{s,a}\, \bigl(b_r(s,a) - y(s,a)\bigr).
  \label{eq:app_lagrangian}
\end{align}

\subsection{KKT conditions}

The subproblem is convex with affine equalities; Slater's condition
holds in the relative-interior sense (the tangent plane intersected
with the open trust region and strict floor is non-empty whenever
$\delta > 0$, since the anchor $y^r$ is feasible by
Proposition~\ref{prop:feas_compact} and lies strictly above the
relaxed floor $b_r$ at interior anchors).  Stationarity reads
\begin{equation}
  0 = \nabla \tilde\Phi_r(\hat y) + \rho(\hat y - y^r)
      + \xi\, \nabla D_\alpha(\hat y)
      + \sum_s \mu^s\, \nabla f_s(y^r)
      - \sum_{s,a} \zeta_{s,a}\, e_{(s,a)},
  \label{eq:app_kkt}
\end{equation}
where $e_{(s,a)}$ is the coordinate basis vector for $(s,a)$,
together with primal feasibility, dual feasibility
($\xi, \zeta_{s,a} \geq 0$; $\mu^s$ free), and complementary
slackness for the inequalities.

\subsection{Gradient expressions}

For implementation:
\begin{align}
  \frac{\partial f_s(y)}{\partial y(\bar s, \bar a)}
  &= \mathbf{1}\{\bar s = s\}\, \pi_n(\bar a | \bar s)\, e^{y(\bar s, \bar a)},
  \\[2pt]
  \frac{\partial D_\alpha^s(y)}{\partial y(\bar s, \bar a)}
  &= \mathbf{1}\{\bar s = s\}\, \frac{1}{\alpha - 1}\,
     \pi_n(\bar a | \bar s)\, e^{\alpha\, y(\bar s, \bar a)},
  \\[2pt]
  \frac{\partial J_k^{\pm}(y)}{\partial y(\bar s, \bar a)}
  &= \E_{\tau \sim p_{\pi_n}}\!\left[
       \sum_{t=0}^{H-1} \gamma^t\,
         e^{\sum_{t'=\max(0,t-k+1)}^{t} y(S_{t'}, A_{t'})}\,
         [A^{\pi_n}(S_t, A_t)]^{\pm} \right. \notag \\
  &\qquad\qquad \left. {} \cdot
         \!\sum_{t'=\max(0,t-k+1)}^{t}\! \mathbf{1}\{(S_{t'}, A_{t'}) = (\bar s, \bar a)\}
     \right].
\end{align}

\subsection{Primal-dual gradient solver}

The subproblems are solved by standard primal-dual gradient
steps on~\eqref{eq:app_lagrangian}:
\begin{align*}
  y^{m+1}
  &= \Pi_{[b_r, \infty)}\!\bigl(
       y^m - \eta_y\, \nabla_y \mathcal{L}_r(y^m, \xi^m, \mu^m, \zeta^m)
     \bigr), \\
  \xi^{m+1}
  &= [\xi^m + \eta_\xi\, (D_\alpha(y^m) - \delta)]_+, \\
  \mu^{s, m+1}
  &= \mu^{s, m} + \eta_\mu\, \nabla f_s(y^r)^\top (y^m - y^r)
  \quad \text{(sign-free ascent)},
\end{align*}
with projection $\Pi$ applied only when the floor is imposed.
After the final SCP round, the retraction~\eqref{eq:retraction}
(per-state renormalization) is applied; this is the same
normalization step every implementation performs when converting
log-ratios to a policy.

\subsection{Why the equality is linearized as an equality}
\label{app:linearize_equality}

Proposition~\ref{prop:naive_singleton} in Section~\ref{sec:scp}
states that the naive DC split of the validity equality freezes
the iteration at the anchor.  We give the proof here.

\begin{proof}[Proof of Proposition~\ref{prop:naive_singleton}]
  Fix $s$ and write $z_a := y(s,a) - y^r(s,a)$ and
  $w_a := \pi_n(a|s)\, e^{y^r(s,a)}$.  Feasibility of the anchor
  gives $\sum_a w_a = 1$ with $w_a > 0$, so $w$ is a probability
  vector.  The two constraints read
  $\sum_a w_a e^{z_a} \leq 1$ and $\sum_a w_a z_a \geq 0$.  By
  Jensen's inequality for the strictly convex exponential,
  $\sum_a w_a e^{z_a} \geq \exp\bigl(\sum_a w_a z_a\bigr) \geq e^0 = 1$.
  Hence both inequalities hold with equality; equality in Jensen
  forces all $z_a$ equal (strict convexity, $w_a > 0$), and
  $\sum_a w_a z_a = 0$ then forces $z_a = 0$ for every $a$.  This
  holds at every state, so $y = y^r$.
\end{proof}

The tangent-plane construction of Section~\ref{sec:scp} avoids the
trap by dropping the one-sided convex inequality and linearizing
the equality as an equality; exact validity is restored by the
retraction, whose effect on the objective is second order
(Lemma~\ref{lem:retraction}) and absorbed by the proximal term
(Theorem~\ref{thm:tier1}).

\section{Convexity of $\alpha$-Divergence Constraints}
\label{app:fdiv_convex}

This appendix collects the convexity facts that keep the trust region
tractable in log-density-ratio coordinates: the three trust-region forms
(Table~\ref{tab:trust_region_variants}), convexity of the
$\alpha$-divergence in $y$ for finite $\alpha \notin [0,1]$, the two KL
endpoints, and the common divergences recovered as special cases.

\begin{table}[H]
\caption{Three variants of the trust-region constraint
(Section~\ref{sec:trust_logratio}), each in its general $f$-divergence
form (policy space) and its corresponding $\alpha$-divergence form in
log-density-ratio coordinates.}
\label{tab:trust_region_variants}
\centering
\resizebox{\linewidth}{!}{%
\renewcommand{\arraystretch}{1.5}
\begin{tabular}{@{}lll@{}}
\toprule
Variant & $f$-divergence (policy space) & $\alpha$-divergence ($y$-space) \\
\midrule
Trajectory-level &
  $\E_{\tau \sim p_{\pi_n}}\!\bigl[f(p_\pi(\tau)/p_{\pi_n}(\tau))\bigr] \le \delta$ &
  $\frac{1}{\alpha(\alpha-1)} \E_{\tau \sim p_{\pi_n}}\!\bigl[e^{\alpha \sum_t y(S_t, A_t)} - 1\bigr] \le \delta$ \\
Strict per-state &
  $\E_{A \sim \pi_n(\cdot|s)}\!\bigl[f(\pi(A|s)/\pi_n(A|s))\bigr] \le \delta_s,\; \forall s$ &
  $\frac{1}{\alpha(\alpha-1)} \bigl(\E_{A \sim \pi_n(\cdot|s)}\!\bigl[e^{\alpha\, y(s,A)}\bigr] - 1\bigr) \le \delta_s,\; \forall s$ \\
Expected per-state &
  $\E_{S \sim d^{\pi_n}}\!\bigl[\E_{A \sim \pi_n(\cdot|S)}\!\bigl[f(\pi(A|S)/\pi_n(A|S))\bigr]\bigr] \le \delta$ &
  $\E_{S \sim d^{\pi_n}}\!\Bigl[\frac{1}{\alpha(\alpha-1)}\bigl(\E_{A \sim \pi_n(\cdot|S)}\!\bigl[e^{\alpha\, y(S,A)}\bigr] - 1\bigr)\Bigr] \le \delta$ \\
\bottomrule
\end{tabular}%
}
\end{table}

\subsection{$f$-divergence convexity in $\pi$}

\begin{proposition}[Convexity of $f$-divergence in $\pi$]
  \label{prop:fdiv_convex}
  Let $f : \R_+ \to \R$ be convex with $f(1) = 0$, and let $\pi_n$
  have full support.  Then
  $D_f(\pi(\cdot|s) \| \pi_n(\cdot|s))
   = \sum_a \pi_n(a|s)\, f(\pi(a|s)/\pi_n(a|s))$
  is convex in $\pi(\cdot|s)$ for every $s$.
\end{proposition}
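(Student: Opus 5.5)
The plan is to reduce the statement to one-dimensional convexity of $f$, composed with an affine map and summed with non-negative weights. Fix a state $s$. The map
\[
\pi(\cdot|s) \mapsto \pi(a|s)/\pi_n(a|s)
\]
is linear in the vector $\pi(\cdot|s)$ for each action $a$. It is well defined because full support gives $\pi_n(a|s) > 0$. It sends the simplex, and more generally the non-negative orthant, into $\R_+$, which is the domain of $f$.

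Since $f$ is convex on $\R_+$, the composition $g_a(\pi(\cdot|s)) := f(\pi(a|s)/\pi_n(a|s))$ is convex on the non-negative orthant. This is the standard fact that convexity is preserved under composition with an affine map. Concretely, for $\lambda \in [0,1]$ and two candidates $\pi^{(1)},\pi^{(2)}$, linearity of the ratio gives that the ratio of the mixture equals the mixture of the ratios. Convexity of $f$ then yields $g_a(\lambda \pi^{(1)} + (1-\lambda)\pi^{(2)}) \le \lambda g_a(\pi^{(1)}) + (1-\lambda) g_a(\pi^{(2)})$.

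Next, multiply each $g_a$ by the non-negative weight $\pi_n(a|s)$ and sum over the finite action set. A non-negative combination of convex functions is convex. Hence $D_f(\pi(\cdot|s)\|\pi_n(\cdot|s)) = \sum_a \pi_n(a|s)\, g_a$ is convex in $\pi(\cdot|s)$. The condition $f(1)=0$ plays no role in convexity; it only normalizes the divergence so that it vanishes at $\pi = \pi_n$.

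Two remarks extend the result. First, because the per-state divergence depends only on the block $\pi(\cdot|s)$, it is also convex as a function of the full direct parameter $\pi \in \Pi$. Second, non-negative mixtures over states, such as the expected per-state form under $d^{\pi_n}$ in Table~\ref{tab:trust_region_variants}, are therefore convex too; this is worth noting since $d^{\pi_n}$ is fixed.

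There is no real obstacle here. The only point needing care is domain bookkeeping: $f$ may be extended-valued or defined only on $\R_+$ (for example $f(u) = u\log u$, with $0\log 0 := 0$). In that case I would argue convexity on the convex set $\R_+^{|\cA|}$, which is the image region of interest, and allow the value $+\infty$ in the inequality, which is harmless.
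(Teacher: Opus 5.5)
Your proof is correct and follows the paper's argument: compose the convex $f$ with the affine map $\pi(a|s)\mapsto \pi(a|s)/\pi_n(a|s)$ (well defined by full support), then use that weighting by the non-negative $\pi_n(a|s)$ and summing over actions both preserve convexity. Your extra remarks are accurate but not needed for the statement: $f(1)=0$ plays no role, the extended-valued domain bookkeeping is harmless, and the result extends to the expected per-state form under the fixed $d^{\pi_n}$.
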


\begin{proof}
  Fix $s$ and let $q_a = \pi_n(a|s) > 0$.  The map
  $\pi(a|s) \mapsto f(\pi(a|s)/q_a)$ is convex (composition of the
  convex $f$ with an affine map).  Multiplying by $q_a \geq 0$
  preserves convexity, and the sum over $a$ preserves convexity.
\end{proof}

\subsection{$\alpha$-divergence in $y$-space: finite $\alpha$}

\begin{proposition}[Convexity for finite $\alpha \notin {[0,1]}$]
  \label{prop:trust_finite}
  For every finite $\alpha \in \R \setminus [0, 1]$, the
  trajectory-level, strict per-state, and expected per-state
  $\alpha$-divergence forms of
  Table~\ref{tab:trust_region_variants} are convex in $y$ on all of
  $\R^{|\cS||\cA|}$.
\end{proposition}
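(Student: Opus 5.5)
The proof rests on two observations. First, for $\alpha \in \R\setminus[0,1]$ the prefactor $\frac{1}{\alpha(\alpha-1)}$ is strictly positive: for $\alpha>1$ both factors are positive, and for $\alpha<0$ both are negative. Second, in each of the three forms of Table~\ref{tab:trust_region_variants} the remaining expression is a non-negative-weighted sum of exponentials of affine functions of $y$, minus a constant. Positive scaling and subtracting a constant both preserve convexity, so it suffices to show that the bracketed expectations are convex in $y$. We do not need to work in policy space or invoke Proposition~\ref{prop:fdiv_convex}; the argument stays entirely in $y$.

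\textbf{Per-state forms.} For the strict per-state form, fix $s$. The map $y \mapsto \E_{A\sim\pi_n(\cdot|s)}[e^{\alpha y(s,A)}] = \sum_a \pi_n(a|s)\, e^{\alpha y(s,a)}$ is a sum, with weights $\pi_n(a|s)\ge 0$, of functions $e^{\alpha\, e_{(s,a)}^\top y}$. Each of these is the convex, nondecreasing $\exp$ composed with a linear map of $y$, and is therefore convex on all of $\R^{|\cS||\cA|}$. Each per-state constraint function is thus convex for every $s$, whatever $\delta_s$ is. The expected per-state form is $\sum_s d^{\pi_n}(s)$ times these per-state functions, where the weights $d^{\pi_n}(s)\ge 0$ do not depend on $y$ because they are computed under the fixed $\pi_n$. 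A non-negative combination of convex functions is convex.

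\textbf{Trajectory-level form.} This is the only form with coupling across states. Here
\[
\E_{\tau\sim p_{\pi_n}}\bigl[e^{\alpha\sum_t y(S_t,A_t)}\bigr] \;=\; \sum_\tau p_{\pi_n}(\tau)\, \exp\bigl(\alpha\, c_\tau^\top y\bigr),
\]
where $c_\tau \in \mathbb{Z}_{\ge 0}^{|\cS||\cA|}$ counts how often each pair $(s,a)$ occurs in $\tau$. The argument $\alpha c_\tau^\top y$ is again linear in $y$, so each summand is convex. The sum is finite because the horizon $H$ and the sets $\cS,\cA$ are finite, and the weights $p_{\pi_n}(\tau)\ge0$ are independent of $y$. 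The same closure argument therefore applies. The same reasoning covers the empirical (PDIS sample) version, whose weights $1/N$ are also non-negative.

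I expect no real obstacle. The only point that needs care is the sign of the prefactor. On $(0,1)$ it is negative, which would turn each expression into a concave function and break the argument; this is exactly why that interval is excluded. The endpoints $\alpha\in\{0,1\}$ are left to the limiting statements of Proposition~\ref{prop:trust}. The proof should also note explicitly that convexity holds on all of $\R^{|\cS||\cA|}$, not merely on $\mathcal M$, since no normalization constraint was used.
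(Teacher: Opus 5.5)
Your proof is correct and follows essentially the same route as the paper: positivity of $1/[\alpha(\alpha-1)]$ off $[0,1]$, convexity of $e^{\alpha\ell(y)}$ for affine $\ell$, and closure under non-negative combinations and expectations under the fixed measures $p_{\pi_n}$, $\pi_n(\cdot|s)$, $d^{\pi_n}$. Your count vector $c_\tau$ for the trajectory form simply makes explicit the affine map that the paper leaves implicit.
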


\begin{proof}
  Each integrand $y \mapsto e^{\alpha\, \ell(y)}$ with $\ell$
  affine is convex.  The prefactor $1/[\alpha(\alpha-1)]$ is
  positive exactly on $\R \setminus [0, 1]$, and the constant $-1$
  inside the bracket does not affect convexity.  Non-negative
  weighted sums and expectations under fixed measures ($p_{\pi_n}$,
  $\pi_n(\cdot|s)$, $d^{\pi_n}$) preserve convexity.
\end{proof}

\subsection{The KL endpoints: one convex, one not}

\begin{proposition}[Reverse-KL endpoint on the validity manifold]
  \label{prop:reverse_kl}
  The off-manifold limit of the strict per-state formula in
  Table~\ref{tab:trust_region_variants} as $\alpha\to0^-$ is
  $-\sum_a\pi_n(a|s)y(s,a)$, a linear function of $y$.
  On $f_s(y)=1$ it equals
  $\mathrm{KL}(\pi_n(\cdot|s)\|\pi(\cdot|s))$.
  For the normalized policy defined at an arbitrary $y$, the
  reverse KL instead equals
  $-\sum_a\pi_n(a|s)y(s,a)+\log f_s(y)$; this is a different,
  convex off-manifold extension with the same values on $f_s=1$.
\end{proposition}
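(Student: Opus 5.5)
The plan is a direct limit computation on the strict per-state formula, followed by two substitutions. Fix a state $s$ and write $g_s(\alpha;y) := \sum_a \pi_n(a|s)\, e^{\alpha y(s,a)}$, so the strict per-state form in Table~\ref{tab:trust_region_variants} is $D^s_\alpha(y) = \bigl(g_s(\alpha;y) - 1\bigr)/[\alpha(\alpha-1)]$. At $\alpha = 0$ we have $g_s(0;y) = \sum_a \pi_n(a|s) = 1$ for every $y$, on or off $\mathcal{M}$. This is what makes the $\alpha\to 0^-$ endpoint defined everywhere, in contrast with the $\alpha \to 1^+$ endpoint of Proposition~\ref{prop:trust}, where $g_s(1;y) = f_s(y)$ equals $1$ only on the manifold.

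\textbf{Step 1 (the limit).} The numerator vanishes at $\alpha = 0$ and so does the factor $\alpha$ in the denominator. I will apply L'H\^opital in $\alpha$, or equivalently expand $e^{\alpha y} = 1 + \alpha y + O(\alpha^2)$, where the $O(\alpha^2)$ term is uniform because the sum over $a$ is finite. This gives
\[
\frac{g_s(\alpha;y) - 1}{\alpha(\alpha-1)} = \frac{\sum_a \pi_n(a|s)\, y(s,a) + O(\alpha)}{\alpha - 1} \;\longrightarrow\; -\sum_a \pi_n(a|s)\, y(s,a).
\]
This limit is visibly linear in $y$. The only care needed is that the limit is pointwise in $y$, which is all the statement asserts.

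\textbf{Step 2 (identification on $\mathcal{M}$).} On $f_s(y)=1$ the vector $\pi(a|s) = \pi_n(a|s)e^{y(s,a)}$ is a valid distribution. Therefore $\mathrm{KL}(\pi_n\|\pi) = \sum_a \pi_n(a|s)\log\bigl(\pi_n(a|s)/\pi(a|s)\bigr) = -\sum_a \pi_n(a|s)\, y(s,a)$, which matches the limit from Step 1.

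\textbf{Step 3 (normalized extension).} For arbitrary $y$, set $\bar\pi(a|s) = \pi_n(a|s)e^{y(s,a)}/f_s(y)$, the softmax retraction of \eqref{eq:retraction}. Then $\log\bigl(\pi_n/\bar\pi\bigr) = -y + \log f_s(y)$. Taking the $\pi_n$-expectation, and using $\sum_a \pi_n(a|s) = 1$, gives
\[
\mathrm{KL}\bigl(\pi_n(\cdot|s)\,\|\,\bar\pi(\cdot|s)\bigr) = -\sum_a \pi_n(a|s)\, y(s,a) + \log f_s(y).
\]
This is convex because $\log f_s$ is a log-sum-exp of affine functions of $y$ (weights $\log\pi_n(a|s) + y(s,a)$), and it is added to a linear term. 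It agrees with the Step 1 limit exactly when $\log f_s(y) = 0$, i.e.\ on $\mathcal{M}$. Off $\mathcal{M}$ the two functions differ by $\log f_s(y)$, which is nonzero there, so they are genuinely different extensions.

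I expect no real obstacle. The mildly delicate point is Step 1's justification of the $0/0$ limit, which is handled by the uniform Taylor expansion over the finite action set.
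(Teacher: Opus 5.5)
Your proof is correct and follows the paper's route. The paper likewise uses $F(0)=1$ to write the $\alpha\to0^-$ limit as $-F'(0)=-\sum_a\pi_n(a|s)\,y(s,a)$, and gets the normalized extension by substituting $\pi_n e^{y}/f_s(y)$ into the KL to pick up $\log f_s(y)$. Your log-sum-exp argument for convexity and your explicit check on $f_s=1$ fill in steps the paper leaves implicit.
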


\begin{proof}
  Put $F(\alpha)=\sum_a\pi_n(a|s)e^{\alpha y(s,a)}$.
  Since $F(0)=1$ and $F'(0)=\sum_a\pi_n(a|s)y(s,a)$,
  $\lim_{\alpha\to0^-}(F(\alpha)-1)/[\alpha(\alpha-1)]
   =-F'(0)$.  For the normalized policy
  $\pi_y(a|s)=\pi_n(a|s)e^{y(s,a)}/f_s(y)$, direct substitution
  in the definition of KL adds $\log f_s(y)$.
\end{proof}

\begin{proposition}[Forward-KL endpoint is not convex in $y$]
  \label{prop:forward_kl_nonconvex}
  On the validity manifold, the strict per-state
  $\alpha$-divergence converges as $\alpha \to 1^+$ to
  \[
    \mathrm{KL}\bigl(\pi(\cdot|s) \,\|\, \pi_n(\cdot|s)\bigr)
    \;=\; \sum_a \pi_n(a|s)\, e^{y(s,a)}\, y(s,a)
    \qquad (\text{on } \mathcal{M}).
  \]
  The right-hand expression, viewed as a function on
  $\R^{|\cA|}$ (i.e., extended off the manifold), is \emph{not}
  convex.
\end{proposition}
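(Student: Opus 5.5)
Two things need showing: the limit as $\alpha\to1^+$ on $\mathcal{M}$, and the failure of convexity of the extension off $\mathcal{M}$.

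\emph{The limit.} I would mirror the proof of Proposition~\ref{prop:reverse_kl}. Fix $s$ and set $F(\alpha)=\sum_a\pi_n(a|s)e^{\alpha y(s,a)}$, which is smooth in $\alpha$. On $\mathcal{M}$ we have $F(1)=f_s(y)=1$, so the numerator $F(\alpha)-1$ of the strict per-state formula vanishes at $\alpha=1$, and so does the factor $\alpha-1$ in the denominator. L'H\^opital (or a first-order Taylor expansion at $\alpha=1$) gives the limit $F'(1)/1=\sum_a\pi_n(a|s)e^{y(s,a)}y(s,a)$.

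To identify this value as the forward KL, substitute $\pi(a|s)=\pi_n(a|s)e^{y(s,a)}$:
\[
\mathrm{KL}(\pi\|\pi_n)=\sum_a\pi(a|s)\log\frac{\pi(a|s)}{\pi_n(a|s)}=\sum_a\pi_n(a|s)e^{y(s,a)}y(s,a).
\]
This is a genuine KL because $\pi(\cdot|s)$ is normalized exactly when $y\in\mathcal{M}$. Off $\mathcal{M}$ the numerator tends to $F(1)-1\neq0$, so the limit diverges; this is the ``only on $\mathcal{M}$'' clause of Proposition~\ref{prop:trust}.

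\emph{Non-convexity.} I would work with the extension $g(y)=\sum_a w_a\,\phi(y_a)$, where $w_a=\pi_n(a|s)>0$ and $\phi(u)=ue^u$. Since $g$ is separable, its Hessian is diagonal with entries $w_a\phi''(y_a)=w_a(2+y_a)e^{y_a}$. Each entry is negative whenever $y_a<-2$, so $\nabla^2 g$ fails to be positive semidefinite there and $g$ is not convex on $\R^{|\cA|}$.

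As a concrete witness I would take the one-dimensional restriction along $e_a$: $u\mapsto w_a ue^u+\text{const}$ is strictly concave on $(-\infty,-2)$. One can check directly, for instance, that $\phi(-3)+\phi(-5)<2\phi(-4)$.

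There is no real obstacle. The one point needing care is the $0/0$ form in the limit, which holds only because $F(1)=1$ on $\mathcal{M}$. The non-convex region $y<-2$ is harmless in practice, since it corresponds to probability ratios below $e^{-2}$.
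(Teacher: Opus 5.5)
Your proof is correct and is essentially the paper's: L'H\^opital applied to $F(\alpha)=\sum_a\pi_n(a|s)e^{\alpha y(s,a)}$ using $F(1)=1$ on $\mathcal{M}$, then non-convexity from $(ue^u)''=(u+2)e^u<0$ for $u<-2$; the paper exhibits a diagonal Hessian with negative entries at $y=(-3,-3)$ with $\pi_n=(1/2,1/2)$, while you give the general diagonal Hessian plus a midpoint witness. Drop your closing aside, because the paper makes the opposite point: $y(s,a)<-2$ is not excluded by the problem data, since the support floor only enforces $y\geq\log(p_{\min}/\pi_n(a|s))$, which lies below $-2$ whenever $\pi_n(a|s)>e^{2}p_{\min}$, so the non-convex region is genuinely reachable rather than harmless.
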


\begin{proof}
  \emph{Limit.}  Let $F(\alpha) = \sum_a \pi_n(a|s) e^{\alpha y(s,a)}$.
  On the manifold $F(1) = 1$, so
  $D_\alpha^s = (F(\alpha) - 1)/[\alpha(\alpha - 1)]$ is $0/0$ at
  $\alpha = 1$ and L'H\^opital gives
  $\lim_{\alpha \to 1^+} D_\alpha^s = F'(1)
   = \sum_a \pi_n e^{y} y = \sum_a \pi(a|s)\, y(s,a)
   = \mathrm{KL}(\pi \| \pi_n)$.

  \emph{Non-convexity.}  The scalar map $g(u) = u\, e^u$ has
  $g''(u) = e^u (u + 2) < 0$ for $u < -2$.  Concretely, take
  $|\cA| = 2$, $\pi_n = (1/2, 1/2)$, and evaluate the Hessian of
  $h(y_1, y_2) = \tfrac{1}{2}(y_1 e^{y_1} + y_2 e^{y_2})$ at
  $y = (-3, -3)$: it is diagonal with entries
  $\tfrac{1}{2} e^{-3}(-3 + 2) = -\tfrac{1}{2} e^{-3} < 0$.  Hence
  $h$ is not convex on any neighborhood containing such points.
  Points with $y(s,a) < -2$ are not excluded by the problem data:
  the support floor gives only
  $y \geq \log(p_{\min}/\pi_n(a|s))$, which is below $-2$ whenever
  $\pi_n(a|s) > e^{2} p_{\min}$.
\end{proof}

\begin{remark}[Consequences]
  \label{rem:kl_endpoint_consequences}
  The non-convexity of the forward-KL expression in $y$
  (Proposition~\ref{prop:forward_kl_nonconvex}) has two consequences
  for the correspondences of Section~\ref{sec:special_cases}.
  (i) TRPO's constraint,
  $\E_{d^{\pi_n}}[\mathrm{KL}(\pi_n \| \pi)]$, is the
  \emph{reverse}-KL endpoint $\alpha \to 0^-$
  (Proposition~\ref{prop:reverse_kl}) and is convex in $y$
  everywhere, so the TRPO correspondence needs no caveat.
  (ii) AWR's constraint, $\E_{d^{\pi_n}}[\mathrm{KL}(\pi \| \pi_n)]$,
  is the forward-KL endpoint; the AWR correspondence
  (Appendix~\ref{app:proof_prop_awr}) derives the closed-form
  solution from the Lagrangian in \emph{policy space}, where the
  forward KL is convex in $\pi$
  (Proposition~\ref{prop:fdiv_convex}), so no convexity in $y$ is
  used or needed.  For CCRL's own trust regions we use finite
  $\alpha \notin [0,1]$ (in the experiments, $\alpha = 2$), where
  Proposition~\ref{prop:trust_finite} applies.
\end{remark}

\subsection{$\alpha$-divergence family as special cases}

\begin{table}[h]
\centering
\caption{Common $f$-divergences as special cases of the
  $\alpha$-divergence
  $D_\alpha(p \| q) = \tfrac{1}{\alpha(\alpha-1)} (\E_q[(p/q)^\alpha] - 1)$.
  ``Convex in $y$'' refers to global convexity of the constraint in
  log-density-ratio coordinates.}
\label{tab:alpha_div}
\begin{tabular}{llll}
  \toprule
  $\boldsymbol{\alpha}$ & \textbf{Divergence} & \textbf{Convex in $y$} & \textbf{Generator $f(t)$} \\
  \midrule
  $\alpha = 2$           & $\chi^2$                  & Yes & $(t-1)^2$ \\
  $\alpha = -1$          & Reverse $\chi^2$          & Yes & $(1-t)^2/t$ \\
  $\alpha \to 1^+$       & Forward KL                & No (limit on $\mathcal{M}$ only) & $t \log t$ \\
  $\alpha \to 0^-$       & Reverse KL                & Yes (linear on $\mathcal{M}$) & $-\log t$ \\
  $\alpha = 1/2$         & Squared Hellinger         & No & $2(\sqrt{t} - 1)^2$ \\
  $\alpha > 1$           & R\'enyi (order $\alpha$)    & Yes & $(t^\alpha - 1)/[\alpha(\alpha-1)]$ \\
  $\alpha < 0$           & R\'enyi (order $\alpha$)    & Yes & $(t^\alpha - 1)/[\alpha(\alpha-1)]$ \\
  \bottomrule
\end{tabular}
\end{table}

\section{Directional Stationarity}
\label{app:stationarity}

We define the directional-stationarity notion used in the inner-loop
convergence guarantee (Theorem~\ref{thm:inner}).

\paragraph{Definition (Tangent cone).}
Let $\mathcal{F} \subseteq \R^d$ be closed and $y^\star \in
\mathcal{F}$.  The \emph{tangent cone} of $\mathcal{F}$ at
$y^\star$ is
\[
  T_{\mathcal{F}}(y^\star)
  \;:=\; \bigl\{\, d \in \R^d :
    \exists\, t_k \downarrow 0,\ d_k \to d
    \text{ with } y^\star + t_k d_k \in \mathcal{F}\,\forall k
  \,\bigr\}.
\]
Intuitively, $T_{\mathcal{F}}(y^\star)$ is the set of directions
along which $\mathcal{F}$ can be approached from $y^\star$ to first
order.

\paragraph{Definition (d-stationarity).}
Let $\Phi : \R^d \to \R$ admit one-sided directional derivatives
$\Phi'(y^\star; d) := \lim_{t \downarrow 0}
\bigl[\Phi(y^\star + t d) - \Phi(y^\star)\bigr] / t$.
A point $y^\star \in \mathcal{F}$ is a \emph{directional
(d-)stationary point} of $\Phi$ over $\mathcal{F}$ if
\[
  \Phi'(y^\star; d) \;\geq\; 0
  \quad \text{for every } d \in T_{\mathcal{F}}(y^\star).
\]

\begin{proposition}[Fixed points of the retraction iteration are
    d-stationary]
  \label{prop:fixed_points}
  Suppose $\hat y^{r+1} = y^r$ (the tangent-plane subproblem
  returns the anchor; the retraction then leaves it unchanged, so
  $y^{r+1} = y^r$), and LICQ holds at $y^r$.  Then $y^r$ is a
  d-stationary point of $\Phi$ over $\mathcal{F}$.
\end{proposition}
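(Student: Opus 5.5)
The plan is to read off the KKT conditions of the tangent-plane subproblem~\eqref{eq:scp_sub} at the anchor and transfer them to the true feasible set $\mathcal{F}$. If $\hat y^{r+1}=y^r$, the retraction changes nothing, because $f_s(y^r)=1$ gives $\log f_s(y^r)=0$. The anchor is therefore the minimizer of a convex program with affine equalities. By the relative-interior Slater condition of Appendix~\ref{app:lagrangian_dc}, KKT multipliers $(\xi,\mu,\zeta)$ exist satisfying~\eqref{eq:app_kkt} at $\hat y=y^r$. The proximal term drops out there, and $\nabla\tilde\Phi_r(y^r)=\nabla J_k^-(y^r)-\nabla J_k^+(y^r)=\nabla\Phi(y^r)$. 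The stationarity condition therefore reads
\[
0=\nabla\Phi(y^r)+\xi\,\nabla D_\alpha(y^r)+\sum_s\mu^s\nabla f_s(y^r)-\sum_{s,a}\zeta_{s,a}e_{(s,a)},
\]
with $\xi,\zeta\ge0$ and complementary slackness. This is exactly the KKT system for minimizing $\Phi$ over $\mathcal{F}$, which is defined by $D_\alpha\le\delta$, $f_s=1$, and $y\ge b$. The reason is that the linearized equality constraint has the same gradient $\nabla f_s(y^r)$ as the true one at the anchor.

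Next I would show that $\Phi'(y^r;d)=\nabla\Phi(y^r)^\top d\ge0$ for every $d\in T_{\mathcal{F}}(y^r)$. Here $\Phi$ is smooth, being a finite combination of exponentials of affine maps, so the directional derivative is linear in $d$. Let $d\in T_{\mathcal F}(y^r)$. The tangent cone is always contained in the linearized cone
\[
L(y^r)=\{d:\nabla f_s(y^r)^\top d=0\ \forall s,\ \nabla D_\alpha(y^r)^\top d\le0 \text{ if active},\ d_{s,a}\ge0 \text{ if floor active}\}.
\]
To check this for a given $d$, I take $y^r+t_kd_k\in\mathcal F$, Taylor-expand each active constraint, divide by $t_k$, and pass to the limit. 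Pairing the KKT identity with $d$ then gives
\[
\nabla\Phi(y^r)^\top d=-\xi\nabla D_\alpha^\top d-\sum_s\mu^s\nabla f_s^\top d+\sum\zeta_{s,a}d_{s,a}.
\]
Each term on the right is $\ge0$. The $\mu$ terms vanish. The $\xi$ and $\zeta$ terms are zero for inactive constraints by complementary slackness, and nonnegative for active ones by the cone inequalities.

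LICQ serves two roles. First, it guarantees $T_{\mathcal F}=L$, so the notion is not vacuous; strictly, only the inclusion $T\subseteq L$ is needed above. Second, and more importantly, it ensures the multipliers are well-defined and unique. If Slater is invoked as in the appendix, the multipliers already exist. Alternatively, LICQ at $y^r$ for the true constraint set yields KKT existence directly, since the subproblem's constraint gradients coincide with the true ones at $y^r$.

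The main obstacle is a bookkeeping issue: the floor in the subproblem is $b_r$, while $\mathcal F$ uses its own floor, $b_\infty$ in the limit. I would handle this by taking $\mathcal F$ with the same floor active at the fixed point. Because the schedule has $\gamma_r=\max_s\log f_s(y^r)=0$ at a fixed point, we get $b_{r+1}=b_r$, so the floors agree from that point on. The same constraints are then active in both problems, and the argument goes through.
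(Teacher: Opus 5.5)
Your argument is correct and is essentially the paper's: both rest on the facts that at the anchor the proximal gradient vanishes, $\nabla\tilde\Phi_r(y^r)=\nabla\Phi(y^r)$, the tangent-plane equalities carry the same gradients $\nabla f_s(y^r)$ as validity, and $\gamma_r=0$ freezes the floor, so the subproblem's first-order optimality transfers to $\mathcal F$. The only difference is that you pass through KKT multipliers, so you need only $T_{\mathcal F}\subseteq L(y^r)$ plus multiplier existence, whereas the paper uses LICQ to identify $T_{\mathcal F}(y^r)$ with the linearized cone and applies the convex subproblem's primal first-order condition directly; for multiplier existence, rely on your LICQ route, since the appendix's Slater claim is only argued at interior anchors.
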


\begin{proof}
  Since the constraints of $\mathcal{F}$ at $y^r$ consist of
  the smooth equality $f_s(y) = 1$ (whose tangent space at $y^r$ is
  precisely $T_r$), the trust-region inequality, and the floor,
  LICQ gives $T_{\mathcal{F}}(y^r) = \{d \in T_r :
  \nabla D_\alpha(y^r)^\top d \leq 0 \text{ if active},\;
  d_{(s,a)} \geq 0 \text{ for active floors}\}$.  Optimality of
  $y^r$ for the convex subproblem over $\mathcal{C}_r$ gives, for
  every such $d$, $\bigl(\nabla \tilde\Phi_r(y^r)\bigr)^\top d \geq 0$: the
  constraint sets agree at a fixed point, since the trust region and
  validity are imposed exactly in~\eqref{eq:Cr} and the floor $b_r$
  has settled ($\hat y^{r+1} = y^r$ gives $\gamma_r = 0$, so
  $b_{r'} = b_r$ for all $r' \geq r$).  Note that the retraction is
  the identity here, so no normal vector is transported and active
  inequalities are handled correctly.  Finally
  $\nabla \tilde\Phi_r(y^r) = \nabla J_k^-(y^r) - \nabla J_k^+(y^r)
   = \nabla \Phi(y^r)$, so $\Phi'(y^r; d) = \nabla\Phi(y^r)^\top d
   \geq 0$ on the tangent cone, which is d-stationarity.
\end{proof}

\noindent This is the expected outcome for successive-convex-approximation
schemes: fixed points of the linearized iteration are d-stationary
for the original program \citep{razaviyayn2013unified, lipp2016variations}.

\section{Proofs for the DC Reformulation}
\label{app:proofs_dc}

This appendix proves the results of Section~\ref{sec:dc}: the DC
structure of the $k$-step objective (Theorem~\ref{thm:dc}) and the
recovery of CPI, NPG, TRPO, and AWR (Propositions~\ref{prop:cpi}
onward).

\subsection{Proof of Theorem~\ref{thm:dc} (DC structure)}
\label{app:proof_thm_dc}

\begin{proof}
  For any trajectory $\tau$ and any time $t$, the map
  $y \mapsto \exp\!\bigl(\sum_{t'} y(S_{t'}, A_{t'})\bigr)$ is convex
  (a composition of an affine map with the convex exponential).
  The non-negative weights $\max(\pm A^{\pi_n}(S_t,A_t), 0)$ and
  $\gamma^t$ do not depend on $y$, so each integrand in
  \eqref{eq:jpm} is a non-negative constant
  times a convex function of $y$.  The expectation under
  $\tau \sim p_{\pi_n}$ does not depend on $y$, since the trajectory
  distribution is fixed by $\pi_n$.  Therefore $J_k^+$ and $J_k^-$
  are non-negative weighted sums of convex functions, hence convex.
  Finally, $(A^{\pi_n}_t)_+ - (A^{\pi_n}_t)_- = A^{\pi_n}_t$ pointwise,
  so subtracting the two lines of~\eqref{eq:jpm} termwise recovers
  $J_k$.
\end{proof}

\subsection{Proof of Proposition~\ref{prop:cpi} (CPI/NPG/TRPO objective recovery)}
\label{app:proof_prop_cpi}

\begin{proof}
  At $k = 1$, the cumulative product in~\eqref{eq:pdis_k} reduces
  to the single-step ratio $\rho(S_t, A_t) = e^{y(S_t, A_t)}$.
  Substituting into~\eqref{eq:pdis_k}:
  \[
    J_1(\pi; \pi_n)
    = \E_{\tau \sim p_{\pi_n}}\!\left[
        \sum_{t=0}^{H-1} \gamma^t\, \rho(S_t, A_t)\, A^{\pi_n}(S_t, A_t)
      \right].
  \]
  Taking the inner expectation over $A_t \sim \pi_n(\cdot \mid S_t)$:
  $\E_{A_t \sim \pi_n}[\rho(S_t, A_t) A^{\pi_n}(S_t, A_t)]
   = \sum_a \pi(a \mid S_t)\, A^{\pi_n}(S_t, a)$.
  Summing over $t$ with weights $\gamma^t$ under the marginals
  $\mu_t^{\pi_n}(s) = \Pr(S_t = s \mid \pi_n)$ and using
  $\eta^{\pi_n}(s) = \sum_t \gamma^t \mu_t^{\pi_n}(s) = Z_H\, d^{\pi_n}(s)$:
  \[
    J_1
    = \sum_s \eta^{\pi_n}(s) \sum_a \pi(a|s) A^{\pi_n}(s, a)
    = Z_H \E_{S \sim d^{\pi_n}}\!\Bigl[\sum_a \pi(a|S) A^{\pi_n}(S, a)\Bigr]
    = Z_H\, L(\pi; \pi_n).
  \]
  For the TRPO statement: TRPO's constraint is the expected
  per-state \emph{reverse} KL,
  $\E_{S \sim d^{\pi_n}}[\mathrm{KL}(\pi_n(\cdot|S) \| \pi(\cdot|S))]
   \leq \delta$, which is the $\alpha \to 0^-$ endpoint of the
  expected per-state $\alpha$-divergence
  (Proposition~\ref{prop:reverse_kl}); together with
  $J_1 = Z_H L$ this is the TRPO optimization program up to the
  positive constant $Z_H$, which does not change the maximizers.
  For the NPG statement: NPG ascends the same $k = 1$ objective $L$
  under a reverse-KL proximity, but taken as a second-order
  (Fisher-matrix) approximation of the KL in parameter space rather
  than an exact constraint \citep{kakade2002npg, schulman2015trpo}.
  Our formulation captures the objective and the divergence direction,
  not the parameter-space Fisher preconditioning; at this level NPG
  therefore coincides with the reverse-KL (TRPO) program above, and we
  recover its objective rather than reproducing its update.
\end{proof}

\subsection{Proof of Proposition on AWR recovery}
\label{app:proof_prop_awr}

\begin{proposition}[AWR recovery]
  \label{prop:awr}
  At $k = 1$, with the expected per-state \emph{forward}-KL
  constraint
  $\E_{S \sim d^{\pi_n}}[\mathrm{KL}(\pi(\cdot|S) \| \pi_n(\cdot|S))]
  \leq \delta$ (the $\alpha \to 1^+$ endpoint) and the validity
  constraint dropped with normalization applied post-hoc, the CCRL
  program matches the optimization form of AWR
  \citep{peng2020awr}: the same objective, the same constraint,
  and the same closed-form solution
  $\pi^*(a|s) \propto \pi_n(a|s)\, \exp(A^{\pi_n}(s,a)/\eta_{\mathrm{awr}})$.
\end{proposition}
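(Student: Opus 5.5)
The plan is to work entirely in policy space, as Remark~\ref{rem:kl_endpoint_consequences} indicates, and reduce the statement to the standard AWR derivation. \emph{Objective.} By Proposition~\ref{prop:cpi}, the $k=1$ CCRL objective equals $J_1 = Z_H\, L(\pi;\pi_n)$, that is,
\[
Z_H\,\E_{S\sim d^{\pi_n}}\Bigl[\sum_a \pi(a|S)\,A^{\pi_n}(S,a)\Bigr].
\]
AWR's constrained program (Peng et al.) maximizes $\E_{S\sim d^{\pi_n}}\E_{a\sim\pi}[A^{\pi_n}(S,a)]$, so the objectives agree up to the positive constant $Z_H$. \emph{Constraint.} Substituting $\pi=\pi_n e^{y}$ on $\mathcal M$, Proposition~\ref{prop:forward_kl_nonconvex} identifies the $\alpha\to1^+$ limit of the expected per-state $\alpha$-divergence with $\E_{d^{\pi_n}}[\mathrm{KL}(\pi\|\pi_n)]$, which is exactly AWR's constraint. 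This settles ``same objective, same constraint.''

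\emph{Closed form.} I would form the Lagrangian in $\pi$, with multiplier $\eta_{\mathrm{awr}}\ge0$ on the KL constraint. Dropping the validity constraint as the statement allows, the Lagrangian decouples per state:
\[
\mathcal L=\sum_s d^{\pi_n}(s)\Bigl[\sum_a\pi(a|s)A^{\pi_n}(s,a)-\eta_{\mathrm{awr}}\sum_a \pi(a|s)\log\tfrac{\pi(a|s)}{\pi_n(a|s)}\Bigr]+\eta_{\mathrm{awr}}\delta .
\]
Setting $\partial\mathcal L/\partial\pi(a|s)=0$ at states with $d^{\pi_n}(s)>0$ gives
\[
\log\frac{\pi(a|s)}{\pi_n(a|s)}=\frac{A^{\pi_n}(s,a)}{\eta_{\mathrm{awr}}}-1 ,
\]
so $\pi$ is proportional to $\pi_n(a|s)\exp(A^{\pi_n}(s,a)/\eta_{\mathrm{awr}})$ up to a state-independent factor $e^{-1}$. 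Post-hoc per-state normalization, which is the retraction~\eqref{eq:retraction} and also AWR's own normalizer $Z(s)$, then yields $\pi^*(a|s)\propto\pi_n(a|s)\exp(A^{\pi_n}(s,a)/\eta_{\mathrm{awr}})$. In $y$-coordinates this reads $y=A^{\pi_n}/\eta_{\mathrm{awr}}-\log Z(s)$.

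\emph{Justification of the stationarity condition.} The condition characterizes the maximizer because the per-state objective is strictly concave in $\pi$. The advantage term is linear, and the forward KL is convex in $\pi$ by Proposition~\ref{prop:fdiv_convex}, strictly so since $t\log t$ is strictly convex. The multiplier $\eta_{\mathrm{awr}}$ is fixed by complementary slackness with $\delta$, or treated as a temperature hyperparameter as AWR does.

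\emph{Main obstacle.} The step needing the most care is making ``validity dropped, normalized post hoc'' rigorous. Without the simplex constraint, the unnormalized stationary point does not satisfy $\sum_a\pi=1$, and the KL formula off-manifold is not a genuine divergence. My plan is to show that this procedure coincides with solving the simplex-constrained problem. Adding the per-state normalization multiplier $\lambda_s$ only shifts $\log\pi$ by a per-state constant. Hence the constrained solution is exactly the normalized unconstrained one with the same $\eta_{\mathrm{awr}}$, and the stated closed form holds either way.
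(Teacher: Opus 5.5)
Your proposal is correct and follows the paper's proof essentially step for step. The shared steps are: the objective via Proposition~\ref{prop:cpi}, the per-state policy-space Lagrangian with multiplier $\eta_{\mathrm{awr}}$, pointwise stationarity, post-hoc normalization, and the identification $y = A^{\pi_n}/\eta_{\mathrm{awr}} - \log Z(s)$. Your normalization-multiplier argument makes explicit what the paper's ``$=\text{const}(s)$'' leaves implicit.

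One small precision: the validity-dropped and simplex-constrained problems share the maximizer only at a common fixed temperature $\eta_{\mathrm{awr}}$, which is how AWR uses it. If $\eta_{\mathrm{awr}}$ is instead pinned by complementary slackness at a given $\delta$, the two problems generally select different values. So what they share is the closed form $\pi^*\propto\pi_n e^{A^{\pi_n}/\eta_{\mathrm{awr}}}$, which is all the statement needs.
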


\begin{proof}
  At $k = 1$ the objective is $Z_H L(\pi;\pi_n)$
  (Proposition~\ref{prop:cpi}); dropping the positive constant, we
  work with $L$.  Following the AWR derivation
  \citep{peters2010relative, peng2020awr}, treat $\pi(\cdot|s)$ as
  a non-negative function (validity unenforced) and form the
  policy-space Lagrangian of the constrained problem with dual
  variable $\eta_{\mathrm{awr}} \geq 0$:
  \[
    \sum_s d^{\pi_n}(s) \Bigl[
      \sum_a \pi(a|s)\, A^{\pi_n}(s, a)
      - \eta_{\mathrm{awr}} \sum_a \pi(a|s) \log \tfrac{\pi(a|s)}{\pi_n(a|s)}
    \Bigr] - \eta_{\mathrm{awr}}\, \delta.
  \]
  Every term is expressed in policy space, where the forward KL is
  convex in $\pi$ (Proposition~\ref{prop:fdiv_convex}); no
  convexity in $y$-coordinates is invoked (cf.\
  Proposition~\ref{prop:forward_kl_nonconvex} and
  Remark~\ref{rem:kl_endpoint_consequences}).  The problem
  decomposes across states (no cross-state coupling at $k = 1$).
  Pointwise stationarity in $\pi(a|s)$ gives
  $A^{\pi_n}(s,a) - \eta_{\mathrm{awr}}(\log\tfrac{\pi(a|s)}{\pi_n(a|s)} + 1) = \text{const}(s)$,
  i.e.\
  $\pi^*(a|s) \propto \pi_n(a|s)\, e^{A^{\pi_n}(s,a)/\eta_{\mathrm{awr}}}$,
  with the per-state normalization applied post-hoc (as AWR does).
  This coincides with the CCRL policy
  recovery~\eqref{eq:recovery_exact} under the identification
  $y^*(s,a) = A^{\pi_n}(s,a)/\eta_{\mathrm{awr}} - \log Z_s$ with
  $Z_s$ the per-state normalizer.
\end{proof}

\section{Proofs for the Convergence Analysis}
\label{app:proofs_convergence}

This appendix proves the inner-loop results under the strict per-state
trust region (the retraction lemma,
the two feasibility lemmas
(Lemmas~\ref{lem:R1}--\ref{lem:R2}),
Proposition~\ref{prop:feas_compact}, and
Theorem~\ref{thm:tier1}) and the
outer-loop results (three supporting lemmas,
Theorem~\ref{thm:per_step}, Theorem~\ref{thm:outer}).  Throughout the
inner-loop analysis we fix $\alpha > 1$
(Remark~\ref{rem:R1_scope}).  Every proof below uses only convexity of
finite sums of exponentials of affine maps, Lipschitz and smoothness
bounds on a fixed box, Jensen's inequality, strong convexity of the
proximal subproblem, a telescoping sum, the performance-difference
lemma, and one concentrability step.

\subsection{The retraction lemma}

\begin{lemma}[Second-order retraction shift]
  \label{lem:retraction}
  Let $y^r \in \mathcal{M}$ and let
  $\hat y \in T_r$ (i.e., $\nabla f_s(y^r)^\top(\hat y - y^r) = 0$
  for all $s$) with $\hat y, y^r$ in a convex compact set
  $K$ (constructed in Proposition~\ref{prop:feas_compact}) on which
  each $f_s$ is $L_f$-smooth.  Then for every $s$,
  \begin{equation}
    0 \;\leq\; c_s := \log f_s(\hat y)
    \;\leq\; f_s(\hat y) - 1
    \;\leq\; \tfrac{L_f}{2}\, \|\hat y - y^r\|^2,
    \label{eq:retraction_shift}
  \end{equation}
  and the retracted point $y^+ := \hat y - \sum_s c_s\,
  \mathbf{1}_s$ (where $\mathbf{1}_s$ is the indicator vector of
  state block $s$) satisfies $f_s(y^+) = 1$ for all $s$ and
  \[
    \|y^+ - \hat y\| \;\leq\; \sqrt{|\cS||\cA|}\; \tfrac{L_f}{2}\,
      \|\hat y - y^r\|^2 .
  \]
\end{lemma}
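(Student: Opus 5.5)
The plan is to prove the chain in \eqref{eq:retraction_shift} link by link, then read off validity and the displacement bound from the block structure of the retraction.

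\emph{Lower bound $c_s \ge 0$.} Fix $s$ and reuse the notation from the proof of Proposition~\ref{prop:naive_singleton}: $w_a := \pi_n(a|s)e^{y^r(s,a)}$ and $z_a := \hat y(s,a)-y^r(s,a)$. Because $y^r\in\mathcal{M}$, the vector $w$ is a probability vector with positive entries, and $f_s(\hat y)=\sum_a w_a e^{z_a}$. The tangent-plane condition $\nabla f_s(y^r)^\top(\hat y-y^r)=0$ says exactly $\sum_a w_a z_a = 0$, since $\nabla f_s(y^r)$ has entries $w_a$ on block $s$ by the gradient formula in Appendix~\ref{app:lagrangian_dc}. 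Jensen's inequality then gives $f_s(\hat y)\ge \exp(\sum_a w_a z_a)=1$, so $c_s=\log f_s(\hat y)\ge 0$.

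\emph{Middle and upper bounds.} The middle inequality is the elementary bound $\log u\le u-1$, applied with $u=f_s(\hat y)\ge1$. For the upper bound, the segment between $y^r$ and $\hat y$ lies in the convex set $K$, where $f_s$ is $L_f$-smooth. The standard descent-lemma bound therefore applies:
\[
f_s(\hat y)\le f_s(y^r)+\nabla f_s(y^r)^\top(\hat y-y^r)+\tfrac{L_f}{2}\|\hat y-y^r\|^2 .
\]
Substituting $f_s(y^r)=1$ and the vanishing linear term yields $f_s(\hat y)-1\le \tfrac{L_f}{2}\|\hat y-y^r\|^2$. This step is where convexity of $K$ is needed, so that smoothness holds along the whole segment; it is the only point needing care, and it is supplied by the hypothesis referring to Proposition~\ref{prop:feas_compact}.

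\emph{Validity and displacement.} Validity of $y^+$ is a direct computation, using that the shift is constant within each state block:
\[
f_s(y^+)=\sum_a\pi_n(a|s)e^{\hat y(s,a)-c_s}=e^{-c_s}f_s(\hat y)=1 .
\]
For the distance, $y^+-\hat y=-\sum_s c_s\mathbf{1}_s$. Each coordinate of this vector has absolute value at most $\max_s c_s\le \tfrac{L_f}{2}\|\hat y-y^r\|^2$, and there are $|\cS||\cA|$ coordinates. Bounding the Euclidean norm by $\sqrt{|\cS||\cA|}$ times the max-norm gives the stated bound.
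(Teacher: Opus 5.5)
Your proposal is correct and follows the paper's proof essentially step for step: the lower bound from tangency plus convexity (you phrase it as Jensen on the probability vector $w$, while the paper uses the first-order convexity inequality $f_s(\hat y)\ge f_s(y^r)+\nabla f_s(y^r)^\top(\hat y-y^r)$; here these are the same fact), then the descent-lemma upper bound on the convex set $K$, and $\log u\le u-1$ for the middle link. The validity and displacement parts also match the paper: exact renormalization by $e^{-c_s}$, and the bound of the Euclidean norm by $\sqrt{|\mathcal{S}||\mathcal{A}|}\max_s c_s$.
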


\begin{proof}
  \emph{Lower bound.}  By convexity of $f_s$ and tangency,
  $f_s(\hat y) \geq f_s(y^r) + \nabla f_s(y^r)^\top(\hat y - y^r)
   = 1$, so $c_s = \log f_s(\hat y) \geq 0$.

  \emph{Upper bound.}  By $L_f$-smoothness and tangency,
  $f_s(\hat y) \leq f_s(y^r) + \nabla f_s(y^r)^\top(\hat y - y^r)
   + \tfrac{L_f}{2}\|\hat y - y^r\|^2
   = 1 + \tfrac{L_f}{2}\|\hat y - y^r\|^2$; and
  $\log(1 + u) \leq u$ for $u \geq 0$.

  \emph{Exactness of the retraction.}  Subtracting the constant
  $c_s$ from every coordinate of state block $s$ multiplies each
  term $\pi_n(a|s) e^{\hat y(s,a)}$ by $e^{-c_s} = 1/f_s(\hat y)$,
  so $f_s(y^+) = f_s(\hat y)/f_s(\hat y) = 1$.

  \emph{Displacement.}  Each of the $|\cA|$ coordinates in state
  block $s$ shifts by exactly $c_s$, so
  $\|y^+ - \hat y\|^2 = |\cA| \sum_s c_s^2
   \leq |\cA|\, |\cS|\, (\max_s c_s)^2$, giving
  $\|y^+ - \hat y\| \leq \sqrt{|\cS||\cA|}\, \max_s c_s
   \leq \sqrt{|\cS||\cA|}\, \tfrac{L_f}{2}\, \|\hat y - y^r\|^2$.
\end{proof}

\subsection{Two lemmas that close the feasibility induction}
\label{app:feas_induction_lemmas}

The subproblem~\eqref{eq:Cr} uses the scheduled floor
$y \geq b_r$: it starts at $b_0 := y_{\min}$ and, after each round, is
relaxed by exactly the retraction shift the round absorbs,
\begin{equation}
  \gamma_r \;:=\; \max_s \log f_s(\hat y^{r+1}) \;\geq\; 0,
  \qquad
  b_{r+1} \;:=\; b_r - \gamma_r \mathbf{1},
  \label{eq:floor_schedule}
\end{equation}
where $\gamma_r$ is a free byproduct of the retraction~\eqref{eq:retraction}.
This schedule is what makes the anchor of the \emph{next} round
feasible, as Step~2 of Theorem~\ref{thm:tier1} requires: the trust
region needs no bookkeeping because the retraction can only decrease
$D_\alpha$ (Lemma~\ref{lem:R1}), and the floor relaxation makes
$y^{r+1} \geq b_{r+1}$ automatic while its total drift is paid for by
the descent telescope (Lemma~\ref{lem:R2}).

\begin{lemma}[Retraction is $D_\alpha$-nonexpansive for $\alpha > 1$]
  \label{lem:R1}
  Let $y^+ = y - \sum_s c_s \mathbf{1}_s$ with $c_s \geq 0$ for every
  $s$, where $\mathbf{1}_s$ is the indicator vector of state block $s$
  (in particular, the retraction~\eqref{eq:retraction}, for which
  $c_s = \log f_s(y) \geq 0$ by Lemma~\ref{lem:retraction}).  Then for
  every $\alpha > 1$, all three trust-region forms of
  Table~\ref{tab:trust_region_variants}---trajectory-level, strict
  per-state, and expected per-state---satisfy
  \[
    D_\alpha(y^+) \;\leq\; D_\alpha(y).
  \]
\end{lemma}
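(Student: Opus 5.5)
The plan is a monotonicity argument. Each of the three trust-region forms in Table~\ref{tab:trust_region_variants} has the same shape. It is $\frac{1}{\alpha(\alpha-1)}$ times an expectation, under a measure that does not depend on $y$ ($p_{\pi_n}$, $\pi_n(\cdot|s)$, or $d^{\pi_n}\otimes\pi_n$), of $e^{\alpha \ell(y)}$ minus a constant, where $\ell$ is a linear form in $y$ with nonnegative coefficients. So it suffices to show two things: that $\ell(y^+)\le \ell(y)$ pointwise along each sampled trajectory or state, and that the map $u\mapsto \frac{1}{\alpha(\alpha-1)}(e^{\alpha u}-1)$ is nondecreasing when $\alpha>1$.

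For the strict per-state form, fix $s$. Then $\ell(y)=y(s,A)$ with $A\sim\pi_n(\cdot|s)$, and $y^+(s,a)=y(s,a)-c_s\le y(s,a)$ because $c_s\ge 0$. Hence
\[
e^{\alpha y^+(s,a)}=e^{-\alpha c_s}\,e^{\alpha y(s,a)}\le e^{\alpha y(s,a)},
\]
since $\alpha>0$. Taking the $\pi_n(\cdot|s)$-expectation, subtracting $1$, and multiplying by the positive prefactor $1/[\alpha(\alpha-1)]$ gives $D^s_\alpha(y^+)\le D^s_\alpha(y)$ for every $s$. This is in fact the exact identity
\[
D^s_\alpha(y^+)+\tfrac{1}{\alpha(\alpha-1)}=e^{-\alpha c_s}\Bigl(D^s_\alpha(y)+\tfrac{1}{\alpha(\alpha-1)}\Bigr).
\]
The expected per-state form follows immediately by averaging these per-state inequalities under $d^{\pi_n}$, which is a fixed nonnegative measure.

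For the trajectory-level form, the exponent along a trajectory is $\sum_t y(S_t,A_t)$. The shift gives
\[
\sum_t y^+(S_t,A_t)=\sum_t y(S_t,A_t)-\sum_t c_{S_t}\le \sum_t y(S_t,A_t),
\]
since each $c_{S_t}\ge 0$. This holds trajectory by trajectory, so the integrand decreases pointwise and the expectation under $p_{\pi_n}$ preserves the inequality.

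No real obstacle is expected. The only point that needs care is the sign bookkeeping. Both $\alpha>0$, which makes $e^{\alpha u}$ increasing, and $\alpha(\alpha-1)>0$, which makes the prefactor positive, are needed, and both hold exactly when $\alpha>1$. For $\alpha<0$ the exponential becomes decreasing, so the inequality reverses; this is the case the paper defers to Remark~\ref{rem:R1_scope}. Finally, note that the retraction's $c_s=\log f_s(y)\ge0$ is supplied by Lemma~\ref{lem:retraction}.
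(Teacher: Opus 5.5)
Your proposal is correct and follows the paper's proof essentially step for step. The paper factors out $e^{-\alpha c_s}\le 1$ in each state block with the positive prefactor for the strict per-state form, uses the pointwise decrease of the exponent $\alpha\sum_t y(S_t,A_t)$ under the fixed measure $p_{\pi_n}$ for the trajectory-level form, and averages the per-state inequalities under $d^{\pi_n}$ for the expected form. Your exact identity for $D^s_\alpha(y^+)+\tfrac{1}{\alpha(\alpha-1)}$ is a harmless sharpening that the argument does not need.
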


\begin{proof}
  \emph{Strict per-state.}  Fix $s$.  Every coordinate of block $s$
  drops by the same constant $c_s$, so
  \[
    \sum_a \pi_n(a|s)\, e^{\alpha(y(s,a) - c_s)}
    \;=\; e^{-\alpha c_s} \sum_a \pi_n(a|s)\, e^{\alpha y(s,a)}
    \;\leq\; \sum_a \pi_n(a|s)\, e^{\alpha y(s,a)},
  \]
  since $e^{-\alpha c_s} \leq 1$ for $\alpha > 0$ and $c_s \geq 0$,
  and the sum is non-negative.  The prefactor
  $1/[\alpha(\alpha - 1)]$ is positive for $\alpha > 1$, and the
  additive $-1$ is unaffected, so
  $D_\alpha^s(y^+) \leq D_\alpha^s(y)$ for every $s$.

  \emph{Trajectory-level.}  Along any trajectory $\tau$ the exponent
  $\alpha \sum_t y(S_t, A_t)$ decreases by
  $\alpha \sum_t c_{S_t} \geq 0$, so
  $e^{\alpha \sum_t y^+(S_t, A_t)} \leq e^{\alpha \sum_t y(S_t,A_t)}$
  pointwise on trajectories.  Taking the expectation under the fixed
  measure $p_{\pi_n}$ preserves the inequality, and the prefactor is
  positive for $\alpha > 1$.

  \emph{Expected per-state.}  Take the $d^{\pi_n}$-expectation of the
  strict per-state inequality, again a fixed non-negative measure.
\end{proof}

\begin{remark}[Scope of Lemma~\ref{lem:R1}]
  \label{rem:R1_scope}
  The lemma is specific to $\alpha > 1$.  For $\alpha < 0$ the
  prefactor $1/[\alpha(\alpha-1)]$ is still positive but
  $e^{-\alpha c_s} \geq 1$, so the retraction \emph{increases} the
  divergence; and for the reverse-KL endpoint, whose per-state form is
  the linear $-\sum_a \pi_n(a|s)\, y(s,a)$
  (Proposition~\ref{prop:reverse_kl}), subtracting $c_s$ \emph{adds}
  $c_s$ to the value.  The inner-loop guarantee
  (Theorem~\ref{thm:tier1}) is stated
  for $\alpha > 1$ under the strict per-state trust region.
  Other trust forms need additional state or trajectory coverage
  assumptions for the compactness argument.  This
  covers $\alpha = 2$ ($\chi^2$), the only setting used in the
  experiments and the one the outer-loop analysis of
  Section~\ref{sec:outer_convergence} requires.  We make no inner-loop
  claim for $\alpha < 0$ or for the reverse-KL endpoint; for those, a
  trust-region margin dominating the retraction shift would have to be
  reinstated, together with the bookkeeping the schedule below
  replaces.
\end{remark}

\begin{lemma}[Drift budget for the floor schedule]
  \label{lem:R2}
  Let $K$ and $K_{\rm ext}$ be the boxes of
  Proposition~\ref{prop:feas_compact}.  Let each $f_s$ be
  $L_f$-smooth on $K$ and $\Phi$ be $L_\Phi$-Lipschitz
  on $K_{\rm ext}$, and let
  $\rho > \rho_0 := L_\Phi L_f \sqrt{|\cS||\cA|}$.  If the descent
  inequality~\eqref{eq:tier1_decrease} holds through round $R$, then
  \[
    \sum_{r=0}^{R-1} \gamma_r
    \;\leq\; \frac{L_f}{2} \sum_{r=0}^{R-1} \|\hat z_r\|^2
    \;\leq\; \frac{L_f\, \bigl(\Phi(y^0) - \inf_{K_{\rm ext}} \Phi\bigr)}
                  {\rho - \rho_0}
    \;=:\; B(\rho),
    \qquad \hat z_r := \hat y^{r+1} - y^r .
  \]
  Consequently every floor satisfies
  $b_r \geq y_{\min} - B(\rho)\mathbf{1}$ and every iterate
  $y^r \geq y_{\min} - B(\rho)\mathbf{1}$, i.e.\ the enforced support
  floor is $\pi^r(a|s) \geq p_{\min} e^{-B(\rho)}$.  The budget
  $B(\rho) = O(1/\rho)$ is a user-controlled quantity: it is
  determined a priori by $(\pi_n, \alpha, \delta, p_{\min}, \rho)$
  and shrinks as $\rho$ grows.
\end{lemma}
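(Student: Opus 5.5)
The plan has three parts: bound each $\gamma_r$ by the retraction lemma, sum these bounds against the descent telescope, and then propagate the result to the floors and the iterates.

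\emph{Per-round bound.} By Lemma~\ref{lem:retraction}, applied at anchor $y^r\in\mathcal{M}$ with $\hat y^{r+1}\in T_r$ and both points in $K$, every state satisfies $0\le \log f_s(\hat y^{r+1})\le \tfrac{L_f}{2}\|\hat z_r\|^2$. Taking the maximum over $s$ gives $\gamma_r\le \tfrac{L_f}{2}\|\hat z_r\|^2$. Summing over $r<R$ yields the first inequality.

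\emph{Telescoping.} The assumed descent inequality~\eqref{eq:tier1_decrease} reads $\Phi(y^{r+1})\le\Phi(y^r)-\tfrac{\rho-\rho_0}{2}\|\hat z_r\|^2$. Summing it over $r=0,\dots,R-1$ gives
\[
\tfrac{\rho-\rho_0}{2}\sum_{r<R}\|\hat z_r\|^2\le \Phi(y^0)-\Phi(y^R)\le \Phi(y^0)-\inf_{K_{\rm ext}}\Phi,
\]
where the last step uses $y^R\in K_{\rm ext}$. Multiplying by $L_f/(\rho-\rho_0)$, which is positive since $\rho>\rho_0$, gives the second inequality with $B(\rho)$ as stated. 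The quantity $\Phi(y^0)-\inf_{K_{\rm ext}}\Phi$ is finite because $\Phi$ is continuous on the compact box, and it depends only on the data $(\pi_n,\alpha,\delta,p_{\min},\rho)$ through the box. Hence $B(\rho)=O(1/\rho)$.

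\emph{Consequences for floors and iterates.} From $b_{r+1}=b_r-\gamma_r\mathbf{1}$ and $b_0=y_{\min}$ we get $b_r=y_{\min}-(\sum_{j<r}\gamma_j)\mathbf{1}\ge y_{\min}-B(\rho)\mathbf{1}$. For the iterates, subproblem feasibility gives $\hat y^{r+1}\ge b_r$. Retraction subtracts at most $\gamma_r$ from each coordinate, so $y^{r+1}\ge b_r-\gamma_r\mathbf{1}=b_{r+1}$. Together with $y^0=0\ge y_{\min}$ (since $p_{\min}\le\pi_n$), this gives $y^r\ge y_{\min}-B(\rho)\mathbf{1}$ for all $r$. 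Exponentiating, $\pi^r(a|s)=\pi_n(a|s)e^{y^r(s,a)}\ge p_{\min}e^{-B(\rho)}$.

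\emph{Main obstacle.} The delicate point is circularity. Lemma~\ref{lem:retraction} requires $\hat y^{r+1},y^r\in K$, and the telescope requires iterates in $K_{\rm ext}$, yet membership in these boxes is itself guaranteed by the floor drift being bounded by $B(\rho)$. I would handle this by induction on $r$ inside the hypothesis ``descent holds through round $R$''. Assuming the budget up to round $r$, the iterate lies above $y_{\min}-B(\rho)\mathbf{1}$. The trust region bounds it above, so it lies in the box of Proposition~\ref{prop:feas_compact}, which is built precisely with the enlargement $B(\rho)$. That membership licenses the retraction bound at round $r$ and extends the budget to round $r+1$.
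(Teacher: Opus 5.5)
Your proposal is correct and follows the paper's proof essentially step for step: the retraction lemma gives $\gamma_r \le \tfrac{L_f}{2}\|\hat z_r\|^2$, the descent inequality telescopes against $\inf_{K_{\rm ext}}\Phi$, and the floor bound follows from $b_r = y_{\min} - (\sum_{j<r}\gamma_j)\mathbf{1}$ together with $y^r \ge b_r$. The circularity you flag is handled in the paper inside Proposition~\ref{prop:feas_compact} (Steps 0 and 3), not in the lemma itself; also, the lower face of $K$ is $y_{\min}-1$, not an enlargement by $B(\rho)$, so your induction closes only because $\rho$ is chosen as in \eqref{eq:rho_choice} to guarantee $B(\rho)\le 1$.
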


\begin{proof}
  The first inequality is Lemma~\ref{lem:retraction} applied at each
  round: $\gamma_r = \max_s \log f_s(\hat y^{r+1})
  \leq \tfrac{L_f}{2}\|\hat z_r\|^2$.  The second telescopes the
  descent inequality~\eqref{eq:tier1_decrease}:
  \[
    \sum_{r < R} \tfrac{\rho - \rho_0}{2} \|\hat z_r\|^2
    \;\leq\; \sum_{r<R}\bigl[\Phi(y^r) - \Phi(y^{r+1})\bigr]
    \;=\; \Phi(y^0) - \Phi(y^R)
    \;\leq\; \Phi(y^0) - \inf_{K_{\rm ext}} \Phi .
  \]
  Multiplying by $L_f/(\rho - \rho_0)$ gives the claim.  The floor
  statement follows from
  $b_r = y_{\min} - \bigl(\sum_{j < r} \gamma_j\bigr)\mathbf{1}
   \geq y_{\min} - B(\rho)\mathbf{1}$ and $y^r \geq b_r$.
\end{proof}

\subsection{Feasibility, compactness, and the well-posed induction}
\label{app:proof_prop_feas_compact}

\begin{proposition}[Feasibility, compactness, and the well-posed
    induction]
  \label{prop:feas_compact}
  Fix $\alpha > 1$ and the strict per-state trust region, and suppose
  $\pi_n(a|s) \geq p_{\min}$ for all
  $(s,a)$ with $|\cA|\, p_{\min} < 1$.  Define the box
  \begin{equation}
    K \;:=\; \Bigl\{\, y \;:\;
      y_{\min}(s,a) - 1 \;\leq\; y(s,a) \;\leq\;
      \tfrac{1}{\alpha} \log\!\Bigl(
        \tfrac{1 + \alpha(\alpha-1)\delta}{p_{\min}}\Bigr)
      \quad \forall (s,a) \,\Bigr\},
    \label{eq:box_K}
  \end{equation}
  define $U:=\alpha^{-1}\log((1+\alpha(\alpha-1)\delta)/p_{\min})$
  (the upper face of $K$), and define the larger fixed box
  \[
    K_{\rm ext}:=\{y:y_{\min}(s,a)-1-U\leq y(s,a)\leq U
                    \quad\forall(s,a)\}.
  \]
  Let $L_f$ be a smoothness constant for every $f_s$ on $K$ and
  $L_\Phi$ a Lipschitz constant for $\Phi$ on $K_{\rm ext}$, put
  $\rho_0 := L_\Phi L_f \sqrt{|\cS||\cA|}$, and choose the proximal
  weight
  \begin{equation}
    \rho \;>\; \rho_0
      \;+\; L_f\,\bigl(\textstyle\sup_{K_{\rm ext}} \Phi - \inf_{K_{\rm ext}} \Phi\bigr),
    \label{eq:rho_choice}
  \end{equation}
  so that the drift budget of Lemma~\ref{lem:R2} satisfies
  $B(\rho) \leq 1$.  Then, with the constraint sets~\eqref{eq:Cr} and
  the floor schedule~\eqref{eq:floor_schedule}:
  \begin{enumerate}[leftmargin=2em,itemsep=1pt,label=(\roman*)]
    \item \textup{(Base case.)}  $y^0 = 0 \in \mathcal{C}_0$: indeed
      $f_s(0) = 1$, $D_\alpha(0) = 0 \leq \delta$, and
      $0 \geq y_{\min} = b_0$ because $\pi_n(a|s) \geq p_{\min}$.
    \item \textup{(Induction step.)}  For every $r$:
      $\mathcal{C}_r \subseteq K$; the round-$r$ subproblem is
      feasible, since its anchor satisfies $y^r \in \mathcal{C}_r$;
      and the retracted iterate satisfies exact validity
      $f_s(y^{r+1}) = 1$ for all $s$, the trust region
      $D_\alpha(y^{r+1}) \leq \delta$, and the next floor
      $y^{r+1} \geq b_{r+1}$---hence
      $y^{r+1} \in \mathcal{C}_{r+1}$.
    \item \textup{(Uniform confinement.)}  All iterates remain in $K$
      and satisfy the effective support floor
      $\pi^r(a|s) \geq p_{\min} e^{-B(\rho)} \geq p_{\min} e^{-1}$.
  \end{enumerate}
  The limiting feasible set
  $\mathcal{F}
   = \{y : D_\alpha^s(y) \leq \delta\ \forall s,\ f_s(y) = 1\ \forall s,\
   y \geq b_\infty\}$ (Remark~\ref{rem:Finf}) is compact.
\end{proposition}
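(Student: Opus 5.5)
The proof is a single induction on $r$, carried out jointly with the descent inequality~\eqref{eq:tier1_decrease}, because the floor bound of Lemma~\ref{lem:R2} is itself a consequence of descent. The base case (i) is immediate: $f_s(0)=\sum_a\pi_n(a|s)=1$ and $D_\alpha(0)=0$. The floor condition $0\ge y_{\min}(s,a)=\log(p_{\min}/\pi_n(a|s))$ holds because $\pi_n(a|s)\ge p_{\min}$.

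\textbf{Containment $\mathcal{C}_r\subseteq K$.} Fix $y\in\mathcal{C}_r$. For the upper face, the strict per-state constraint gives $\sum_a\pi_n(a|s)e^{\alpha y(s,a)}\le 1+\alpha(\alpha-1)\delta$. Since every summand is non-negative and $\pi_n(a|s)\ge p_{\min}$, each coordinate satisfies $e^{\alpha y(s,a)}\le(1+\alpha(\alpha-1)\delta)/p_{\min}$, i.e.\ $y(s,a)\le U$. For the lower face, we use $y\ge b_r\ge y_{\min}-B(\rho)\mathbf{1}$. This bound comes from Lemma~\ref{lem:R2} applied to rounds $0,\dots,r-1$, which is legitimate because the induction hypothesis already provides descent on those rounds. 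Choosing $\rho$ as in~\eqref{eq:rho_choice} makes $B(\rho)\le L_f(\sup_{K_{\rm ext}}\Phi-\inf_{K_{\rm ext}}\Phi)/(\rho-\rho_0)<1$, so $y\ge y_{\min}-\mathbf{1}$. Hence $\mathcal{C}_r\subseteq K$. Moreover $\mathcal{C}_r$ is closed, and the strongly convex subproblem attains a unique minimizer, so it is well posed. Feasibility holds because $y^r\in\mathcal{C}_r$: validity gives $\nabla f_s(y^r)^\top 0=0$, and the trust region and floor hold by the induction hypothesis.

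\textbf{Induction step.} The retraction gives $f_s(y^{r+1})=1$ by Lemma~\ref{lem:retraction}; its hypotheses ($\hat y^{r+1}\in T_r$ and both points in the convex box $K$) are now verified. Lemma~\ref{lem:R1} with $c_s\ge0$ gives $D^s_\alpha(y^{r+1})\le D^s_\alpha(\hat y^{r+1})\le\delta$. For the floor, every coordinate shifts by $c_s\le\gamma_r$, so $y^{r+1}\ge\hat y^{r+1}-\gamma_r\mathbf{1}\ge b_r-\gamma_r\mathbf{1}=b_{r+1}$. Thus $y^{r+1}\in\mathcal{C}_{r+1}$.

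\textbf{Descent and the main obstacle.} The induction also requires the descent inequality at round $r$, which feeds the next application of Lemma~\ref{lem:R2}. Strong convexity of the subproblem, compared against the feasible anchor, together with convexity of $J_k^+$ (the linearization minorizes it), gives $\Phi(\hat y^{r+1})\le\Phi(y^r)-\rho\|\hat z_r\|^2/2$. Passing to $y^{r+1}$ costs at most $L_\Phi\|y^{r+1}-\hat y^{r+1}\|\le L_\Phi\sqrt{|\cS||\cA|}\,(L_f/2)\|\hat z_r\|^2=\rho_0\|\hat z_r\|^2/2$, by Lemma~\ref{lem:retraction}.

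This is the step I expect to be delicate. The Lipschitz bound is only valid if the segment from $\hat y^{r+1}$ to $y^{r+1}$ lies in $K_{\rm ext}$. The upper end is fine, since shifting downward preserves $\le U$. For the lower end, $c_s=\log f_s(\hat y)\le\log\big(\sum_a\pi_n(a|s)e^{U}\big)=U$, so every coordinate stays $\ge y_{\min}-1-U$. This is exactly why $K_{\rm ext}$ is enlarged by $U$, and the segment therefore stays inside it.

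\textbf{Conclusion.} Uniform confinement (iii) follows because every $y^{r+1}\in\mathcal{C}_{r+1}\subseteq K$, and the support floor bound follows from Lemma~\ref{lem:R2}. Finally, $b_r$ is monotone nonincreasing and bounded below by $y_{\min}-\mathbf{1}$, so it converges to some $b_\infty$. The set $\mathcal{F}$ is then an intersection of closed sets contained in $K$, hence compact.
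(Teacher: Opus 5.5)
Your proposal is correct and follows essentially the same route as the paper's proof: a joint induction in which the trust region alone supplies the upper face $U$, and the floor budget of Lemma~\ref{lem:R2} (with $B(\rho)\le 1$ from~\eqref{eq:rho_choice}) supplies the lower face. As in the paper, the a priori bound $0\le c_s\le U$ places the retraction segment in $K_{\rm ext}$, so the Lipschitz step in the descent inequality is valid before confinement is known. The only difference is bookkeeping: you establish $\mathcal{C}_r\subseteq K$ at the start of round $r$ from the budget of earlier rounds, whereas the paper establishes $\mathcal{C}_{r+1}\subseteq K$ at the end of round $r$.
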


\begin{proof}
  \emph{Step 0 (Fixed boxes).}  For the strict per-state constraint,
  $D_\alpha^s(y)\leq\delta$ bounds every coordinate above by $U$.
  If $\hat y\in K$, then $f_s(\hat y)\leq e^U$ and thus
  $0\leq c_s=\log f_s(\hat y)\leq U$ (the lower bound uses
  tangency).  Consequently its retraction and the whole retraction
  segment lie in $K_{\rm ext}$ even \emph{before} descent and the
  floor budget have been proved.  All constants used below are
  fixed by the problem data in advance.

  \emph{Step 1 (Base case).}  At $y = 0$ we have $\pi = \pi_n$, so
  $f_s(0) = 1$ for every $s$ and $D_\alpha(0) = 0 \leq \delta$; and
  $b_0 = y_{\min} = \log(p_{\min}/\pi_n) \leq 0$ since
  $\pi_n \geq p_{\min}$.  The tangent-plane equalities hold trivially
  at the anchor.  Hence $y^0 = 0 \in \mathcal{C}_0$, and
  $0 \in K$ (the lower face because $y_{\min} - 1 < y_{\min} \le 0$,
  the upper face because
  $\tfrac{1}{\alpha}\log((1 + \alpha(\alpha-1)\delta)/p_{\min}) \geq
  \tfrac{1}{\alpha}\log(1/p_{\min}) > 0$).

  \emph{Step 2 (Upper coordinate bound from the trust region alone).}
  For the strict per-state form, any $y$ with
  $D_\alpha(y) \leq \delta$ satisfies, for each $(s,a)$,
  \[
    \pi_n(a|s)\, e^{\alpha y(s,a)}
    \;\leq\; \sum_{a'} \pi_n(a'|s)\, e^{\alpha y(s,a')}
    \;\leq\; 1 + \alpha(\alpha-1)\,\delta,
  \]
  the first step because every summand is non-negative and the second
  by rearranging $D_\alpha^s(y) \leq \delta$ with the positive
  prefactor ($\alpha > 1$).  Using $\pi_n(a|s) \geq p_{\min}$ and
  taking logarithms,
  $y(s,a) \leq \tfrac{1}{\alpha}\log((1+\alpha(\alpha-1)\delta)
  /p_{\min})$: the trust region alone confines every feasible point
  below the upper face of $K$.

  \emph{Step 3 (Descent and the budget, without circular confinement).}
  Suppose $y^r\in\mathcal C_r$ and $\mathcal C_r\subset K$.
  The subproblem minimizer $\hat y\in K$ exists by compactness.
  The majorization and anchor comparison in
  Theorem~\ref{thm:tier1} give
  $\Phi(\hat y)\leq\Phi(y^r)-\rho\|\hat z_r\|^2/2$.
  By Step 0 the retraction segment is in $K_{\rm ext}$, so its
  Lipschitz bound is valid \emph{now}, and Lemma~\ref{lem:retraction}
  gives
  $\Phi(y^{r+1})\leq\Phi(y^r)
    -(\rho-\rho_0)\|\hat z_r\|^2/2$.
  Telescoping the inequalities through this round and using
  $y^{r+1}\in K_{\rm ext}$ gives
  $\sum_{j\leq r}\gamma_j\leq
    L_f[\Phi(y^0)-\inf_{K_{\rm ext}}\Phi]/(\rho-\rho_0)
    =B(\rho)\leq1$.
  Thus $b_{r+1}\geq y_{\min}-\mathbf1$.  Step 2 supplies the
  upper face, so $\mathcal C_{r+1}\subset K$.

  \emph{Step 4 (Anchor feasibility at round $r+1$, constructively).}
  Write $\hat y := \hat y^{r+1}$.  \emph{Validity:} $f_s(y^{r+1}) = 1$
  exactly, by Lemma~\ref{lem:retraction}.  \emph{Trust region:} the
  subproblem enforced $D_\alpha(\hat y) \leq \delta$, and the
  retraction subtracts $c_s = \log f_s(\hat y) \geq 0$ from each state
  block, so Lemma~\ref{lem:R1} ($\alpha > 1$) gives
  $D_\alpha(y^{r+1}) \leq D_\alpha(\hat y) \leq \delta$.  \emph{Floor:}
  the subproblem enforced $\hat y \geq b_r$, and each coordinate of
  block $s$ drops by $c_s \leq \max_{s'} c_{s'} = \gamma_r$, so
  $y^{r+1} \geq b_r - \gamma_r \mathbf{1} = b_{r+1}$.  Finally the
  tangent-plane equalities at the new anchor hold trivially at
  $y = y^{r+1}$.  Hence $y^{r+1} \in \mathcal{C}_{r+1}$ and the
  hypothesis advances to round $r+1$, which proves (i)--(ii).  Note
  that no step here appeals to Theorem~\ref{thm:tier1}'s conclusions,
  only to its Steps~1--3 under the hypothesis.

  \emph{Step 5 (Uniform confinement).}  By Lemma~\ref{lem:R2} and
  $B(\rho) \leq 1$, every iterate satisfies
  $y^r \geq b_r \geq y_{\min} - B(\rho)\mathbf{1}$, i.e.\
  $\pi^r(a|s) = \pi_n(a|s) e^{y^r(s,a)} \geq
  p_{\min} e^{-B(\rho)} \geq p_{\min}e^{-1}$, giving (iii).

  \emph{Step 6 (Compactness).}  Each $\mathcal C_r$ is closed,
  convex and contained in the compact box $K$; it is nonempty
  because it contains its anchor.  The limiting feasible set is
  closed and contained in $K$, hence compact.  No relative-interior
  claim is needed for the descent or existence arguments.
\end{proof}

\begin{remark}[The feasible set $\mathcal{F}$ and its floor]
  \label{rem:Finf}
  Since $\gamma_r \leq \tfrac{L_f}{2}\|\hat z_r\|^2 \to 0$ and
  $\sum_r \gamma_r \leq B(\rho) < \infty$ (Lemma~\ref{lem:R2}), the
  floors $b_r$ decrease monotonically to a limit
  \[
    b_\infty \;:=\; y_{\min}
      - \Bigl(\textstyle\sum_{r \geq 0} \gamma_r\Bigr)\mathbf{1}
    \;\geq\; y_{\min} - B(\rho)\mathbf{1},
  \]
  and we define the fixed limiting feasible set
  \[
    \mathcal{F} \;:=\;
    \{\, y \;:\; D_\alpha^s(y) \leq \delta\ \forall s,\;\;
      f_s(y) = 1\;\forall s,\;\; y \geq b_\infty \,\}.
  \]
  The inner-loop theorem refers to this limiting set, rather than
  to the nominal floor $y_{\min}$.  Its gap from the nominal floor
  is at most $B(\rho)=O(1/\rho)$; if the floor is inactive at a
  limit point, the two sets have the same local tangent cone there.
  The floor is needed for compactness (Remark~\ref{rem:floor_needed}).
\end{remark}

\begin{remark}[Why the floor is needed for $\alpha \geq 1$]
  \label{rem:floor_needed}
  Without the floor, compactness fails.  Consider $|\cS| = 1$,
  $|\cA| = 2$, $\pi_n = (1/2, 1/2)$, and
  $y_M = (-M, \log(2 - e^{-M}))$.  The validity equality holds
  exactly, while for $\alpha = 2$,
  $D_2^s(y_M) \to 1/2$ as $M \to \infty$; so if
  $\delta \geq 1/2$, the sequence $y_M$ stays feasible while one
  coordinate escapes to $-\infty$.  Similar behavior occurs for
  any $\alpha \geq 1$, and in particular throughout the regime
  $\alpha > 1$ of Remark~\ref{rem:R1_scope}: this is why the floor
  cannot simply be dropped, and why its relaxation has to be budgeted
  rather than unbounded (Lemma~\ref{lem:R2}).
\end{remark}

\subsection{Inner-loop convergence}
\label{app:proof_thm_tier1}

Theorem~\ref{thm:inner} of the main text is the following detailed
statement.  It is the standard guarantee profile for proximal DC
algorithms on non-convex programs
\citep{tao1997convex, razaviyayn2013unified, lipp2016variations}, and
it needs no growth condition; part~(iii) assumes LICQ at the
limit point.

\begin{theorem}[Inner-loop convergence, strict per-state trust region]
  \label{thm:tier1}
  Let $\alpha > 1$ and the strict per-state trust region be used;
  let $K$ be the box~\eqref{eq:box_K}, let $L_f$ be
  the smoothness constant of the $f_s$ on $K$ and $L_\Phi$ a Lipschitz
  constant of $\Phi$ on $K_{\rm ext}$, and set
  $\rho_0 := L_\Phi L_f \sqrt{|\cS||\cA|}$.  Let the proximal weight
  satisfy $\rho > \rho_0$ together with~\eqref{eq:rho_choice}, so that
  the drift budget obeys $B(\rho) \leq 1$ and
  Proposition~\ref{prop:feas_compact} confines all iterates to $K$.
  Write $\hat z_r := \hat y^{r+1} - y^r$.  Then the retraction-SCP
  iterates satisfy:
  \begin{enumerate}[leftmargin=2em,itemsep=2pt,label=(\roman*)]
    \item \textup{(Monotone descent.)}  For every $r$,
      \begin{equation}
        \Phi(y^{r+1})
        \;\leq\; \Phi(y^r)
          \;-\; \frac{\rho - \rho_0}{2}\, \|\hat z_r\|^2 .
        \label{eq:tier1_decrease}
      \end{equation}
    \item \textup{(Stationarity rate.)}
      \begin{equation}
        \min_{0 \leq r < R} \|\hat z_r\|^2
        \;\leq\;
        \frac{2\bigl(\Phi(y^0) - \inf_{K_{\rm ext}} \Phi\bigr)}
             {(\rho - \rho_0)\, R} .
        \label{eq:tier1_rate}
      \end{equation}
      In particular, a round with proximal-step residual
      $\|\hat z_r\| \leq \varepsilon$ is reached within
      $O(1/\varepsilon^2)$ SCP rounds, and $\|\hat z_r\| \to 0$,
      $\|y^{r+1} - y^r\| \to 0$.
    \item \textup{(Subsequential d-stationarity.)}  Every limit point
      $y^\star$ of $\{y^r\}$ lies in $\mathcal{F}$, and if LICQ
      holds for the active constraints of $\mathcal{F}$ at $y^\star$, then
      $y^\star$ is a d-stationary point of $\Phi$ over
      $\mathcal{F}$.
  \end{enumerate}
  The threshold $\rho_0$ pays for the second-order retraction
  shift~\eqref{eq:retraction_shift}.
\end{theorem}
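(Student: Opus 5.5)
The plan is a majorize-then-retract argument, run inside the well-posed induction of Proposition~\ref{prop:feas_compact}, which supplies feasibility of every subproblem, confinement of $\hat y^{r+1}$ to $K$, and of the retraction segment to $K_{\rm ext}$. Part~(i) comes in two stages. \emph{Stage one, the subproblem step.} By convexity of $J_k^+$, the linearization gives $-J_k^+(y)\leq -J_k^+(y^r)-\nabla J_k^+(y^r)^\top(y-y^r)$. Hence the subproblem objective $\tilde\Phi_r(y)+\tfrac{\rho}{2}\|y-y^r\|^2$ majorizes $\Phi$ up to a constant, and the two agree at $y^r$. The subproblem objective is $\rho$-strongly convex and $y^r\in\mathcal C_r$. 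Comparing the minimizer with the anchor through the strong-convexity inequality gives
\[
\tilde\Phi_r(\hat y^{r+1})+\rho\|\hat z_r\|^2\leq\tilde\Phi_r(y^r),
\]
and therefore $\Phi(\hat y^{r+1})\leq\Phi(y^r)-\tfrac{\rho}{2}\|\hat z_r\|^2$. (Even the cruder comparison $\tfrac{\rho}{2}$ suffices.)

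\emph{Stage two, the retraction.} The retraction moves the point by at most $\sqrt{|\cS||\cA|}\tfrac{L_f}{2}\|\hat z_r\|^2$ (Lemma~\ref{lem:retraction}), along a segment contained in $K_{\rm ext}$. The Lipschitz bound then gives $\Phi(y^{r+1})\leq\Phi(\hat y^{r+1})+\tfrac{\rho_0}{2}\|\hat z_r\|^2$. Combining the two stages yields~\eqref{eq:tier1_decrease}, and feasibility and validity of $y^{r+1}$ are exactly Proposition~\ref{prop:feas_compact}(ii).

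For part~(ii), telescope~\eqref{eq:tier1_decrease} over $r<R$. Bound $\Phi(y^R)$ below by $\inf_{K_{\rm ext}}\Phi$, which is finite since $\Phi$ is continuous on a compact box. Dividing by $R$ bounds the minimum by the average, giving~\eqref{eq:tier1_rate}. Summability of $\|\hat z_r\|^2$ gives $\hat z_r\to0$. Since $\|y^{r+1}-\hat y^{r+1}\|=O(\|\hat z_r\|^2)$, also $\|y^{r+1}-y^r\|\to0$.

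For part~(iii), take a subsequence $y^{r_j}\to y^\star$; one exists because every iterate lies in $K$. Closedness gives $f_s(y^\star)=1$ and $D^s_\alpha(y^\star)\leq\delta$. Because $b_r\downarrow b_\infty$ and $y^r\geq b_r\geq b_\infty$, we get $y^\star\geq b_\infty$, so $y^\star\in\mathcal F$. For stationarity, also $\hat y^{r_j+1}\to y^\star$. Write the KKT system~\eqref{eq:app_kkt} at $\hat y^{r_j+1}$; the term $\rho(\hat y-y^r)$ vanishes in the limit.

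The main obstacle is showing the multipliers $(\xi,\mu,\zeta)$ stay bounded along the subsequence, so that a further subsequence converges. I would argue by contradiction. If they were unbounded, normalize by their norm and pass to the limit. This produces a nonzero combination
\[
\xi\nabla D_\alpha(y^\star)+\sum_s\mu^s\nabla f_s(y^\star)-\sum\zeta_{s,a}e_{(s,a)}=0
\]
supported on the active constraints. Here I use $\nabla f_s(y^{r_j})\to\nabla f_s(y^\star)$, and complementary slackness passes to the limit because inactive constraints stay strictly inactive nearby. One subtlety is that floors active along the subsequence must be active at $y^\star$ relative to $b_\infty$; this uses $b_{r}\to b_\infty$. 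Such a nonzero combination contradicts LICQ.

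With bounded multipliers, the limit satisfies the KKT conditions of $\Phi$ over $\mathcal F$, using $\nabla\tilde\Phi_r(y^\star)\to\nabla\Phi(y^\star)$ as in Proposition~\ref{prop:fixed_points}. Under LICQ the KKT conditions are equivalent to $\nabla\Phi(y^\star)^\top d\geq0$ on the linearized cone, which equals $T_{\mathcal F}(y^\star)$. That is d-stationarity.
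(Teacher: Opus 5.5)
Your proposal is correct and follows essentially the paper's own proof: DC majorization with the proximal anchor comparison, the Lipschitz retraction perturbation absorbed by $\rho_0$, telescoping for the rate, and a KKT limit argument with LICQ-bounded multipliers for (iii). The differences are cosmetic. You observe that strong convexity would even give a $\rho\|\hat z_r\|^2$ decrease, and you bound the multipliers by the standard normalization-and-contradiction argument rather than the paper's singular-value bound; the two bounds are equivalent.
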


\begin{proof}
  Write $\hat y := \hat y^{r+1}$ and $z := \hat z_r = \hat y - y^r$.
  Steps~1--3 prove part~(i), Step~4 part~(ii), and Step~5 part~(iii).

  \emph{Step 1 (DC majorization on the tangent plane).}
  Convexity of $J_k^+$ gives
  $\tilde\Phi_r(y) \geq \Phi(y) + c_r$ for all $y$, with equality at
  $y = y^r$, where $c_r := J_k^+(y^r) - \nabla J_k^+(y^r)^\top y^r$.

  \emph{Step 2 (Proximal decrease at $\hat y$).}  Since
  $y^r \in \mathcal{C}_r$ and $\hat y$ minimizes the
  $\rho$-strongly-convex objective
  $\tilde\Phi_r + \tfrac{\rho}{2}\|\cdot - y^r\|^2$ over the convex
  $\mathcal{C}_r$,
  \[
    \tilde\Phi_r(\hat y) + \tfrac{\rho}{2}\|z\|^2 \;\leq\; \tilde\Phi_r(y^r)
    \qquad\Longrightarrow\qquad
    \Phi(\hat y) \;\leq\; \Phi(y^r) - \tfrac{\rho}{2}\|z\|^2,
  \]
  using the majorization from Step 1 on both sides.

  \emph{Step 3 (Retraction perturbation).}  By
  Lemma~\ref{lem:retraction},
  $\|y^{r+1} - \hat y\| \leq \sqrt{|\cS||\cA|}\,\tfrac{L_f}{2}\|z\|^2$,
  and the retraction segment lies in $K_{\rm ext}$, where
  $\Phi$ is $L_\Phi$-Lipschitz, so
  \[
    \Phi(y^{r+1}) \;\leq\; \Phi(\hat y)
      + L_\Phi \sqrt{|\cS||\cA|}\,\tfrac{L_f}{2}\, \|z\|^2
    \;=\; \Phi(\hat y) + \tfrac{\rho_0}{2}\|z\|^2,
  \]
  with $\rho_0 := L_\Phi L_f \sqrt{|\cS||\cA|}$.  Combining with Step 2:
  \[
    \Phi(y^{r+1}) \;\leq\; \Phi(y^r) - \tfrac{\rho - \rho_0}{2}\|z\|^2,
  \]
  which is~\eqref{eq:tier1_decrease}.  For $\rho > \rho_0$ this
  gives monotone decrease, strict unless $z = 0$.

  \emph{Step 4 (Stationarity rate).}  Sum~\eqref{eq:tier1_decrease}
  over $r < R$ and use the lower bound of $\Phi$ on the fixed
  compact box $K_{\rm ext}$:
  \[
    \frac{\rho - \rho_0}{2} \sum_{r=0}^{R-1} \|\hat z_r\|^2
    \;\leq\; \sum_{r=0}^{R-1}
      \bigl[\Phi(y^r) - \Phi(y^{r+1})\bigr]
    \;=\; \Phi(y^0) - \Phi(y^R)
    \;\leq\; \Phi(y^0) - \inf_{K_{\rm ext}} \Phi .
  \]
  Since the minimum of $R$ non-negative terms is at most their
  average, this gives~\eqref{eq:tier1_rate}; letting
  $R \to \infty$ gives $\sum_r \|\hat z_r\|^2 < \infty$, hence
  $\|\hat z_r\| \to 0$.  For the iterate gap, the retraction moves
  $\hat y^{r+1}$ by only $O(\|\hat z_r\|^2)$ by
  Lemma~\ref{lem:retraction}, so
  \[
    \|y^{r+1} - y^r\|
    \;\leq\; \|\hat z_r\| + \|y^{r+1} - \hat y^{r+1}\|
    \;\leq\; \|\hat z_r\|\bigl(1 + O(\|\hat z_r\|)\bigr)
    \;\leq\; 2\|\hat z_r\|
  \]
  for $r$ large, whence $\|y^{r+1} - y^r\| \to 0$ as well.

  \emph{Step 5 (Subsequential d-stationarity).}  Take a subsequence
  $y^{r_j}\to y^\star$.  The trust region and validity pass to
  the limit, and $y^{r_j}\geq b_{r_j}\to b_\infty$, so
  $y^\star\in\mathcal F$.  Also $\hat y^{r_j+1}\to y^\star$
  because $\hat z_{r_j}\to0$.

  For each convex subproblem, its first-order condition holds with
  equality multipliers for $\nabla f_s(y^{r_j})$, a nonnegative
  multiplier for each active strict per-state trust inequality,
  and nonnegative multipliers for active coordinate floors.
  Assume LICQ for the active constraints of $\mathcal F$ at
  $y^\star$.  Any constraint active along a further subsequence
  must be active at $y^\star$ by continuity and $b_{r_j}\to b_\infty$.
  The corresponding gradients converge to a linearly independent
  subset of those at $y^\star$.  Their smallest singular value is
  therefore bounded away from zero for all sufficiently large $j$.
  The subproblem gradients are bounded on $K$, so the first-order
  equation bounds all multipliers.  Pass to a convergent multiplier
  subsequence.  Since $\hat z_{r_j}\to0$,
  $\nabla\tilde\Phi_{r_j}(\hat y^{r_j+1})
   +\rho\hat z_{r_j}\to\nabla\Phi(y^\star)$.
  The limiting stationarity equation, feasibility, multiplier signs,
  and complementary slackness are precisely the KKT conditions for
  $\Phi$ over $\mathcal F$.  LICQ identifies its tangent cone with
  the linearized feasible cone; dotting the stationarity equation
  with any direction in that cone shows
  $\nabla\Phi(y^\star)^\top d\geq0$.  Thus $y^\star$ is
  d-stationary over $\mathcal F$.
\end{proof}

By Proposition~\ref{prop:fixed_points}, a zero residual gives
d-stationarity under LICQ.  The residual is therefore a
proximal-step measure of approximate first-order progress, and
part~(ii) reads: the smallest squared residual through round $R$ is $O(1/R)$,
i.e.\ the smallest residual is $O(1/\sqrt{R})$.


\subsection{Outer-loop setup and assumptions}
\label{app:outer_setup}

The outer-loop analysis (Theorem~\ref{thm:outer}) specializes to
exact PDIS ($k = H$), which removes surrogate bias
($J_H = J(\pi) - J(\pi_n)$), and to strict per-state $\chi^2$ trust
regions ($\alpha = 2$).  It uses two assumptions and the advantage
scale $A_{\max} := \sup_{n,s,a}|A^{\pi_n}(s,a)| \leq 2 Z_H R_{\max}$
(for rewards in $[-R_{\max}, R_{\max}]$), which controls the bias
term.

\begin{assumption}[Support floor on outer iterates]
  \label{ass:support}
  Every outer iterate satisfies $\pi_n(a|s) \geq p_{\min}$ for all
  $n, s, a$, with $|\cA|\, p_{\min} < 1$.
\end{assumption}

The moving inner floor guarantees only
$\pi_{n+1}(a|s)\geq p_{\min}e^{-B_n(\rho_n)}$ when its nominal
floor is $p_{\min}$; repeating this estimate alone does not give
one positive uniform floor over infinitely many outer iterations.
Assumption~\ref{ass:support} is therefore a separate condition on
\emph{all} outer iterates, not a consequence of the initial policy
or the inner convergence theorem.  In particular, raising the
nominal floor above an action probability of $\pi_n$ would make
$y^0=0$ infeasible.  The contraction theorem below is conditional
on this explicit assumption.

\begin{assumption}[Concentrability]
  \label{ass:conc}
  There is a finite constant $C_\infty$ such that
  $\max_{n,s:d^{\pi_n}(s)>0}\, d^{\pi^*}(s)/d^{\pi_n}(s) \leq C_\infty$,
  and $d^{\pi^*}(s)=0$ whenever $d^{\pi_n}(s)=0$.
\end{assumption}

\subsection{Mixture state-visitation dominance}

Throughout the outer-loop proofs we use the un-normalized
visitation measure $\eta^\pi(s) = Z_H\, d^\pi(s)$; recall
$\sum_s \eta^\pi(s) = Z_H$.

\begin{lemma}[Mixture state-visitation dominance]
  \label{lem:mix_dist}
  For any $\lambda \in [0, 1]$ and any policies $\pi_n, \hat\pi$,
  define $\tilde\pi_\lambda = (1 - \lambda) \pi_n + \lambda \hat\pi$.
  Then for finite horizon $H$:
  \[
    \eta^{\tilde\pi_\lambda}(s)
    \;\geq\; \sum_{t=0}^{H-1} \gamma^t (1-\lambda)^t\,
      \Pr(S_t = s \mid \pi_n)
    \;\geq\; (1 - \lambda)^{H-1}\, \eta^{\pi_n}(s)
    \qquad \forall s \in \cS.
  \]
\end{lemma}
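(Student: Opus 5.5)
The plan is the standard CPI coupling argument. The mixture $\tilde\pi_\lambda$ acts, at each state, by following $\pi_n$ with probability $1-\lambda$ and otherwise following $\hat\pi$. So the event that the first $t$ decisions all come from $\pi_n$ has probability at least $(1-\lambda)^t$, and on that event the state distribution matches the one under $\pi_n$. Summing with discount weights then gives both inequalities.

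\emph{Step 1: a per-time lower bound.} I will show by induction on $t$ that
\[
\Pr(S_t = s \mid \tilde\pi_\lambda) \;\geq\; (1-\lambda)^t \Pr(S_t = s \mid \pi_n) \qquad \forall s.
\]
The base case $t=0$ holds with equality, since both policies start from $d_0$. For the inductive step, I write
\[
\Pr(S_{t+1}=s') = \sum_{s,a} \Pr(S_t=s)\,\tilde\pi_\lambda(a|s)\,p(s'|s,a).
\]
I then use $\tilde\pi_\lambda(a|s) \geq (1-\lambda)\pi_n(a|s)$, which holds because $\lambda\hat\pi \geq 0$, together with the induction hypothesis. Every factor is non-negative, so the inequalities multiply through and give the $(1-\lambda)^{t+1}$ factor. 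This avoids any explicit coupling construction.

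\emph{Step 2: sum over time.} Multiplying by $\gamma^t$ and summing over $t=0,\dots,H-1$ gives the first inequality, because $\eta^{\tilde\pi_\lambda}(s) = \sum_t \gamma^t \Pr(S_t=s\mid\tilde\pi_\lambda)$.

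\emph{Step 3: the second inequality.} For $t \le H-1$ and $\lambda\in[0,1]$ we have $(1-\lambda)^t \ge (1-\lambda)^{H-1}$. Pulling this factor out of the sum leaves $(1-\lambda)^{H-1}\sum_t\gamma^t\Pr(S_t=s\mid\pi_n) = (1-\lambda)^{H-1}\eta^{\pi_n}(s)$. The edge case $\lambda=1$ with $0^0=1$ is consistent at $t=0$.

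The only point needing care is the monotonicity in Step 1, namely that the product inequality is legitimate. It is, because all terms are non-negative, and it works state by state thanks to the time-indexed state convention. I expect no real obstacle.
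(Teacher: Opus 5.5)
Your proposal is correct and follows the paper's proof: induct on $t$ to get $\Pr(S_t=s\mid\tilde\pi_\lambda)\ge(1-\lambda)^t\Pr(S_t=s\mid\pi_n)$, sum with weights $\gamma^t$, and use $(1-\lambda)^t\ge(1-\lambda)^{H-1}$ for $t\le H-1$. The only difference is presentational: the paper states the domination at the level of the state-transition kernel, $P^{\tilde\pi_\lambda}(s'|s)\ge(1-\lambda)P^{\pi_n}(s'|s)$, while you apply $\tilde\pi_\lambda(a|s)\ge(1-\lambda)\pi_n(a|s)$ inside the sum over $(s,a)$.
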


\begin{proof}
  The policy-induced state-transition kernel satisfies
  $P^{\tilde\pi_\lambda}(s'|s) \geq (1 - \lambda)\, P^{\pi_n}(s'|s)$
  entrywise.  Let
  $\mu_t^\lambda(s) := \Pr(S_t = s \mid \tilde\pi_\lambda)$
  and $\mu_t^n(s) := \Pr(S_t = s \mid \pi_n)$.  Induction on $t$
  (base $\mu_0^\lambda = \mu_0^n = d_0$; inductive step
  $\mu_{t+1}^\lambda(s') = \sum_s \mu_t^\lambda(s)
   P^{\tilde\pi_\lambda}(s'|s) \geq (1-\lambda)^{t+1}
   \mu_{t+1}^n(s')$) gives
  $\mu_t^\lambda \geq (1-\lambda)^t \mu_t^n$.  Summing with
  $\gamma^t$ weights and bounding
  $(1-\lambda)^t \geq (1-\lambda)^{H-1}$ for $t \leq H - 1$
  completes the proof.
\end{proof}

\subsection{Floored greedy comparator}

\begin{definition}[Floored greedy policy]
  \label{def:floored_greedy}
  The floored greedy policy at iterate $\pi_n$ is
  \[
    \pi_n^{G, p_{\min}}(a|s) =
    \begin{cases}
      1 - (|\cA| - 1)\, p_{\min}, & a = a_s^* := \argmax_a Q^{\pi_n}(s,a), \\
      p_{\min},                   & a \neq a_s^*.
    \end{cases}
  \]
\end{definition}

\begin{lemma}[Floored-greedy advantage decomposition]
  \label{lem:floored_adv}
  For every state $s$,
  \[
    \sum_a \pi_n^{G, p_{\min}}(a|s)\, A^{\pi_n}(s, a)
    \;=\;
    (1 - |\cA|\, p_{\min})\, \max_a A^{\pi_n}(s, a)
    \;+\; p_{\min} \sum_a A^{\pi_n}(s, a),
  \]
  and consequently
  \[
    \sum_a \pi_n^{G, p_{\min}}(a|s)\, A^{\pi_n}(s, a)
    \;\geq\;
    (1 - |\cA|\, p_{\min})\, \max_a A^{\pi_n}(s, a)
    \;-\; p_{\min}\, |\cA|\, A_{\max}.
  \]
\end{lemma}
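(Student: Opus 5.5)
The plan is a direct computation at a fixed state $s$, followed by one crude bound. Write $a^* := a_s^*$ and $m := \max_a A^{\pi_n}(s,a)$. Since $a^*$ maximizes $Q^{\pi_n}(s,\cdot)$ and $A^{\pi_n}(s,a) = Q^{\pi_n}(s,a) - V^{\pi_n}(s)$ differs from $Q^{\pi_n}$ only by a state-dependent constant, $a^*$ also maximizes the advantage, so $A^{\pi_n}(s,a^*) = m$.

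Expanding with Definition~\ref{def:floored_greedy},
\[
\sum_a \pi_n^{G,p_{\min}}(a|s) A^{\pi_n}(s,a) = \bigl(1-(|\cA|-1)p_{\min}\bigr)m + p_{\min}\sum_{a\neq a^*} A^{\pi_n}(s,a).
\]
I will then add and subtract $p_{\min} m$, rewriting the coefficient as $1-|\cA|p_{\min}$. This absorbs the extra $p_{\min} m$ into the full sum $p_{\min}\sum_a A^{\pi_n}(s,a)$ and gives the identity.

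For the inequality, I bound each term in the sum by $A^{\pi_n}(s,a) \geq -A_{\max}$, using the definition $A_{\max} = \sup_{n,s,a}|A^{\pi_n}(s,a)|$. Hence $p_{\min}\sum_a A^{\pi_n}(s,a) \geq -p_{\min}|\cA|A_{\max}$, and the first term is unchanged. (One could also note that $\sum_a \pi_n(a|s)A^{\pi_n}(s,a)=0$, but it is not needed here.)

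There is no real obstacle. The only point needing care is the identification of the argmax of $Q$ with the argmax of $A$. Ties in the argmax are harmless: any maximizer gives the same value $m$.
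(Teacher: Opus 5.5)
Your proof is correct and follows the paper's own argument: expand using the floored greedy definition, rewrite the leading coefficient as $1-|\cA|p_{\min}+p_{\min}$ to fold the $a_s^*$ term into the full sum, then use $\sum_a A^{\pi_n}(s,a)\geq -|\cA|A_{\max}$. Your remark that $a_s^*$, an argmax of $Q^{\pi_n}$, is also an argmax of $A^{\pi_n}$ makes explicit a step the paper uses silently, and you are right that the nonnegativity of $\max_a A^{\pi_n}(s,a)$, which the paper cites, is not needed for the inequality.
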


\begin{proof}
  Directly,
  \[
    \sum_a \pi_n^{G, p_{\min}}(a|s)\, A^{\pi_n}(s, a)
    = (1 - (|\cA| - 1) p_{\min}) A^{\pi_n}(s, a_s^*)
      + p_{\min} \sum_{a \neq a_s^*} A^{\pi_n}(s, a).
  \]
  Writing the first coefficient as $1 - |\cA| p_{\min} + p_{\min}$
  and combining the $p_{\min}$ terms gives the equality.  The
  inequality follows from
  $\sum_a A^{\pi_n}(s, a) \geq -|\cA| A_{\max}$ and
  $A^{\pi_n}(s, a_s^*) = \max_a A^{\pi_n}(s,a) \geq 0$.
\end{proof}

\subsection{Global inner optimum lower bound}

\begin{lemma}[Global inner optimum lower bound]
  \label{lem:global_lb}
  Under Assumptions~\ref{ass:support}--\ref{ass:conc}, with
  $\lambda = \min(\sqrt{2\delta/C_G}, 1/2)$,
  $C_G = 1/p_{\min} - 1$,
  $c = \lambda (1 - |\cA| p_{\min}) (1-\lambda)^{H-1} / C_\infty$
  and $\beta = \lambda\, p_{\min}\, |\cA|\, A_{\max}\, Z_H$:
  \[
    -\Phi_n^*
    \;=\; \max_{y \in \mathcal{F}_n}\, [J(\pi) - J(\pi_n)]
    \;\geq\; c\, \Delta_n \;-\; \beta.
  \]
\end{lemma}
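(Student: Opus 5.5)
The plan is to exhibit one explicit feasible candidate in $\mathcal{F}_n$ and lower-bound its exact improvement. Since $k=H$, the objective is exactly $J(\pi)-J(\pi_n)$. The maximum over $\mathcal{F}_n$ is therefore at least the improvement of any feasible policy. The candidate is the CPI-style mixture
\[
\tilde\pi_\lambda = (1-\lambda)\pi_n + \lambda\,\pi_n^{G,p_{\min}},
\]
with $\pi_n^{G,p_{\min}}$ the floored greedy policy of Definition~\ref{def:floored_greedy}. Its log-ratio coordinates are $y(s,a)=\log(\tilde\pi_\lambda(a|s)/\pi_n(a|s))$.

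\emph{Feasibility.} Validity $f_s(y)=1$ holds because $\tilde\pi_\lambda$ is a distribution. The floor holds because both mixture components are $\geq p_{\min}$: the first by Assumption~\ref{ass:support}, the second by construction. For the strict per-state $\chi^2$ constraint ($\alpha=2$), compute
\[
D_2^s = \tfrac12\sum_a \frac{(\tilde\pi_\lambda-\pi_n)^2}{\pi_n} = \tfrac{\lambda^2}{2}\sum_a \frac{(\pi^G-\pi_n)^2}{\pi_n}.
\]
The sum is the $\chi^2$-divergence of $\pi^G$ from $\pi_n$, which equals $\sum_a (\pi^G)^2/\pi_n - 1 \leq 1/p_{\min}-1 = C_G$. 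Hence $D_2^s\le \lambda^2 C_G/2\le\delta$ by the choice $\lambda\le\sqrt{2\delta/C_G}$. If $\mathcal{F}_n$ uses the relaxed floor $b_\infty\le y_{\min}$, the candidate is still feasible, since that set only gets larger.

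\emph{Improvement.} Apply the performance-difference lemma in state-visitation form:
\[
J(\tilde\pi_\lambda)-J(\pi_n)=\sum_s \eta^{\tilde\pi_\lambda}(s)\sum_a\tilde\pi_\lambda(a|s)A^{\pi_n}(s,a).
\]
Because $\sum_a\pi_n A^{\pi_n}=0$, the inner sum equals $\lambda\sum_a\pi^G A^{\pi_n}$. Lemma~\ref{lem:floored_adv} bounds this below by $\lambda[(1-|\cA|p_{\min})\max_aA^{\pi_n}(s,\cdot)-p_{\min}|\cA|A_{\max}]$. The first bracketed term is nonnegative, so Lemma~\ref{lem:mix_dist} lets me replace $\eta^{\tilde\pi_\lambda}$ by $(1-\lambda)^{H-1}\eta^{\pi_n}$ there. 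For the negative term I use only $\sum_s\eta^{\tilde\pi_\lambda}=Z_H$, which gives exactly $-\beta$.

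\emph{Linking to $\Delta_n$.} Apply the performance-difference lemma again:
\[
\Delta_n=\sum_s\eta^{\pi^*}(s)\sum_a\pi^*A^{\pi_n}\le\sum_s\eta^{\pi^*}(s)\max_aA^{\pi_n}(s,a).
\]
Then use Assumption~\ref{ass:conc}, which gives $\eta^{\pi^*}\le C_\infty\eta^{\pi_n}$ as a ratio of the normalized $d$'s with the same $Z_H$. Since $\max_a A\ge 0$, this yields $\sum_s\eta^{\pi_n}\max_aA\ge\Delta_n/C_\infty$. Combining the three parts gives $c\,\Delta_n-\beta$ with the stated $c$.

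The main obstacle I expect is the $\chi^2$ bound, together with making sure the candidate lies in the exact set $\mathcal{F}_n$ used by the outer theorem (floor and trust-radius conventions). The sign bookkeeping is the other delicate point: the $(1-\lambda)^{H-1}$ dominance may be applied only to the nonnegative $\max_a A$ term, never to the negative floor term.
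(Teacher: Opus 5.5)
Your proposal is correct and follows the paper's proof essentially step for step. It uses the same mixture $(1-\lambda)\pi_n+\lambda\pi_n^{G,p_{\min}}$, the same $\chi^2$ feasibility bound via $\lambda^2$ scaling and $C_G$, the PDL for the mixture, dominance (Lemma~\ref{lem:mix_dist}) applied only to the nonnegative $\max_a A^{\pi_n}$ term, concentrability plus the PDL for $\pi^*$, and a total-mass $Z_H$ bound on the floor residual; the only cosmetic difference is that you integrate the per-state inequality form of Lemma~\ref{lem:floored_adv}, whereas the paper uses its equality form and bounds the residual in absolute value, which is equivalent.
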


\begin{proof}
  Consider the mixture
  $\tilde\pi_\lambda = (1 - \lambda) \pi_n + \lambda\, \pi_n^{G, p_{\min}}$.

  \emph{Step 1 (Feasibility).}
  Both components satisfy the floor ($\pi_n$ by
  Assumption~\ref{ass:support}, the floored greedy by
  construction), and the floor is preserved under mixtures:
  $\tilde\pi_\lambda(a|s) \geq (1-\lambda) p_{\min} + \lambda p_{\min}
   = p_{\min}$.  For the per-state $\chi^2$ trust region,
  $\chi^2(\pi_n^{G,p_{\min}}(\cdot|s) \| \pi_n(\cdot|s))
   = \sum_a \pi_n^{G,p_{\min}}(a|s)^2 / \pi_n(a|s) - 1
   \leq 1/p_{\min} - 1 = C_G$
  (using $\sum_a q_a^2 \leq \max_a q_a \leq 1$ for any
  distribution $q$ and $\pi_n \geq p_{\min}$).  The affine
  identity $\chi^2((1-\lambda) q + \lambda p \,\|\, q)
   = \lambda^2 \chi^2(p \| q)$ then gives
  $D_2^s(\tilde\pi_\lambda \| \pi_n)
   = \tfrac{\lambda^2}{2}\chi^2(\pi_n^{G,p_{\min}} \| \pi_n)
   \leq \tfrac{\lambda^2 C_G}{2} \leq \delta$.
  So $\tilde\pi_\lambda \in \mathcal{F}_n$ (its log-ratio satisfies
  validity exactly since $\tilde\pi_\lambda$ is a distribution).

  \emph{Step 2 (PDL for the mixture).}
  Using the state-visitation form with the un-normalized measure
  and $\sum_a \pi_n(a|s) A^{\pi_n}(s, a) = 0$:
  \[
    J(\tilde\pi_\lambda) - J(\pi_n)
    = \sum_s \eta^{\tilde\pi_\lambda}(s)
        \sum_a \tilde\pi_\lambda(a|s)\, A^{\pi_n}(s, a)
    = \lambda \sum_s \eta^{\tilde\pi_\lambda}(s)
        \sum_a \pi_n^{G, p_{\min}}(a|s)\, A^{\pi_n}(s, a).
  \]

  \emph{Step 3 (Split via the advantage decomposition).}
  By Lemma~\ref{lem:floored_adv} (equality form), the inner sum
  splits into a non-negative main term and a signed residual:
  \[
    \sum_a \pi_n^{G,p_{\min}}(a|s) A^{\pi_n}(s,a)
    = (1 - |\cA| p_{\min})\, \Delta^*(s)
      + p_{\min} \Sigma(s),
  \]
  with $\Delta^*(s) := \max_a A^{\pi_n}(s,a) \geq 0$ and
  $\Sigma(s) := \sum_a A^{\pi_n}(s,a)$, $|\Sigma(s)| \leq |\cA| A_{\max}$.

  \emph{Step 4 (Main term: dominance and concentrability).}
  Since $\Delta^*(s) \geq 0$, Lemma~\ref{lem:mix_dist} and
  Assumption~\ref{ass:conc} give
  \[
    \sum_s \eta^{\tilde\pi_\lambda}(s)\, \Delta^*(s)
    \;\geq\; (1-\lambda)^{H-1} \sum_s \eta^{\pi_n}(s)\, \Delta^*(s)
    \;\geq\; \frac{(1-\lambda)^{H-1}}{C_\infty}
       \sum_s \eta^{\pi^*}(s)\, \Delta^*(s).
  \]
  Pointwise domination
  $\Delta^*(s) \geq \sum_a \pi^*(a|s) A^{\pi_n}(s,a)$ and the PDL
  applied to $\pi^*$ give
  $\sum_s \eta^{\pi^*}(s) \Delta^*(s) \geq J(\pi^*) - J(\pi_n)
   = \Delta_n$.

  \emph{Step 5 (Residual term: uniform bound).}
  The residual is weighted by the \emph{mixture} measure
  $\eta^{\tilde\pi_\lambda}$, which neither the (one-sided)
  concentrability assumption nor the (lower-bound) dominance lemma
  can relate to $\eta^{\pi^*}$.  We therefore bound it uniformly:
  \[
    \Bigl| \sum_s \eta^{\tilde\pi_\lambda}(s)\, p_{\min}\, \Sigma(s) \Bigr|
    \;\leq\; p_{\min}\, |\cA|\, A_{\max}
       \sum_s \eta^{\tilde\pi_\lambda}(s)
    \;=\; p_{\min}\, |\cA|\, A_{\max}\, Z_H,
  \]
  using only that the total mass of $\eta^{\tilde\pi_\lambda}$ is
  $Z_H$.  No distribution-shift step is needed or used.

  \emph{Step 6 (Collecting).}
  Combining Steps 2--5:
  \[
    J(\tilde\pi_\lambda) - J(\pi_n)
    \;\geq\;
    \underbrace{\frac{\lambda (1 - |\cA| p_{\min}) (1-\lambda)^{H-1}}{C_\infty}}_{c}
    \, \Delta_n
    \;-\;
    \underbrace{\lambda\, p_{\min}\, |\cA|\, A_{\max}\, Z_H}_{\beta}.
  \]
  Since $\tilde\pi_\lambda \in \mathcal{F}_n$ and $-\Phi_n^*$
  maximizes $J(\pi) - J(\pi_n)$ over $\mathcal{F}_n$, the bound
  transfers to $-\Phi_n^*$.
\end{proof}

\begin{remark}[On the uniform bound in Step 5]
  \label{rem:step5_corrected}
  The floor forces the comparator to place mass $p_{\min}$ on every
  action, including those of negative advantage, producing the
  residual $p_{\min} \Sigma(s)$ weighted by the \emph{mixture} occupancy
  $\eta^{\tilde\pi_\lambda}$.  That residual cannot be re-weighted to
  $\eta^{\pi^*}$: Assumption~\ref{ass:conc} bounds
  $\eta^{\pi^*}/\eta^{\pi_n}$ from above while
  Lemma~\ref{lem:mix_dist} bounds $\eta^{\tilde\pi_\lambda}$ from
  \emph{below}, so neither controls $\eta^{\tilde\pi_\lambda}$ from
  above by $\eta^{\pi^*}$; a state rarely visited by $\pi^*$ but
  heavily visited by the mixture would break such an inequality.  We
  therefore bound it uniformly, using only that
  $\eta^{\tilde\pi_\lambda}$ has total mass $Z_H$.  The price is the
  worst-case product $|\cA|\, A_{\max}\, Z_H$ in $\beta$, which still
  vanishes linearly as $p_{\min} \to 0$.
\end{remark}

\subsection{Per-step improvement and outer-loop contraction}
\label{app:proof_thm_per_step}

\noindent Let $\Phi_n$ be the inner objective of
Section~\ref{sec:inner_convergence} instantiated at iteration $n$
(with data sampled under $\pi_n$) and
$\Phi_n^* := \min_{\mathcal{F}_n} \Phi_n$ its global minimum.  Let
$y_n$ be \emph{any} point returned by the inner loop:
Proposition~\ref{prop:feas_compact} guarantees feasibility for
$\mathcal{F}_n$; as $R$ grows, Theorem~\ref{thm:tier1} drives
the stationarity residual to zero, but the bound below uses only feasibility and the value gap
\[
  \epsilon_n \;:=\; \Phi_n(y_n) - \Phi_n^* \;\geq\; 0,
\]
the \emph{inner-solve gap}.  In particular the outer-loop analysis
needs neither d-stationarity of $y_n$ nor any rate for the inner
iteration, which is what makes the two guarantees modular.  The
constants
$\lambda = \min(\sqrt{2\delta/C_G}, 1/2)$, $C_G = 1/p_{\min} - 1$,
$c = \lambda(1 - |\cA| p_{\min})(1-\lambda)^{H-1}/C_\infty$ and
$\beta = \lambda\, p_{\min}\, |\cA|\, A_{\max}\, Z_H$ are as in
Lemma~\ref{lem:global_lb}.  The two inequalities of
Theorem~\ref{thm:outer} follow from the per-step bound below and
its telescoping (Appendix~\ref{app:proof_thm_outer}).

\begin{theorem}[Per-step improvement]
  \label{thm:per_step}
  Under Assumptions~\ref{ass:support}--\ref{ass:conc}, with the
  constants above,
  \begin{equation}
    J(\pi_{n+1}) - J(\pi_n)
    \;\geq\; c\, \Delta_n \;-\; \beta \;-\; \epsilon_n.
    \label{eq:per_step}
  \end{equation}
\end{theorem}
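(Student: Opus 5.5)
The plan is to chain three facts: exactness of the $k=H$ objective, the definition of the inner-solve gap $\epsilon_n$, and the comparator bound of Lemma~\ref{lem:global_lb}. No further analysis of the SCP iterates is needed.

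\emph{Step 1 (exactness).} At $k=H$ the PDIS identity~\eqref{eq:pdis} gives $J_H(y;\pi_n) = J(\pi_y)-J(\pi_n)$ for every $y\in\mathcal{M}$, where $\pi_y(a|s)=\pi_n(a|s)e^{y(s,a)}$. Hence on $\mathcal{F}_n\subseteq\mathcal{M}$ we have
\[
\Phi_n(y) = J_H^-(y)-J_H^+(y) = -\bigl[J(\pi_y)-J(\pi_n)\bigr].
\]
We take $\Phi_n$ to be the population objective, with exact expectations and exact advantages, as the definition of $\Phi_n^*$ through Lemma~\ref{lem:global_lb} already presumes.

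\emph{Step 2 (the returned point is valid and feasible).} The returned $y_n$ lies in $\mathcal{F}_n$ by Proposition~\ref{prop:feas_compact}. In particular $f_s(y_n)=1$, so $\pi_{n+1}:=\pi_{y_n}$ is the exact recovery~\eqref{eq:recovery_exact} and no renormalization is needed. One point must match: Lemma~\ref{lem:global_lb} and $\epsilon_n$ must refer to the same set $\mathcal{F}_n$. I take $\mathcal{F}_n$ to be the set over which $\Phi_n^*$ is defined, namely validity, the per-state $\chi^2$ ball, and the floor. The comparator $\tilde\pi_\lambda$ satisfies the nominal floor $p_{\min}$, so it also satisfies any relaxed floor $b_\infty\le y_{\min}$. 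Any floor variant used therefore only enlarges the set and keeps $\tilde\pi_\lambda$ feasible.

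\emph{Step 3 (chain).} Combining the two steps,
\[
J(\pi_{n+1})-J(\pi_n) = -\Phi_n(y_n) = -\Phi_n^* - \epsilon_n \;\ge\; c\,\Delta_n-\beta-\epsilon_n .
\]
The middle equality is the definition $\epsilon_n=\Phi_n(y_n)-\Phi_n^*$. The inequality is Lemma~\ref{lem:global_lb}, which needs Assumption~\ref{ass:support} so that the floor holds at $\pi_n$, and Assumption~\ref{ass:conc}. The constants $c$ and $\beta$ are exactly those of the lemma.

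\emph{Main obstacle.} The argument is short, so the real care lies in the bookkeeping of Step 2. The feasible set in the definition of $\Phi_n^*$ must contain the mixture comparator. The returned point must be on $\mathcal{M}$, so that Step 1's identity applies to it and not merely to a relaxed off-manifold iterate. If the two floors differ, I would argue by set inclusion as above: the comparator obeys the stricter floor and $y_n$ obeys the looser one.
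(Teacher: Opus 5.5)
Your proposal is correct and follows the paper's proof essentially verbatim: exactness of $J_H$ on the validity manifold gives $J(\pi_{n+1})-J(\pi_n)=-\Phi_n(y_n)=-\Phi_n^*-\epsilon_n$, and Lemma~\ref{lem:global_lb} supplies $-\Phi_n^*\ge c\,\Delta_n-\beta$. Your floor bookkeeping (the comparator obeys the nominal floor $p_{\min}$, hence any relaxed floor $b_\infty\le y_{\min}$) is a harmless refinement of a point the paper leaves implicit.
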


\begin{proof}
  The policy $\pi_{n+1}$ is recovered from the returned point $y_n$,
  feasible for $\mathcal{F}_n$, with value
  $\Phi_n(y_n) = \Phi_n^* + \epsilon_n$
  by definition of the gap $\epsilon_n$.
  Because at $k = H$ the inner objective equals the true
  performance difference (no surrogate bias),
  $J(\pi_{n+1}) - J(\pi_n) = -\Phi_n(y_n)
   = -\Phi_n^* - \epsilon_n$.
  Applying Lemma~\ref{lem:global_lb}:
  $-\Phi_n^* \geq c\, \Delta_n - \beta$.  Substituting:
  $J(\pi_{n+1}) - J(\pi_n) \geq c\, \Delta_n - \beta - \epsilon_n$.
\end{proof}

\subsection{Proof of Theorem~\ref{thm:outer} (Outer-loop contraction)}
\label{app:proof_thm_outer}

\begin{proof}
  By Theorem~\ref{thm:per_step},
  \[
    \Delta_{n+1}
    = \Delta_n - [J(\pi_{n+1}) - J(\pi_n)]
    \leq (1 - c)\, \Delta_n + \beta + \epsilon_n
    \leq (1 - c)\, \Delta_n + \beta + \bar\epsilon.
  \]
  Iterating the affine recursion:
  \[
    \Delta_N
    \leq (1 - c)^N\, \Delta_0
         + (\beta + \bar\epsilon) \sum_{j=0}^{N-1} (1 - c)^j
    \leq (1 - c)^N\, \Delta_0 + \frac{\beta + \bar\epsilon}{c}.
    \qedhere
  \]
\end{proof}

\begin{remark}[Status of Theorem~\ref{thm:outer}]
  \label{rem:outer_status}
  This is a \emph{conditional contraction}, not a stand-alone
  convergence guarantee.  Theorem~\ref{thm:tier1} drives the
  stationarity residual of the inner iteration to zero, but provides
  no algorithmic upper bound on the gap
  $\epsilon_n$ between the returned point's value and the global
  minimum of the
  (non-convex) per-iteration program.  The bound is informative
  when $\bar\epsilon$ is small---which can be probed empirically,
  e.g.\ by multi-start SCP---and uninformative otherwise.  The
  residual neighborhood $(\beta + \bar\epsilon)/c$ likewise
  contains the floor bias $\beta$, which is a property of the
  algorithm's feasible set and vanishes as $p_{\min} \to 0$
  (though the inner-loop constants degrade in that limit;
  cf.\ Remark~\ref{rem:constants_pmin}).
\end{remark}

\begin{remark}[Horizon dependence]
  \label{rem:H_dependence}
  The contraction constant $c$ contains the factor
  $(1-\lambda)^{H-1}$, which decays exponentially in the horizon
  for fixed $\lambda$.  This is inherited from the mixture
  state-visitation dominance argument
  (Lemma~\ref{lem:mix_dist}) and is a known cost of CPI-style
  mixture analyses; improving it (e.g.\ per-state improvement
  accounting) is an open direction.
\end{remark}

\begin{remark}[Dependence of constants on $p_{\min}$]
  \label{rem:constants_pmin}
  The smoothness constants $L_f, L_\Phi$ (hence $\rho_0$ and the
  Tier-2 rate constant $\kappa$) are finite on the compact box $K$
  of~\eqref{eq:box_K} but degrade as $p_{\min} \to 0$, since
  $K$ grows and the exponentials steepen; the drift budget
  $B(\rho) = L_f(\Phi(y^0) - \inf_{K_{\rm ext}} \Phi)/(\rho - \rho_0)$ grows with
  them, so holding $B(\rho) \leq 1$ requires a larger $\rho$.  The
  ``$\beta \to 0$ as $p_{\min} \to 0$'' statement in
  Remark~\ref{rem:outer_status} therefore trades outer-loop bias
  against inner-loop conditioning.
\end{remark}

\section{Additional Experimental Results}
\label{app:experiments}

This appendix reports the supporting analyses and full experimental
setups referenced in Section~\ref{sec:experiments}.

\subsection{Risky Corridor MDP: advantage under the uniform behavior policy}
\label{app:risky-corridor-advantage}

We derive $A^{\pi_b}(s_0, L)$ used in
Section~\ref{sec:risky-corridor}.  The Risky Corridor MDP has
horizon $H = 2$, $\gamma = 1$, and deterministic transitions
(Figure~\ref{fig:comparison}(a)).  Under a uniform behavior policy
$\pi_b$:
\begin{align*}
  V^{\pi_b}(s_R) &= 1 \quad \text{(single action, terminal)}, \\
  V^{\pi_b}(s_L) &= \tfrac{1}{2}(100) + \tfrac{1}{2}(-80) = 10, \\
  V^{\pi_b}(s_0) &= \tfrac{1}{2}\bigl(-20 + V^{\pi_b}(s_L)\bigr)
                 + \tfrac{1}{2}\bigl(3 + V^{\pi_b}(s_R)\bigr) \\
                 &= \tfrac{1}{2}(-10) + \tfrac{1}{2}(4) = -3.
\end{align*}
The Q-values at $s_0$ are
$Q^{\pi_b}(s_0, L) = -20 + V^{\pi_b}(s_L) = -10$ and
$Q^{\pi_b}(s_0, R) = 3 + V^{\pi_b}(s_R) = 4$, giving
\[
  A^{\pi_b}(s_0, L) = -10 - (-3) = -7,
  \qquad
  A^{\pi_b}(s_0, R) = 4 - (-3) = 7.
\]
The single-step advantage of the optimal action $L$ is therefore
negative under $\pi_b$, which is what makes the MDP a stress test
for the CPI surrogate at $k = 1$.

\subsection{Risky Corridor MDP: detailed results}
\label{app:risky-corridor-table}

Table~\ref{tab:risky_corridor} reports the quantitative results
corresponding to the convergence experiment in
Section~\ref{sec:risky-corridor}: area under the normalized return
curve (AUC), final normalized return, and steps to reach $90\%$ of
optimal performance ($t_{90}$).  All four methods use a common budget of $32$ episodes per update
over $10$ updates, and each is tuned by the same two-stage protocol.
We first evaluate every configuration in the method's grid ($900$,
$432$, $432$, and $288$ configurations for PPO, CCRL, AWR, and TRPO)
on $10$ tuning seeds, rank by AUC with final return and $t_{90}$ as
tie-breakers, and retain the top five.  These five are re-evaluated on
$100$ validation seeds, disjoint from the tuning seeds, and the best
of the five is reported.  Means and standard errors below are over
those $100$ validation seeds.

\begin{table}[H]
    \centering
    \begin{tabular}{l r@{\,$\pm$\,}l r@{\,$\pm$\,}l r@{\,$\pm$\,}l}
        \toprule
        Method & \multicolumn{2}{c}{AUC} & \multicolumn{2}{c}{Final return} & \multicolumn{2}{c}{$t_{90}$} \\
        \midrule
        PPO  & 8.1951 & 0.0509 & 0.9662 & 0.0056 & 3.29 & 0.09 \\
        TRPO & 8.4924 & 0.0261 & \textbf{0.9996} & 0.0002 & 3.13 & 0.04 \\
        AWR  & 8.9028 & 0.0817 & 0.9908 & 0.0091 & 2.09 & 0.09 \\
        CCRL & \textbf{9.2057} & 0.0882 & 0.9908 & 0.0091 & \textbf{1.38} & 0.11 \\
        \bottomrule
    \end{tabular}
    \caption{Risky Corridor MDP results across PPO, TRPO, AWR, and
    CCRL: AUC of the normalized return curve, final normalized
    return, and steps-to-$90\%$-of-optimal ($t_{90}$).  Mean
    $\pm$ SE over $100$ validation seeds.  Best mean per metric in bold where
    unique.}
    \label{tab:risky_corridor}
\end{table}

\subsection{Advantage-estimation quality on CartPole}
\label{app:adv-quality}

This appendix details the advantage-estimation quality experiment
referenced in Section~\ref{sec:classic-control}; the results are
summarized in Figure~\ref{fig:adv-quality}.

\begin{figure}[h]
    \centering
    \includegraphics[width=\linewidth]{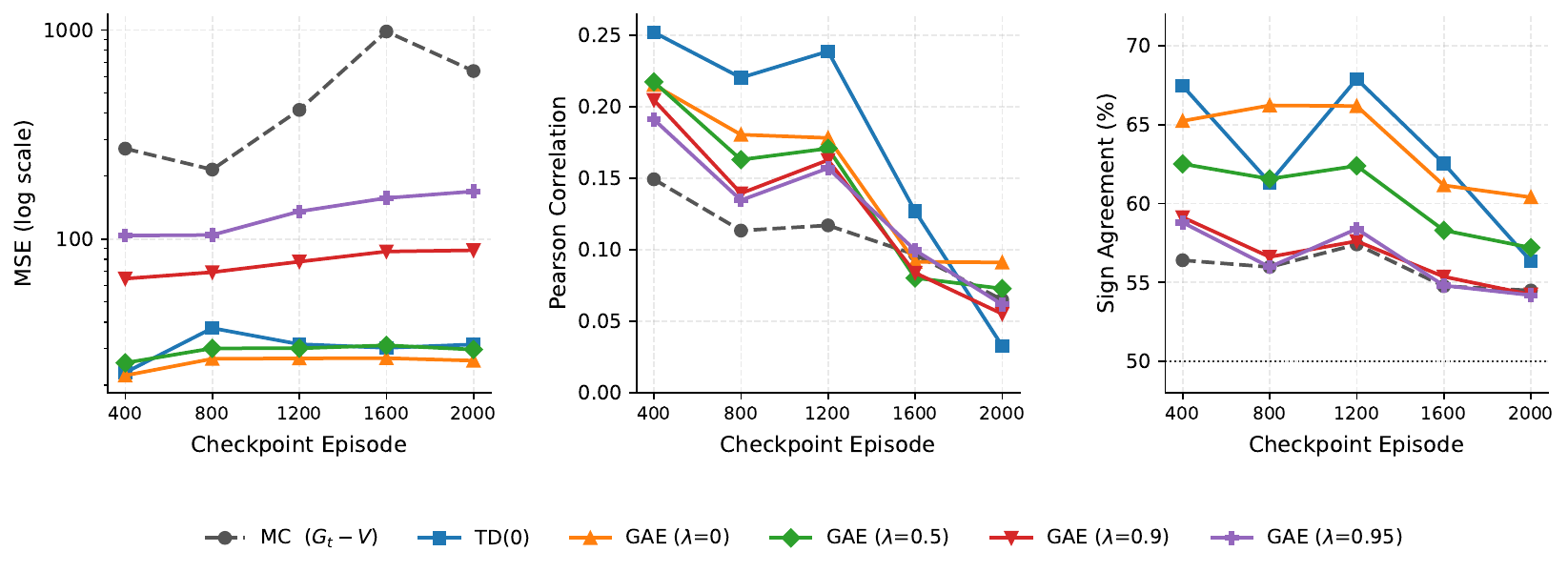}
    \caption{\textbf{Advantage-estimation quality across training
    checkpoints (CartPole).}  Six advantage estimators---a
    Monte-Carlo baseline ($G_t - V$), TD(0), and GAE at
    $\lambda \in \{0, 0.5, 0.9, 0.95\}$---are compared against a
    Monte-Carlo ground-truth advantage at five policy checkpoints
    (x-axis), along MSE on a log scale (left), Pearson correlation
    (center), and sign agreement (right).  Dotted reference lines
    mark zero correlation and $50\%$ (chance) sign agreement.
    No single estimator dominates: the MC baseline and high-$\lambda$
    GAE ($0.9, 0.95$) incur the largest MSE by an order of magnitude,
    while TD(0) and low-$\lambda$ GAE ($\lambda{=}0$) achieve both the
    lowest MSE and the highest correlation and sign agreement; the GAE
    variants interpolate between these extremes as $\lambda$ grows.
    Across all estimators, correlation and sign agreement decay toward
    their no-information values (zero correlation, $50\%$ sign
    agreement) as the policy improves.}
    \label{fig:adv-quality}
\end{figure}

\paragraph{Environment and policy.}
We use CartPole (horizon $H = 200$, undiscounted $\gamma = 1$, $+1$
per surviving step) with a softmax policy parameterized by a
$4{\to}32{\to}32{\to}2$ tanh MLP.  The policy is trained for $2000$
episodes by REINFORCE with return standardization, batch size $16$,
and Adam at learning rate $10^{-3}$, saving a checkpoint every $400$
episodes.  This yields five checkpoints whose average return rises
from $24.5$ to $60.4$, letting us track estimation quality as the
policy improves.

\paragraph{Ground-truth advantage.}
At each checkpoint we freeze the policy $\pi_n$ and collect $50$
evaluation trajectories.  For every visited state-action pair
$(s,a)$ we form a near-unbiased target
$A^{\pi_n}(s,a) = Q^{\pi_n}(s,a) - V^{\pi_n}(s)$ by Monte-Carlo
rollouts under $\pi_n$: $V^{\pi_n}(s)$ averages the return of $50$
rollouts from $s$, and $Q^{\pi_n}(s,a)$ averages $50$ rollouts that
first take $a$ and then follow $\pi_n$.  All estimators are scored
against this target.

\paragraph{Estimators.}
We evaluate six estimators, each built on a learned value critic
(a $4{\to}64{\to}64{\to}1$ tanh MLP trained $300$ epochs with Adam at
$10^{-3}$).  Two distinct critics are fit, and each estimator uses the
one conventional for it:
\begin{itemize}[leftmargin=2em,itemsep=1pt]
  \item the \textbf{TD(0) critic} is trained by minimizing the squared
    TD error
    $\bigl(V(S) - [\,R + V(S')\,]\bigr)^2$ (one-step bootstrapping,
    with a stop-gradient on the target $V(S')$);
  \item the \textbf{MC critic} is trained by regression onto
    Monte-Carlo return samples, i.e.\ minimizing $(V(S_t) - G_t)^2$
    against the empirical return-to-go $G_t$.
\end{itemize}
The estimators are then:
\begin{itemize}[leftmargin=2em,itemsep=1pt]
  \item \textbf{MC baseline:} $\widehat{A}(S_t,A_t) = G_t - V(S_t)$,
    with $G_t$ the empirical return-to-go and $V$ the MC critic;
  \item \textbf{TD(0):}
    $\widehat{A}(S_t,A_t) = R_{t+1} + V(S_{t+1}) - V(S_t)$ using the
    TD(0) critic;
  \item \textbf{GAE($\lambda$)} for
    $\lambda \in \{0, 0.5, 0.9, 0.95\}$: the exponentially-weighted
    sum $\widehat{A}_t = \sum_{l \ge 0} \lambda^l \delta_{t+l}$,
    computed by the backward recursion
    $\widehat{A}_t = \delta_t + \lambda\, \widehat{A}_{t+1}$ with
    one-step TD residual $\delta_t = R_{t+1} + V(S_{t+1}) - V(S_t)$ on the
    MC critic (terms truncated at episode end).
\end{itemize}
Thus the TD(0) advantage draws on the TD-error-trained critic, while
the MC baseline and every GAE variant draw on the MC-return-trained
critic.
Since $\gamma = 1$, the discount is dropped throughout; GAE($0$)
reduces to the one-step residual on the MC critic.

\paragraph{Metrics.}
We report three metrics, each computed over state-action pairs with a
non-negligible true advantage ($|A^{\pi_n}| > 0.5$): Pearson
correlation between estimated and true advantage, sign agreement (the
fraction of pairs whose estimated advantage has the correct sign),
and mean-squared error (MSE).

\paragraph{Findings.}
Figure~\ref{fig:adv-quality} reports all three metrics against
training progress.  Two patterns stand out.  First, no estimator is
uniformly best: low-$\lambda$ GAE and TD(0) minimize MSE
($\approx 22\text{--}37$) but the Monte-Carlo baseline and
high-$\lambda$ GAE incur an order-of-magnitude larger MSE
($\sim\!10^2\text{--}10^3$).  Second, and most relevant to CCRL,
correlation and sign agreement are modest and \emph{deteriorate as
the policy improves}: TD(0) correlation falls from $0.25$ to $0.03$
and sign agreement drifts from $\sim\!68\%$ toward the $50\%$
chance line.  Because the DC decomposition (Theorem~\ref{thm:dc})
assigns terms to $J_k^+$ / $J_k^-$ by advantage sign, this
sign-error rate directly sets how far the optimized manifold departs
from the true one.

\subsection{PDIS-$k$ objective variance on CartPole}
\label{app:pdis-variance}

This appendix details the variance experiment referenced in
Section~\ref{sec:classic-control}; the results are summarized in
Figure~\ref{fig:pdis-variance}.  The experiment isolates the
effect of the truncation depth $k$ on the variance of the
per-trajectory objective estimator, holding the advantages fixed so
that variance differences are attributable to the $k$-step
importance-sampling window alone.

\begin{figure}[h]
    \centering
    \includegraphics[width=\linewidth]{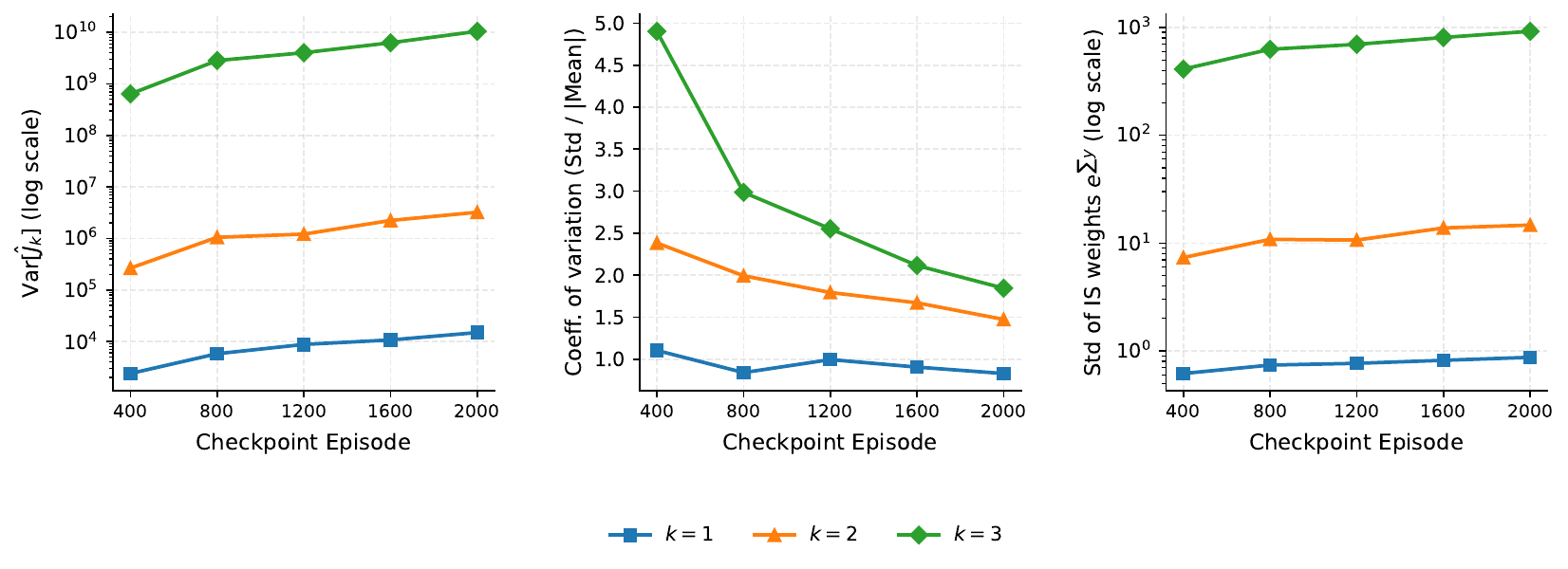}
    \caption{\textbf{PDIS-$k$ objective variance grows with the
    truncation depth $k$ (CartPole).}  At five policy checkpoints
    (x-axis), the per-trajectory objective $\hat{J}_k(\tau)$ is
    evaluated at the SCP-optimized log-ratios for
    $k \in \{1, 2, 3\}$ over $200$ trajectories, with the
    advantages held fixed at their Monte-Carlo ground-truth values.
    Left: variance of $\hat{J}_k$ across trajectories (log scale);
    center: its coefficient of variation (std over $|$mean$|$);
    right: the standard deviation of the cumulative importance-sampling
    weights $e^{\sum y}$ (log scale).  Variance is ordered
    $k{=}1 < k{=}2 < k{=}3$ throughout and separates by two to three
    orders of magnitude per increment of $k$, driven by the widening
    spread of the IS weights.}
    \label{fig:pdis-variance}
\end{figure}

\paragraph{Environment, policy, and checkpoints.}
We use the same CartPole setup and REINFORCE-trained policy as in
Appendix~\ref{app:adv-quality} ($4{\to}32{\to}32{\to}2$ tanh softmax
policy, $2000$ episodes, checkpoint every $400$), giving five
checkpoints with average return rising from $24.5$ to $62.3$.  Here we
discount with $\gamma = 0.99$.

\paragraph{Protocol.}
At each checkpoint we freeze the policy and collect $N = 200$
trajectories.  For every visited pair we compute the near-exact
Monte-Carlo advantage $A^{\pi_n}(s,a) = Q^{\pi_n}(s,a) - V^{\pi_n}(s)$
($50$ rollouts each for $Q^{\pi_n}$ and $V^{\pi_n}$), exactly as in
Appendix~\ref{app:adv-quality}.  We then solve the CCRL inner
problem~\eqref{eq:ccrl_full} by SCP ($4$ rounds of $50$ inner
primal-dual iterations, step size $0.1$, per-state $\chi^2$ trust
region with $\alpha = 2$, $\delta = 0.2$, log-ratios clipped to
$|y| \le 3$) separately for each $k \in \{1, 2, 3\}$, obtaining
optimized log-ratios $y(s,a)$.  Crucially, the advantages are shared
across all three solves; only the $k$-window in the objective changes.

\paragraph{Variance measure.}
For each $k$ and each trajectory $\tau_i$ we evaluate the
per-trajectory objective
\begin{equation}
  \hat{J}_k(\tau_i)
  = \sum_{t=0}^{L_i - 1} \gamma^t\,
      \exp\!\Bigl(\textstyle\sum_{t'=\max(0,\,t-k+1)}^{t} y(s_{t'}, a_{t'})\Bigr)\,
      A^{\pi_n}(s_t, a_t),
  \label{eq:per_traj_Jk}
\end{equation}
and report its variance and coefficient of variation
(std over $|$mean$|$) across the $N$ trajectories, together with the
standard deviation of the cumulative importance-sampling weights
$e^{\sum y}$ that appear inside~\eqref{eq:per_traj_Jk}.

\paragraph{Findings.}
Figure~\ref{fig:pdis-variance} reports all three quantities against
training progress.  The estimator variance is ordered
$k{=}1 < k{=}2 < k{=}3$ at every checkpoint, separating by roughly two
to three orders of magnitude per unit increase in $k$
(e.g.\ at the final checkpoint, $\mathrm{Var}[\hat{J}_1] \approx 10^4$,
$\mathrm{Var}[\hat{J}_2] \approx 10^6$, and
$\mathrm{Var}[\hat{J}_3] \approx 10^{10}$).  The coefficient of
variation is likewise ordered by $k$, and the right panel shows the
mechanism directly: the spread of the IS weights $e^{\sum y}$ grows
sharply with $k$ (the maximum weight rises from $\sim\!20$ at $k=1$ to
$\sim\!8000$ at $k=3$), since the exponential of a $k$-term sum of
log-ratios amplifies trajectory-to-trajectory variation.  This is the
bias--variance tension of the truncation depth: larger $k$ couples
credit across more decisions but at rapidly growing estimator
variance.

\subsection{Risky Chain: hyperparameter grids and tuning protocol}
\label{app:chain-hparams}

This appendix gives the full tuning setup behind the Risky Chain results of
Section~\ref{sec:risky-chain}, so that the fairness of the PPO comparison can
be checked directly.  The MDP is illustrated in Figure~\ref{fig:risky-chain}.

\begin{figure}[h]
    \centering
    \resizebox{0.7\textwidth}{!}{%
    \begin{tikzpicture}[
        >={Stealth[length=2.5mm]},
        state/.style={circle, draw, minimum size=10mm, inner sep=0pt, font=\small},
        terminal/.style={circle, draw, double, minimum size=10mm, inner sep=0pt, font=\small, fill=yellow!15},
        absorb/.style={rectangle, draw, rounded corners=1pt, minimum width=8mm, minimum height=5mm, font=\scriptsize, fill=gray!10},
        reward/.style={rectangle, draw, rounded corners=1pt, minimum width=8mm, minimum height=5mm, font=\scriptsize, fill=green!15},
        penalty/.style={rectangle, draw, rounded corners=1pt, minimum width=8mm, minimum height=5mm, font=\scriptsize, fill=red!15},
        edgelbl/.style={font=\scriptsize, inner sep=1pt},
        node distance=14mm and 18mm,
    ]
        \node[state] (s0) {$s_0$};
        \node[state, right=of s0] (s1) {$s_1$};
        \node[right=of s1] (dots) {$\cdots$};
        \node[state, right=of dots] (sn2) {$s_{N-2}$};
        \node[terminal, right=of sn2] (sn1) {$s_{N-1}$};
        \draw[->] (s0) -- node[edgelbl, above]{continue, $-5$} (s1);
        \draw[->] (s1) -- node[edgelbl, above]{continue, $-5$} (dots);
        \draw[->] (dots) -- node[edgelbl, above]{continue, $-5$} (sn2);
        \draw[->] (sn2) -- node[edgelbl, above]{continue, $-5$} (sn1);
        \node[absorb, below=8mm of s0]   (e0) {exit, $+1$};
        \node[absorb, below=8mm of s1]   (e1) {exit, $+1$};
        \node[absorb, below=8mm of sn2]  (en2) {exit, $+1$};
        \draw[->] (s0) -- (e0);
        \draw[->] (s1) -- (e1);
        \draw[->] (sn2) -- (en2);
        \node[reward,  above right=4mm and 6mm of sn1]  (aplus)  {$a^+, +100$};
        \node[penalty, below right=4mm and 6mm of sn1]  (aminus) {$a^-, -80$};
        \draw[->] (sn1) -- (aplus);
        \draw[->] (sn1) -- (aminus);
    \end{tikzpicture}%
    }%
    \caption{Risky chain MDP with $N$ stages ($\gamma = 1$).  Each
    intermediate state $s_i$ ($i < N{-}1$) offers a safe-exit action
    (reward $+1$, terminal) or a continue action (reward $-5$,
    advance).  The terminal state $s_{N-1}$ has two actions: $a^+$
    (reward $+100$) and $a^-$ (reward $-80$).  The optimal trajectory
    is $s_0 \to s_1 \to \cdots \to s_{N-1} \to a^+$, with return
    $100 - 5(N-1)$.}
    \label{fig:risky-chain}
\end{figure}

\paragraph{Networks.}
The policy and value are each a one-hidden-layer MLP ($\tanh$ activation;
$16$ or $32$ hidden units, selected by tuning) mapping a one-hot state
encoding to action logits and to a scalar value, respectively.  PPO and CCRL
use the identical architecture, so any performance difference is attributable
to the policy-improvement operator alone.

\paragraph{Two-stage protocol.}
Each (algorithm, $N$, $k$) cell is tuned independently; nothing is shared or
transferred across cells.  The per-update data budget is fixed at $128$
trajectories and the number of policy-improvement iterations at $200$ for every
arm.  \emph{Stage~1 (selection).} We evaluate the entire grid on a fixed set of
$10$ tuning seeds $\{7,11,19,23,31,37,41,43,47,53\}$ and rank configurations by
nAUC (with final return and then $-t_{90}$ as tie-breakers).  We rank by nAUC
rather than final return because the per-seed final return is near-bimodal on
this task---a seed either solves the chain or collapses to early exit---so nAUC
is the lower-variance selection signal.  We keep the top $12$ configurations.
\emph{Stage~2 (validation).} The top $12$ are re-evaluated on $300$ fresh
validation seeds (seed base $300{,}000$, disjoint from the tuning seeds), and
the configuration with the best mean final return is reported.  All numbers in
Table~\ref{tab:chain-mlp} are Stage-2 validation numbers.

\paragraph{Search grids.}
Table~\ref{tab:chain-grids} lists the grids.  The entropy-bonus coefficient
uses the \emph{same} six values for both algorithms, and both tune the hidden
width, so PPO is never disadvantaged on exploration or capacity.  The PPO grid
spans $1296$ configurations per cell and the CCRL grid $3456$; deeper chains
require the larger entropy values to reliably reach the optimum under either
algorithm.

\begin{table}[H]
\centering
\footnotesize
\setlength{\tabcolsep}{6pt}
\renewcommand{\arraystretch}{1.15}
\caption{Hyperparameter grids for the Risky Chain study (batch fixed at $128$,
$200$ iterations, $\gamma=1$).  The entropy grid is shared across methods.}
\label{tab:chain-grids}
\begin{tabular}{@{}l l@{}}
\toprule
\multicolumn{2}{@{}l}{\textbf{PPO} ($1296$ configs/cell)} \\
\midrule
hidden width & $\{16, 32\}$ \\
policy learning rate & $\{0.01, 0.05, 0.1\}$ \\
critic learning rate & $\{0.05, 0.1\}$ \\
epochs per update & $\{10, 20, 40\}$ \\
clip $\varepsilon$ & $\{0.1, 0.2, 0.3\}$ \\
GAE-$\lambda$ & $\{0.95, 1.0\}$ \\
entropy-bonus coef. & $\{0, 7, 15, 30, 50, 80\}$ \\
\midrule
\multicolumn{2}{@{}l}{\textbf{CCRL} ($3456$ configs/cell)} \\
\midrule
hidden width & $\{16, 32\}$ \\
policy (M-step) learning rate & $\{0.05, 0.1\}$ \\
M-step epochs & $\{40, 80\}$ \\
divergence order $\alpha$ & $\{1.5, 2.0, 3.0\}$ \\
trust radius $\delta$ & $\{0.05, 0.1, 0.2\}$ \\
SCP rounds & $\{5, 8\}$ \\
SCP inner iterations & $\{120, 240\}$ \\
SCP inner learning rate & $\{0.01, 0.03\}$ \\
GAE-$\lambda$ & $\{0.95, 1.0\}$ \\
entropy-bonus coef. & $\{0, 7, 15, 30, 50, 80\}$ \\
\bottomrule
\end{tabular}
\end{table}

\subsection{Therapeutic-Threshold Sepsis MDP: environment and experimental setup}
\label{app:sepsis}

This appendix specifies the Therapeutic-Threshold Sepsis MDP of
Section~\ref{sec:sepsis} and the protocol used for the PPO/CCRL comparison
reported there.

\paragraph{Environment.}
A state is a pair $(h,d)$: illness severity $h \in \{0,\dots,M-1\}$ and
accumulated dose level $d \in \{0,\dots,D-1\}$, giving $|\cS| = M\!\cdot\!D$
states.  Severity $h{=}0$ (recovered) and $h{=}M-1$ (dead) are absorbing.
Every episode starts deterministically at mid-severity with zero dose,
$(h,d) = (\lfloor (M-1)/2 \rfloor,\, 0)$, and runs for a fixed horizon
$H = 20$.  Three actions are available---\textsf{rest}, \textsf{dose}, and
\textsf{administer}---with the following transitions from a non-terminal
$(h,d)$ (severity moves by at most one level per step):
\begin{itemize}
  \item \textsf{rest}: $d$ is unchanged; $h \to h+1$ with probability
        $p_{\mathrm{rest}} = 0.40$, otherwise unchanged.
  \item \textsf{dose}: $d \to \min(d+1,\, D-1)$; $h \to h+1$ with
        probability $p_{\mathrm{dose}} = 0.20$, otherwise unchanged.
  \item \textsf{administer}: the dose resets, $d \to 0$, and
        \emph{at threshold} ($d = D-1$) the patient recovers, $h \to h-1$,
        with probability $p_{\mathrm{rec}} = 0.90$ (else unchanged), while
        \emph{below threshold} ($d < D-1$) the patient is harmed,
        $h \to h+1$, with probability $p_{\mathrm{harm}} = 0.70$
        (else unchanged).
\end{itemize}
A reward of $+1$ is delivered on any transition into $h{=}0$ (recovery) and
$0$ on every other transition, including into $h{=}M-1$ (death); the reward
is thus sparse and terminal, and the undiscounted expected return equals the
survival probability.  We use $M = 5$ severity levels and $D = 3$ doses to
threshold, for which the finite-horizon optimal policy---computed exactly by
dynamic programming---survives $0.840$ of episodes.  Because \textsf{dose}
never yields reward and can only hold or worsen severity, its one-step
advantage is negative; survival requires committing to $D$ successive doses
before administering, so the credit-assignment depth is exactly $D$.

\paragraph{Algorithms and training.}
Both PPO and CCRL use a parametric MLP policy and critic with a one-hot
state encoding, fit a GAE value baseline, and train for $500$ updates at a
matched per-update budget.  CCRL solves the per-iteration DC
program~\eqref{eq:ccrl_full} by SCP with a per-state $\alpha$-divergence
trust region and the $y$-clip of Section~\ref{sec:practical}; PPO uses the
standard clipped surrogate.  The $k$-step objective and the clipped
surrogate each carry a variance-control knob---CCRL's clip $y_{\max}$ and
PPO's clip $\varepsilon$---so we sweep that knob for each method and, at
every value, sub-tune the remaining hyperparameters (learning rates,
trust-region and inner-loop settings, entropy) so that each reported point
is the best that setting attains.  Configurations are evaluated over $100$
seeds, and we report the final survival rate, the area under the survival
curve (AUC), and $t_{90}$ (updates to first reach $90\%$ of the
dynamic-programming optimum, i.e.\ $0.756$).

\paragraph{Hyperparameter tuning.}
Each cell is tuned independently by the same two-stage protocol used for
the Risky Chain (Appendix~\ref{app:chain-hparams}), at a fixed budget of
$128$ episodes per update; nothing is shared or warm-started across cells.
\emph{Stage~1 (selection).}  We evaluate the method's entire grid on $10$
tuning seeds $\{7,11,19,23,31,37,41,43,47,53\}$ and rank configurations by
AUC.  The PPO grid sweeps hidden width $\{16,32\}$, policy learning rate
$\{0.01,0.05,0.1\}$, epochs per update $\{10,20,40\}$, clip
$\varepsilon \in \{0.1,0.2,0.3\}$, discount $\{0.99,1.0\}$, critic learning
rate $\{0.05,0.1\}$, and entropy coefficient
$\{0,0.02,0.05,0.1,0.2,0.5\}$, giving $1296$ configurations per cell.  The
CCRL grid sweeps hidden width $\{16,32\}$, policy learning rate
$\{0.05,0.1\}$, policy-fit projection steps $\{40,80\}$, divergence order
$\alpha \in \{1.5,2.0,3.0\}$, trust-region budget
$\delta \in \{0.05,0.1,0.2\}$, inner step size $\{0.01,0.03\}$, SCP rounds
$\{5,8\}$, inner primal-dual iterations $\{120,240\}$, and the same entropy
grid, giving $3456$ configurations per cell.  \emph{Stage~2
(validation).}  The top $12$ configurations are re-evaluated on $100$
validation seeds disjoint from the tuning seeds, and the best of those is
reported.  The entropy coefficient is swept on the same grid for both
methods, so PPO is never disadvantaged on exploration or capacity.

\subsection{Classic control: environment and experimental setup}
\label{app:classic-control}

This appendix details the classic-control comparison of
Section~\ref{sec:classic-control}; the main-text figure is
Figure~\ref{fig:control}.  We use four standard discrete-action
Gymnasium environments---CartPole-v1, Acrobot-v1, MountainCar-v0, and
LunarLander-v3---and compare a tuned PPO baseline against CCRL at
truncation depths $k \in \{1, 2, 3\}$.  Each of the sixteen
(environment, algorithm) cells is independently hyperparameter-tuned;
the selected configurations are then run over $100$ seeds
(seeds $0$--$99$) for $200$ policy iterations.  For CCRL, advantages
are estimated by GAE and the per-iteration DC program is solved by the
tangent-plane SCP of Section~\ref{sec:algorithm}; PPO uses the
standard clipped surrogate.  We report the per-seed learning curve
(return vs.\ iteration), the final return (average over the last $10$
iterations), and the area under the learning curve (AUC, mean return
per iteration), aggregated as mean $\pm$ standard error over seeds.

\section{Extended related work}
\label{app:related}

\textbf{Iterative on-policy policy optimization.}  A large family of
methods improves a policy by repeatedly maximizing a per-iteration
surrogate of $J(\pi)$ under a proximity constraint to the current
iterate.  Conservative Policy Iteration (CPI)
\citep{kakade2002cpi} established the surrogate-based template;
natural policy gradient (NPG) \citep{kakade2002npg}, TRPO
\citep{schulman2015trpo}, and PPO \citep{schulman2017ppo}
instantiate it with different trust-region mechanisms (exact KL,
clipping), and advantage-weighted regression---AWR
\citep{peng2020awr} and AWAC \citep{nair2020awac}---recasts the
update as regression onto an exponentially-tilted behavior policy,
$\pi^*(a|s) \propto \pib(a|s)\exp(A^{\pib}(s,a)/\eta)$.  All of these
optimize the single-step ($k = 1$) CPI surrogate; CCRL recovers that
surrogate and instantiates CPI, NPG, TRPO, and AWR as
$(k, \alpha, \text{validity})$ special cases
(Section~\ref{sec:special_cases})---PPO's clipping heuristic and
AWAC's off-policy reuse lie outside this $(\alpha, \text{validity})$
family---and extends the template along the multi-step axis
$k \geq 2$.

\textbf{Multi-step credit assignment.}  The difficulty of
propagating credit across delayed rewards is classically addressed
by injecting multi-step information into the \emph{value target}:
TD($\lambda$) and eligibility traces \citep{sutton1988td}
interpolate between one-step bootstrapping and Monte-Carlo returns,
generalized advantage estimation (GAE) \citep{schulman2016gae}
applies the same exponential weighting to advantage estimates for
variance reduction, and off-policy return operators such as
Retrace($\lambda$) \citep{munos2016retrace} correct for the
behavior--target mismatch with truncated importance weights.  These
methods shape the advantage estimates fed into a single-step update;
CCRL is complementary---it places the multi-step horizon $k$ inside
the \emph{optimization objective itself}~\eqref{eq:pdis_k}, so the
policy update, not just the critic target, couples consecutive
decisions.

\textbf{Importance sampling as an optimization objective.}  PDIS and
its variants \citep{precup2000eligibility, jiang2016doubly} are
primarily studied for off-policy \emph{evaluation}.  A closer line
uses importance sampling as an \emph{optimization} objective: POIS
\citep{metelli2018policy} reuses collected trajectories by
optimizing an IS-based surrogate offline, with a high-confidence
penalty controlling the estimator's variance.  We likewise treat
PDIS as the optimization target rather than an evaluation
estimator, but expose the surrogate's difference-of-convex
structure in log-density-ratio coordinates and use the truncation
depth $k$ to trade off multi-step credit propagation against
variance.

\textbf{Convergence of policy-gradient methods.}
Tabular policy-gradient convergence has been established at
several rates.  \citet{mei2020softmax} give $O(1/t)$ global
convergence for softmax PG via a non-uniform
{\L}ojasiewicz inequality, though \citet{li2023softmax} show the
effective iteration count can be exponential in
$|\cS|/(1-\gamma)$.  \citet{agarwal2021theory} give
$O(1/\varepsilon)$ suboptimality for natural policy gradient (NPG)
under direct parameterization.  \citet{cen2022fast} obtain linear
convergence for entropy-regularized NPG, and
\citet{xiao2022convergence} sharpens this to linear convergence
for policy mirror descent (PMD) with geometrically increasing step
sizes (subsuming NPG) \emph{without} entropy regularization.  This
line has since been generalized: \citet{lan2023pmd} establishes
linear convergence and sampling-complexity bounds for PMD across a
broad class of regularized MDPs, and \citet{alfano2023pmd} extends
linear-convergence guarantees to PMD with general (including neural)
parameterizations.  All of these global-convergence results depend
on the $k = 1$ CPI-surrogate objective being linear in $\pi$; when
the objective is non-convex, the general policy-gradient treatment
in the RL-theory monograph of \citet{agarwal2022rlbook} yields only
gradient-stationarity guarantees rather than global rates.  Our
convergence analysis (Section~\ref{sec:convergence}) applies to
$k \geq 1$ via the DC structure: an unconditional $O(1/R)$
inner-loop stationarity rate for any
$k$, and a conditional geometric contraction of the outer policy
sequence at the exact-improvement setting $k = H$ (with
$\chi^2$ trust regions) under concentrability---comparable in
form to the tabular linear rates above but proven for the exact
multi-step objective, where linearity in $\pi$ no longer holds.

\textbf{Difference-of-convex and geometric programming.}  The
change of variable that convexifies our objective---taking
logarithms to turn multiplicative importance-ratio products into
additive exponentials---is the defining move of \emph{geometric
programming}, in which posynomial programs become convex in
log-transformed variables \citep{boyd2007gp}.  Applied to the
$k$-step PDIS objective, this exposes a difference-of-convex (DC)
structure.  DC programming has a long history
\citep{tao1997convex, lipp2016variations, lethi2018dc} but has seen
limited use in reinforcement learning; our contribution is to
identify the DC structure in the PDIS objective and to handle the
policy-simplex validity constraint---a DC equality---within our SCP
procedure (Section~\ref{sec:algorithm}).  We keep our guarantees at
the assumption-free level standard for DCA, leaving full-sequence
rates as an extension.

\end{document}